\documentclass{article}

\usepackage{microtype}
\usepackage{graphicx}
\usepackage{subcaption}
\usepackage{booktabs} 

\usepackage{amsmath}
\usepackage{amssymb}
\usepackage{mathtools}
\usepackage{amsthm}
\usepackage{thmtools}

\usepackage[hyphens]{url} 

\usepackage{times}

\usepackage[utf8]{inputenc} 
\usepackage[T1]{fontenc}    
\usepackage{hyperref}       
\usepackage{url}            
\usepackage{booktabs}       
\usepackage{amsfonts}       
\usepackage{nicefrac}       
\usepackage{microtype}      
\usepackage{xcolor} 

\usepackage{graphicx} 
\usepackage{amsmath}
\usepackage{color}
\newcommand{\grad}{\nabla}
\newcommand{\gradmod}{\widetilde{\nabla}}
\renewcommand{\vec}[1]{\mathbf{#1}}
\usepackage{url}
\usepackage{multirow}

\usepackage{array}
\usepackage{longtable}
\usepackage{soul}
\usepackage{graphicx} 
\usepackage{subcaption}
\usepackage{float}
\usepackage{xcolor}
\usepackage{hyperref}
\hypersetup{
   colorlinks = false,
   pdfborder={0 0 0}}

\allowdisplaybreaks

\newcommand{\enc}{\mathsf{Enc}}

\usepackage{tcolorbox}
\renewcommand\vec{\mathbf}

\newif\ifsubmission
\submissionfalse

\newif\ifsubmissionshort
\submissionshorttrue
\usepackage{adjustbox}

\ifsubmissionshort{
\theoremstyle{plain}
\newtheorem{theorem}{Theorem}[section]

\newtheorem{lemma}[theorem]{Lemma}
\newtheorem{corollary}[theorem]{Corollary}
\newtheorem{claim}[theorem]{Claim}
\theoremstyle{definition}
\newtheorem{definition}[theorem]{Definition}
\newtheorem{assumption}[theorem]{Assumption}
\theoremstyle{remark}
\newtheorem{remark}[theorem]{Remark}}
\else{
\newtheorem{definition}{Definition}[section]
\newtheorem{theorem}{Theorem}

\newtheorem{claim}[theorem]{Claim}

\newtheorem{assumption}{Assumption}
\newtheorem{remark}{Remark}
}
\fi

\usepackage[round]{natbib}

\usepackage{environ}
\newcommand{\acksection}{\section*{Acknowledgments and Disclosure of Funding}}
\NewEnviron{ack}{%
  \acksection
  \BODY
}

\usepackage[textsize=tiny]{todonotes}
\usepackage[preprint]{icml2026}

\def\commentson{1} 

\ifnum\commentson=1
\newcommand{\yv}[1]{\textcolor{red}{Yvonne: #1}}
\newcommand{\my}[1]{\textcolor{orange}{Mingyu: #1}}
\newcommand{\dnote}[1]{\textcolor{teal}{Dana: #1}}
\newcommand{\anti}[1]{\textcolor{magenta}{Antigoni: #1}}
\newcommand{\wm}[1]{\textcolor{blue}{ Min: #1}}
\else
\newcommand{\yv}[1]{}
\newcommand{\my}[1]{}
\newcommand{\dnote}[1]{}
\newcommand{\anti}[1]{}
\newcommand{\wm}[1]{}
\fi

\begin{document}

\twocolumn[
\icmltitle{Revisiting ML Training under Fully Homomorphic Encryption: Convergence Guarantees, Differential Privacy, and Efficient Algorithms}
\icmltitlerunning{Revisiting ML Training under FHE: Convergence Guarantees, Differential Privacy, and Efficient Algorithms}

\begin{icmlauthorlist}
\icmlauthor{Yvonne Zhou}{umd}
\icmlauthor{Mingyu Liang}{umd}
\icmlauthor{Ivan Brugere}{jp}
\icmlauthor{Danial Dervovic}{jp}
\icmlauthor{Yue Guo}{cryp,jp}
\icmlauthor{Antigoni Polychroniadou}{cryp,jp}
\icmlauthor{Min Wu}{umd}
\icmlauthor{Dana Dachman-Soled}{umd}
\end{icmlauthorlist}

\icmlaffiliation{umd}{University of Maryland, College Park, MD 20742}
\icmlaffiliation{jp}{J.P. Morgan AI Research, New York, NY, 10017 }
\icmlaffiliation{cryp}{AlgoCRYPT CoE }
\icmlcorrespondingauthor{Yvonne Zhou}{skyzhou@umd.edu}
\vskip 0.3in
]
\printAffiliationsAndNotice{}

\begin{abstract}
We present the first theoretical convergence analysis of machine learning training under fully homomorphic encryption (FHE), combined with a differentially private (DP) training algorithm tailored to encrypted computation. Our approach improves computational efficiency over standard differentially private gradient descent (DP-GD) while achieving comparable utility. In particular, we prove convergence of approximate gradient descent using polynomial approximations of activation and loss functions, which are required for FHE compatibility. To preserve privacy in downstream tasks, we integrate differential privacy without relying on costly per-sample gradient clipping, enabling scalable encrypted learning. We also provide data-independent hyperparameter selection and theoretically grounded strategies for polynomial approximation which can be of independent interest. Together, these contributions advance the feasibility of efficient, private, and secure machine learning on sensitive data.

\end{abstract}

\section{Introduction}
\sloppy

As machine learning (ML) is increasingly applied to sensitive domains, protecting
the privacy and security of training data has become critical, especially when
resource-constrained clients outsource training to external servers. In such
settings, privacy risks arise from: 
(1) a potentially
semi-honest \emph{server} that follows the protocol but attempts to infer
information about the data; 
and
(2) \emph{end-users} who, through black-box or white-box access to a trained
model, may extract information about the training data.

To address these threats, we combine \emph{Fully Homomorphic Encryption (FHE)}
and \emph{Differential Privacy (DP)} to provide end-to-end privacy guarantees.
FHE ensures that the server performs training entirely on encrypted data and
never observes raw inputs, while DP limits information leakage to 
end-users after deployment, mitigating attacks such as model
inversion~\cite{fredrikson2015model}. Together, FHE and DP offer strong
protection throughout the ML pipeline.

As illustrated in Figure~\ref{fig:FHE_DP-GD}, the workflow proceeds as follows. The client generates an FHE key pair and encrypts both its dataset and pre-sampled noise, which are then sent to the server. The server performs homomorphic DP-GD, adding the encrypted noise to the encrypted gradients at each iteration, and returns the final encrypted model to the client. A semi-honest server learns nothing about the client’s data, while the client obtains DP guarantees against downstream users, even after decrypting and releasing the model.
This framework naturally extends to a multi-client setting using cryptographic tools such as multi-key FHE~\cite{ananth2020multi, lopez2012fly, mukherjee2016two}, where each client encrypts its data and noise under its own public key.\footnote{Each client independently contributes noise, so the variance grows linearly with the number of clients. Alternatively, the server could sample noise homomorphically under FHE, but this is significantly more computationally expensive.} The protocol requires only a single round of interaction with the server, and
a single round of interaction among the clients to jointly decrypt the final model.

\begin{figure}[h]
    \centering
    \includegraphics[width=0.8\linewidth]{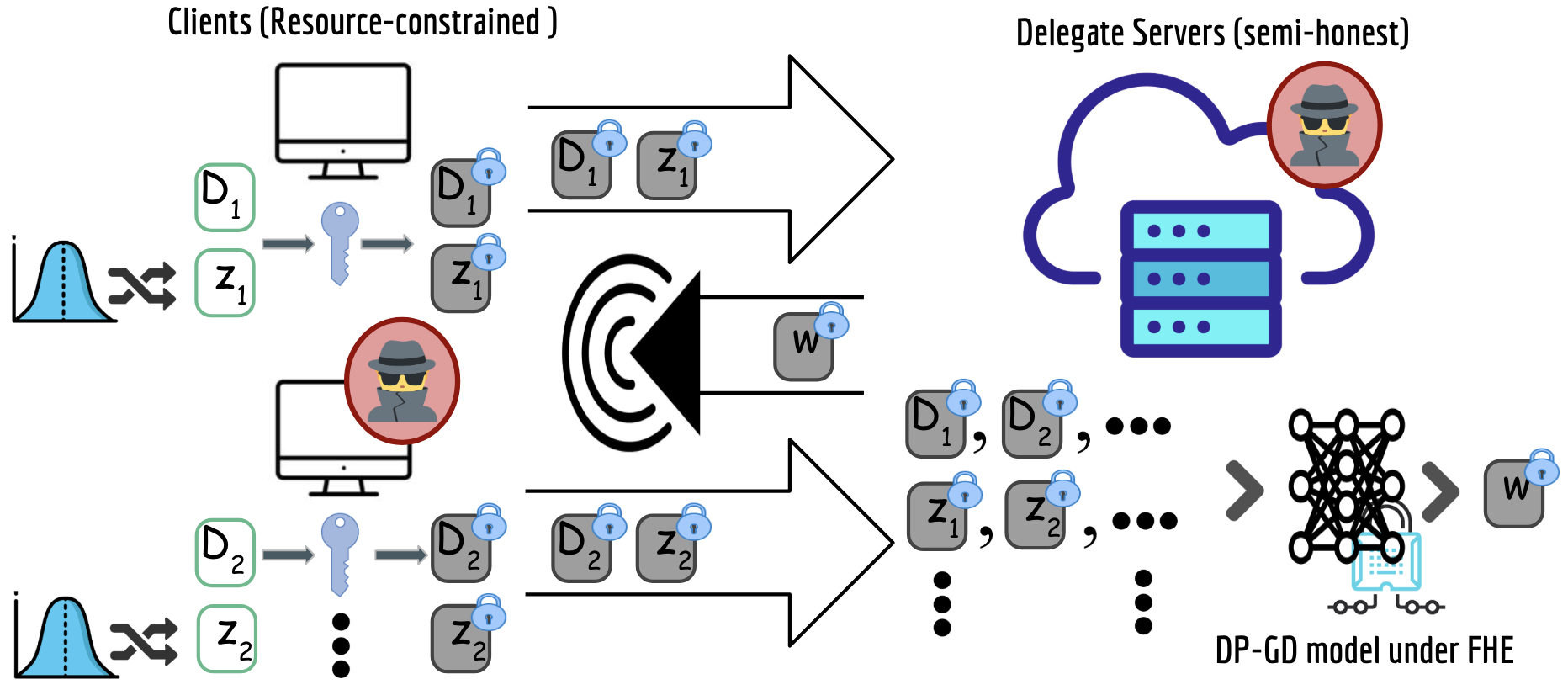}
    \vskip 0.1in
    \caption{Secure and DP training with encrypted dataset}
    \vskip -0.1in
    \label{fig:FHE_DP-GD}
\end{figure}

When performing gradient descent under FHE, the true gradient is often replaced with a polynomial approximation so that the arithmetic operations required to compute the gradient are compatible with the available FHE operations of ``$+$'' and ``$\times$'' over the native FHE ring.
To illustrate our methodology, we concretely examine how polynomial approximation is employed in the case of neural networks.

\vspace{-0.5em}\paragraph{Polynomial approximation for neural networks.}
Given a dataset \(D\) of \(N\) samples \((\vec{x}, y)\), consider the objective
$f(\vec{w}) = \frac{1}{N} \sum_{(\vec{x}, y)\in D} f_{(\vec{x}, y)}(\vec{w})$,
where $f_{(\vec{x}, y)}(\vec{w}) = L_y\!\left(M_{\vec{w}}(\vec{x})\right)$,
\(M_{\vec{w}}\) is an \(L\)-layer feedforward  neural network with activation \(\psi\),
\(L_y\) is the loss for label \(y\), and \(\nabla f(\vec{w})\) denotes the
corresponding gradient.

A polynomial approximation to \(\nabla f(\vec{w})\) is obtained by rerunning
backpropagation while replacing each occurrence of \(\psi\) and its derivative
\(\psi'\) (except in the final layer) with polynomial approximations \(p\) and
\(p'\). In the final layer, the term
$\psi'(z_L)\, L_y'(a_L) = (L_y \circ \psi)'(z_L)$,
where \(z_L\) and \(a_L\) denote the input and output of the final activation,
is replaced by a polynomial approximation \(\tilde{p}'_y\) of the combined
derivative. 
Any continuous function can be approximated arbitrarily well by a polynomial of
sufficient degree~\cite{bernstein1912proof,Roulier70}.
However, such approximations guarantee bounded error only on specified
\emph{closed intervals}, which prior work typically selected heuristically.
A central goal of this work is to identify approximation intervals together with
formal guarantees that all inputs encountered during gradient descent remain
within them.

Crucially, the substitution described above yields a vector-valued function that is itself the
gradient of another objective \(g\) (see Section~\ref{sec:poly_grad}). Thus,
running approximate gradient descent using the substituted gradient is
equivalent to running exact gradient descent on \(g\).

\vspace{-0.5em}\paragraph{Convergence of Approximate Gradient Descent.}

Although approximate gradient descent has performed well in practice~\cite{kim2018secure, han2018efficient}, its theoretical justification remains unclear. In particular, even if the gradients of the original and approximate objective functions, $\nabla f$ and $\nabla g$, are close in norm (i.e., within $\zeta$), this does not guarantee that essential properties of $f$ are preserved by $g$. For instance, $g$ may no longer be convex, even if $f$ is, which implies that gradient descent on $g$ may converge to a local minimum far from the global minimum of $f$.
Thus, it is not immediately evident what can be said about the convergence of gradient descent to the minimum of $f$ when optimization is performed with respect to the surrogate objective $g$. 
\vspace{-0.5em}\paragraph{Privacy-Preserving Learning via FHE and DP}
\emph{Fully Homomorphic Encryption} (FHE) is an advanced cryptographic primitive that enables computations to be performed directly on encrypted data, producing encrypted outputs that, upon decryption, match the results of the corresponding computations carried out on the plaintext data. FHE enables a client to outsource the training process to an untrusted server without interaction: the client sends encrypted data to the server, which computes directly on ciphertexts and returns the final encrypted model weights.
While FHE ensures the server learns nothing about the training data during the computation, it does not prevent information leakage \emph{after} training. 
Once the client decrypts the final model, it may be used in downstream tasks or released publicly, potentially exposing sensitive information about the training data.

One might consider releasing the encrypted model together with the decryption key to trusted users. However, this reduces the setting to a closed-access deployment model and is incompatible with our goal of a \emph{public-release} setting for broad use.
To address this limitation, we incorporate \emph{differential privacy} (DP), which provides formal guarantees that the released model does not overly depend on any single training example, and remains secure under arbitrary post-processing, without requiring trusted access or cryptographic keys. A widely used approach for achieving differential privacy in ML training is \emph{differentially private gradient descent}  (DP-GD),  where \emph{gradient perturbation} is performed--i.e.~Gaussian noise is added to the gradient at each iteration of the optimization process (see Algorithm~\ref{fig:DP_grad_desc}).

In our framework, the client encrypts the training data, and the server applies an \emph{approximate} version of DP-GD, in which gradients of the original loss function $f$, 
are replaced with gradients of a polynomial approximation $g$, 
as discussed above. We first investigate 
the convergence guarantees of 
this noisy, approximate training procedure.

DP-GD relies on an additional step, \emph{gradient clipping}, which limits the gradient's sensitivity by scaling it whenever its norm exceeds
a chosen threshold. This operation controls the noise
required to achieve DP. 
Performing gradient clipping \emph{homomorphically} under FHE
is highly inefficient \footnote{The more advanced \emph{adaptive gradient clipping}~\cite{andrew2021differentially} is even more expensive under FHE due to the additional logic required for gradient monitoring and threshold updates.}. It entails evaluating operations such as comparisons,
square roots, and divisions directly on ciphertexts, each of which is costly
due to bit-level computation and ciphertext expansion. Even after replacing
these operations with polynomial approximations, the resulting circuit depth
(the primary measure of FHE complexity) is about 24 per iteration--far
larger than the depth of roughly 8 needed to implement an approximate version
of standard (no-DP) gradient descent.

To overcome this limitation, we design an FHE-tailored training algorithm that avoids gradient clipping while providing formal DP guarantees and comparable model utility, significantly reducing computational overhead and improving scalability.

\vspace{-0.8em}\paragraph{Gradient Perturbation vs.\ Output Perturbation.}
An alternative to gradient perturbation (DP-GD) is
\emph{output perturbation} (Output-GD), which enforces DP by adding calibrated
noise to the final model parameters \cite{dwork2006calibrating,chaudhuri2011differentially}.
Both prior work \cite{yu2019gradient} and our experiments (Figure~\ref{fig:dp_no_fhe_accuracy} and Table~\ref{tb:no-clipping_nofhe}) show that Output-GD yields significantly worse utility under the same privacy budget, since it injects a single large noise term rather than distributing noise across iterations, which enables better composition, noise averaging, and stability. Further, the accuracy of Output-GD models exhibit high variance across
runs, while gradient-perturbed models achieve consistent performance.
Finally, Output-GD’s sensitivity, and thus its noise scale, depends on the maximum weight magnitude attained during training. Under FHE, this would require either an expensive encrypted \textbf{Maximum} computation or sending
the encrypted squared norms of all intermediate weights to the client for post-processing, incurring substantial communication overhead.
and retaining privacy guarantees only in the single-client setting. 
For these reasons, we adopt gradient perturbation to achieve strong DP guarantees with higher practical utility.

\subsection{Related Work}
\label{sec:related_works}
\vspace{-0.5em}\paragraph{Secure gradient descent training using FHE}
Prior work has explored FHE for secure machine learning. \cite{gilad2016cryptonets} study encrypted neural network inference using FHE. \cite{kim2018secure} implement logistic regression under the BGV scheme, \cite{han2018efficient} propose an efficient logistic regression method with CKKS, and \cite{mihara2020neural} implement neural network training using CKKS. These studies focus on empirical evaluations and lack theoretical guarantees.

\vspace{-0.7em}\paragraph{FHE + DP in federated learning}
Recent work combines homomorphic encryption (HE) and differential privacy (DP) for secure machine learning \cite{8241854,aziz2023exploring,10320195}, typically applying HE to model updates (e.g., gradients) and requiring interactive client–server rounds. In contrast, our method encrypts the training data itself, enabling secure outsourcing to untrusted servers, supporting clients with limited resources. This allows the server to train large-scale models entirely without client participation, while preserving data privacy.

\vspace{-0.7em}\paragraph{Outsourcing DP computation}
There is growing interest in outsourcing differential privacy (DP) computation.
For example, \cite{dagher2020privacy} propose a framework for confidential
range-count queries using attribute-based encryption and DP, while
\cite{ye2020secure} combine order-preserving encryption with DP to support
secure outsourced data release. However, these works focus on DP data querying
rather than machine learning tasks.

\subsection{Our Contributions} 

To the best of our knowledge, this work presents the first theoretical convergence analysis of machine learning training within the context of FHE, along with a secure, DP training algorithm tailored for FHE. Our algorithm offers significantly greater efficiency than standard DP-GD under FHE, while maintaining comparable utility. Our key contributions are as follows:

\noindent
\textbf{Theoretical Convergence of Approximate Gradient Descent in FHE:}
While prior work has primarily focused on implementation-level designs for approximate gradient descent trainning under FHE, we provide the first formal convergence analysis of such training, where activation and loss functions are replaced by polynomial approximations$-$a necessary adaptation for compatibility with FHE. (Sec.~\ref{sec:No-DP_converge})

\noindent
\textbf{Differential Privacy for Outsourced Training:}
Unlike existing FHE-based methods that focus solely on securely outsourcing standard training algorithms, we propose integrating differential privacy to mitigate information leakage risks from end users. Additionally, we offer the first comprehensive convergence analysis of DP training under approximate gradient descent. (Sec.~\ref{sec:dp_converge})

\noindent
\textbf{Technical Innovations for DP-ML in FHE:}
We propose a method for provably DP gradient descent under FHE. We modify the objective function to allow strictly upper-bounding the norm of the iterates $\|\vec{w}_i\|$, which fixes the input range for polynomial approximations and ensures their approximation error is controlled. This allows us to bound the gradient sensitivity at each iteration and calibrate the noise correctly, while avoiding the costly gradient-clipping step, which is particularly expensive under FHE.

Crucially, naively replacing DP-GD components with polynomial approximations, regardless of whether clipping is used, can break DP guarantees, because the gradient sensitivity cannot be rigorously bounded. Polynomial approximations are only accurate over a prescribed input interval and can exhibit unbounded error outside it. Clipping does not resolve this issue, as the clipping operations themselves rely on polynomial approximations that can diverge beyond the interval. Prior work selected these intervals heuristically, with no guarantee that all inputs remain within bounds. In contrast, our approach provides a rigorous DP guarantee without heuristic assumptions (Sec.~\ref{sec:no_clip}).

Our approach enables scalable DP training under FHE with markedly reduced computational cost, formal privacy guarantees, and comparable utility. 
Relative to a standard (unapproximated) DP-GD baseline trained on plaintext data, our FHE-trained models incur at most a 1.82\% drop in accuracy and a 0.78\% drop in AUC, while reducing per-iteration multiplicative depth by over $2.5\times$. 
In fact, despite the fact that we use a modified objective function, our training time is essentially the same as that of Output-GD, which runs standard (no-DP) GD under FHE
(Sec.~\ref{sec:fhe_experiments}, Table~\ref{tb:no-clipping_fhe} and \ref{tb:no-clipping_nofhe}).

\noindent
\textbf{Polynomial Approximation Strategies:}
Building on our convergence analysis, we derive practical guidelines for selecting polynomial approximations for activation and loss functions. We validate our findings through extensive experiments. In our experiments, we observe that incorrectly setting the approximation intervals can lead to divergence in both non-DP and DP settings. Our techniques provide rigorous guarantees for these intervals, whereas prior algorithms require data-dependent tuning. We further illustrate the theoretical and empirical trade-offs across different polynomial approximation strategies. (Sec.~\ref{sec:guidance_of_approx}, Appendix~\ref{sec:guidance})

\noindent
\textbf{Data-Independent Hyperparameter Selection:}
Our analysis enables the principled selection of key hyperparameters (e.g.,learning rate), without requiring access to the underlying dataset. 
Prior work did not address how to set these parameters in practice in a data-independent manner while guaranteeing convergence. (Sec.~\ref{sec:guidance_of_approx}, Appendix~\ref{sec:guidance})

\section{Notation and Background}
\label{sec:background}

We use boldface to represent vectors.
$\|\vec{w}\|$ denotes the $\ell_2$ norm of a
vector $\vec{w}$. 
$\vec{I}_m$ denotes the identity matrix of dimension $m$.
$[t]$ denotes the set of natural numbers less than or equal to $t$. $\log$ denotes natural log. A database \(D\) is a multiset of records \((\vec{x}, y)\), where
\(\vec{x} \in [-1,1]^m\) and \(y \in \{0,1\}\). $N$ denotes its size.

\subsection{Properties of Functions}\label{sec:objetive_properties}

\begin{definition}[$\beta$-smooth] A function $f$ is $\beta$-smooth, if $\langle \grad f(\vec{w})-\grad f(\vec{v}), \vec{w}-\vec{v} \rangle \leq \beta \|\vec{w}-\vec{v}\|^2$ 
\end{definition}

\begin{definition}[$\mu$-strong convexity] A function $f$ is $\mu$-strongly convex, if $\langle \grad f(\vec{w})-\grad f(\vec{v}), \vec{w}-\vec{v} \rangle \geq \mu \|\vec{w}-\vec{v}\|^2$     
\end{definition}

\begin{definition}[Expected Curvature $\nu$~\cite{yu2019gradient}] \label{def:expected_curvature}
This notion yields parameters more closely aligned with experimental behavior.
$f$ achieves $\nu_{\sigma^2}$-expected curvature around the optimum $\vec{w}^*$ with variance $\sigma^2$ if
for all $\vec{w}$,
\[\mathbb{E}[
\langle \grad f(\vec{w} + \vec{\chi}), \vec{w} + \vec{\chi} - \vec{w}^* \rangle] \geq  \nu_{\sigma^2} \mathbb{E}[ \|\vec{w} + \vec{\chi} - \vec{w}^*\|^2],
\]
where the expectation is with respect to $\vec{\chi} \sim \mathcal{N}(\vec{0}, \sigma^2 \vec{I}_m)$.
\end{definition}


\subsection{Differentially-Private Gradient Descent}
\label{sec:Differential_Privacy}



\begin{definition}[$(\epsilon, \delta)$-Differential Privacy]
A randomized mechanism $\mathcal{M:D \rightarrow R}$ satisfies $(\epsilon, \delta)$-differential privacy if for any two adjacent databases $D, D' \in \mathcal{D}$ (i.e.~databases that differ in exactly one record) and for any subset of outputs $\mathcal{S} \subseteq \mathcal{R}$  it holds that
$Pr[\mathcal{M}(D)\in \mathcal{S}]\leq e^{\epsilon} Pr[\mathcal{M}(D')\in \mathcal{S}]+\delta$.
\end{definition}

\noindent
DP learning is commonly implemented via
Differentially Private Gradient Descent (DP-GD) and its stochastic variants
(DP-SGD) \cite{Abadi_2016,private_SGD_1,private_SGD_2}. 
Given the objective
$f(\vec{w}) = \frac{1}{N} \sum_{(\vec{x}, y) \in D} f_{(\vec{x},y)}(\vec{w})$,
the per-sample gradient norm is $\|\nabla f_{(\vec{x},y)}(\vec{w})\|$. Since this
quantity may be large or difficult to bound, DP-GD clips each per-sample
gradient to a fixed threshold \(C\), which determines the noise scale \(\sigma\) required for DP. 
Clipping
is defined as

\vspace{1mm}

$
\mathsf{Cp}(\nabla f_{(\vec{x}, y)}(\vec{w}), C)
= \min\!\left\{1, \frac{C}{\|\nabla f_{(\vec{x}, y)}(\vec{w})\|}\right\}
\nabla f_{(\vec{x}, y)}(\vec{w}).
$

\vspace{1mm}

\noindent
\sloppy
Thus, in DP-GD, noise \(\boldsymbol{\chi}_i \sim \mathcal{N}(0,\sigma^2\vec{I}_m)\) is sampled at each step and
model updates become:
$\vec{w}_{i+1}
= \vec{w}_i
- \eta \left (\frac{1}{N} \sum_{(\vec{x}, y) \in D}\mathsf{Cp}(\nabla f_{(\vec{x}, y)}(\vec{w}_i), C) + \boldsymbol{\chi}_i\right)$.

\noindent
\textbf{Breakdown of DP guarantees under polynomial approximation.}
Even when an upper bound on
\(\|\nabla f_{(\vec{x}, y)}(\vec{w})\|\) is available, no such guarantee holds for
the polynomially approximated gradient
\(\|\nabla g_{(\vec{x}, y)}(\vec{w})\|\).
Polynomial approximations have bounded error only over fixed input
intervals and may behave arbitrarily outside them.
Prior to this work, there were no formal guarantees that the inputs encountered
during gradient descent remained within these intervals.

\subsection{Secure Machine Learning via FHE}
\label{sec:FHE}
Fully homomorphic encryption (FHE)~\cite{gentry2009fully,cryptoeprint:2011/277,cryptoeprint:2012/144,cryptoeprint:2014/816,cheon2017heaan} 
allows computations to be performed on encrypted data.
Given encryptions $\enc_{\mathsf{pk}}(m_1)$ and $\enc_{\mathsf{pk}}(m_2)$, FHE allows computation of
$\enc_{\mathsf{pk}}(m_1 + m_2)$ and $\enc_{\mathsf{pk}}(m_1 \cdot m_2)$ and, by composition, arbitrary arithmetic circuits, all \emph{without decryption}.
FHE-based training is computationally expensive, with cost dominated by the circuit’s multiplicative depth.

\subsection{Polynomial approximation of Neural Networks} \label{sec:poly_grad}
FHE supports 
addition and multiplication over a native ring, allowing the computation of polynomials. 
However, the true gradient
\(\nabla f\) typically contains non-polynomial components that cannot be evaluated directly
under FHE. Consequently, training under FHE replaces the true gradient with a
polynomial approximation.

\sloppy
\noindent
\textbf{The approximate gradient is itself a gradient.}
Recall that for neural net training, objective function $f$ has the form 
$f(\vec{w}) = \frac{1}{N} \sum_{(\vec{x}, y) \in D} f_{(\vec{x}, y)}(\vec{w})$,
where
$f_{(\vec{x}, y)}(\vec{w}) =  L_y(\psi^{L}(\vec{W}^{L}\cdots \boldsymbol\psi(\vec{W}^{1}\vec{x}))\cdots ))),$
$\vec{w} = \vec{W}^1, \ldots, \vec{W}^L$ and
matrices $\vec{W}^\ell$ are the weights in layer $\ell$,
and each $\boldsymbol\psi$ is a vector of activations, $\psi$, of appropriate dimension,
applied in the hidden layers,
and $L_y$ is a loss function that depends on the label $y$. Then
\begin{align*}
& \grad f(\vec{W}^\ell) 
= \frac{1}{N} \sum_{(\vec{x},y) \in D} [\boldsymbol \psi'(\vec{z}_\ell) \odot (\vec{W}^{\ell+1})^T \cdot \\  
&\boldsymbol \psi'(\vec{z}_{\ell+1})
\odot \cdots \odot (\vec{W}^L)^T \cdot \psi'(z_L) \cdot L'_{y}(a_L)] \vec{a}_{\ell-1}^T,
\end{align*}
where for $j \in [L-1]$, $\vec{z}_{j}$
(resp.~$\vec{a}_j$) are the inputs 
(resp.~activations)
of layer $j$,
$z_L$ (resp.~$a_L$) is the input
(resp.~activation) of layer $L$,
and
$\odot$ denotes a Hadamard product.
Replacing the activation functions $\boldsymbol \psi$
with polynomial approximations $\vec{p}$, replacing $\psi'(z_L) \cdot L'_{y}(a_L) = (L_{y} \circ \psi)'(z_L)$ with a polynomial approximation
$\tilde{p}'_y$,
and setting the polynomial $\tilde{p}_y$
to be an antiderivative 
of $\tilde{p}'_y$ 
yields the gradient of another function $g$:
\begin{align*}
\grad g(\vec{W}^l)
= &\frac{1}{N} \sum_{(\vec{x},y) \in D} [\vec{p}'(\widetilde{\vec{z}}_\ell) \odot (\vec{W}^{\ell+1})^T \cdot \vec{p}'(\widetilde{\vec{z}}_{\ell+1})\\
&\odot \cdots \odot (\vec{W}^L)^T \cdot \tilde{p}'_y( \widetilde{z}_L)] \widetilde{\vec{a}}_{\ell-1}^T,
\end{align*}
where
$g(\vec{w}) = \frac{1}{N} \sum_{(\vec{x},y) \in D} g_{(\vec{x},y)}(\vec{w})$
and
$g_{(\vec{x},y)}(\vec{w}) = \tilde{p}_y(\vec{W}^{L}\cdots \vec{p}(\vec{W}^{1}\vec{x}))\cdots ))).$

For single-layer neural networks,
$\grad f$ has the form:
$\grad f(\vec{w}) = \frac{1}{N} \sum_{(\vec{x}, y) \in \mathcal{D}}
(L_y \circ \psi)'(\langle \vec{w}, \vec{x} \rangle)\cdot \vec{x}$
%
and $\grad g$ has the form:
$
\grad g(\vec{w}) = \frac{1}{N} \sum_{(\vec{x}, y) \in \mathcal{D}}\tilde{p}'_y(\langle \vec{w}, \vec{x} \rangle) \cdot \vec{x}$.

\section{Convergence of No-DP Approximate GD}
\label{sec:No-DP_converge}
 Recall that we replace the gradient vector, $\grad f(\vec{w})$, corresponding to objective function $f$ with a vector of polynomial approximations, and that 
for functions $f$ of interest, this approximate gradient is the \emph{exact} gradient of a different function $g$
(See Section~\ref{sec:FHE}).
Consequently, we effectively run gradient descent with objective $g$.
See Algorithm~\ref{fig:grad_desc}.

Let convex set $\mathcal{S} \subseteq \mathbb{R}^m$ be the domain of $f, g, \grad f, \grad g$.
We assume $\grad g$ and $g$ satisfy the following:
\begin{assumption}[Bounded Gradient Distance]\label{ass:grad_dist}
\leavevmode\par
{\centering
$\displaystyle
\forall \vec{w} \in \mathcal{S},
\|\grad f(\vec{w}) - \grad g(\vec{w}) \| \leq \zeta.$
\par}
\end{assumption}

\begin{assumption}[Bounded Objective Distance]\label{ass:obj_dist}
\leavevmode\par
{\centering
$\displaystyle
\forall \vec{w} \in \mathcal{S},
| f(\vec{w}) - g(\vec{w}) | \leq \alpha.$
\par}
\end{assumption}

For all \(\vec{w}, \vec{w}' \in \mathcal{S}\), a bound on
\(f(\vec{w}) - f(\vec{w}') - (g(\vec{w}) - g(\vec{w}'))\) follows directly
from Assumption~\ref{ass:grad_dist}.
Consequently, Assumption~\ref{ass:obj_dist} 
can be omitted,
see Appendix~\ref{sec:alpha}.

\begin{algorithm}[tb]
  \caption{Approximate Gradient Descent.}
  \label{fig:grad_desc}
  \begin{algorithmic}
    \STATE {\bfseries Input:} Running steps $T$, learning rate
   $\eta_{\ref{fig:grad_desc}}$.
    \STATE Initialize $\vec{w}_{g,0} = \vec{0}$.
    \FOR{$i = 0$ to $T-1$}
    \STATE $\vec{w}_{g, i+1} \gets \vec{w}_{g,i} - \eta_{\ref{fig:grad_desc}}\grad g(\vec{w}_{g,i})$, 
    \ENDFOR
    \STATE {\bfseries Onput:}$\vec{w}_{g,T}$
  \end{algorithmic}
\end{algorithm}

\begin{restatable}{theorem}{ConvergenceNoDP} \label{thm:npdp_converge}
Suppose objective functions $f$ and $g$ satisfy Assumptions~\ref{ass:grad_dist} and 
\ref{ass:obj_dist}, $f$ is $\mu$-strongly convex, $g$ is $\beta_g$-smooth, and the learning rate $\eta_{\ref{fig:grad_desc}} = \frac{1}{\beta_g}$.
Fix $\rho > 0$ and let $T = \frac{2\beta_g(g(\vec{w}_{g,0}) - L)}{\rho^2}$, where $L \leq g(\vec{w}_{g, T})$.
Then we have the following guarantees on convergence of the output of Algorithm~\ref{fig:grad_desc}, $\vec{w}_{g, T}$, to the optimum, $\vec{w}^*_f$, of 
$f$:
\[
f(\vec{w}_{g,T}) - f(\vec{w}^*_f) \leq 2\alpha + \frac{\rho^2 + 2\zeta \rho + \zeta^2}{2\mu}, \text{ and,}
\] 
    \[
\|\vec{w}_{g,T} - \vec{w}^*_f\|^2 \leq  \frac{4\alpha}{\mu} + \frac{\rho^2 + 2\zeta \rho + \zeta^2}{\mu^2}.
\]
\end{restatable}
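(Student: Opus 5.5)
The argument has two parts. First, standard smooth nonconvex gradient descent on $g$ yields an iterate with small $\|\grad g\|$. Second, Assumption~\ref{ass:grad_dist} turns this into a bound on $\|\grad f\|$, and strong convexity of $f$ converts a small gradient into small suboptimality and distance. Assumption~\ref{ass:obj_dist} accounts for the $2\alpha$ and $4\alpha/\mu$ terms.

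\textbf{Step 1 (descent on $g$).} With $\eta = 1/\beta_g$, the $\beta_g$-smoothness of $g$ gives the descent lemma
\[
g(\vec{w}_{g,i+1}) \le g(\vec{w}_{g,i}) - \tfrac{1}{2\beta_g}\|\grad g(\vec{w}_{g,i})\|^2 .
\]
As stated, the definition only bounds the inner product. I will treat it as implying the usual quadratic upper bound by integrating along the segment, which uses convexity of $\mathcal{S}$. Summing over $i=0,\dots,T-1$ and using $g(\vec{w}_{g,T}) \ge L$ gives
\[
\min_{i<T}\|\grad g(\vec{w}_{g,i})\|^2 \le \frac{2\beta_g(g(\vec{w}_{g,0})-L)}{T} = \rho^2 .
\]
The iterates are monotone in $g$, so $g(\vec{w}_{g,T}) \le g(\vec{w}_{g,i^*})$, where $i^*$ is the minimizing index.

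\textbf{Step 2 (transfer to $f$).} At $i^*$, the triangle inequality and Assumption~\ref{ass:grad_dist} give $\|\grad f(\vec{w}_{g,i^*})\| \le \rho+\zeta$, so $\|\grad f\|^2 \le \rho^2+2\zeta\rho+\zeta^2$. By $\mu$-strong convexity, the Polyak--Łojasiewicz inequality holds:
\[
f(\vec{w}) - f(\vec{w}^*_f) \le \tfrac{1}{2\mu}\|\grad f(\vec{w})\|^2 .
\]
Hence $f(\vec{w}_{g,i^*}) - f^* \le (\rho+\zeta)^2/(2\mu)$.

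\textbf{Step 3 (passing to the last iterate).} We chain
\[
f(\vec{w}_{g,T}) \le g(\vec{w}_{g,T}) + \alpha \le g(\vec{w}_{g,i^*}) + \alpha \le f(\vec{w}_{g,i^*}) + 2\alpha,
\]
using Assumption~\ref{ass:obj_dist} twice together with monotonicity. This yields the first bound.

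\textbf{Step 4 (distance bound).} Strong convexity together with $\grad f(\vec{w}^*_f)=0$ gives $f(\vec{w}) - f^* \ge \tfrac{\mu}{2}\|\vec{w}-\vec{w}^*_f\|^2$. This requires $\vec{w}^*_f$ to be a stationary point, i.e.\ interior to $\mathcal{S}$ or $\mathcal{S}=\mathbb{R}^m$; otherwise the first-order optimality condition supplies the same inequality. Multiplying the first bound by $2/\mu$ gives exactly $4\alpha/\mu + (\rho+\zeta)^2/\mu^2$.

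\textbf{Main obstacle.} The delicate point is deriving the quadratic upper bound (descent lemma) and the PL and quadratic-growth consequences from the paper's inner-product definitions of smoothness and strong convexity. I would handle this by integrating $\langle \grad h(\vec{v}+t(\vec{w}-\vec{v})), \vec{w}-\vec{v}\rangle$ over $t\in[0,1]$. A secondary point is that the iterates must stay in $\mathcal{S}$, which I would assume implicitly, for instance by taking $\mathcal{S}=\mathbb{R}^m$.
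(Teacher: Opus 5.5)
Your proposal is correct and follows the paper's proof essentially step for step. The paper cites the critical-point claim (an iterate $\vec{w}_{g,\tilde t}$ with $\|\nabla g(\vec{w}_{g,\tilde t})\|\le\rho$ and $g(\vec{w}_{g,T})\le g(\vec{w}_{g,\tilde t})$), which you re-derive from the descent lemma; it then uses Assumption~\ref{ass:obj_dist} twice, the PL inequality, and quadratic growth exactly as you do, and your integration-along-segments justifications of these facts from the paper's one-sided inner-product definitions are valid details the paper leaves implicit.
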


\begin{remark}
If objective function $g$ has a global minimum, $\vec{w}^*_g$, then we can set $L = g(\vec{w}^*_g)$ in the above Theorem.
\end{remark}

The proof of Theorem~\ref{thm:npdp_converge} can be found in Appendix~\ref{sec:no_dp_converge}.

\section{Convergence of DP-Approximate GD}
\label{sec:dp_converge}

We consider DP gradient descent 
with ``approximate'' objective function $g$,
as specified in Algorithm~\ref{fig:DP_grad_desc}.
Since random noise is added to the gradient in each iteration, we employ an ``average-case'' analogue of strong convexity---called \emph{expected curvature}~\cite{yu2019gradient} (see Definition~\ref{def:expected_curvature}) and denoted by $\nu$---in the analysis. 
As in Section~\ref{sec:No-DP_converge}, we employ Assumption~\ref{ass:grad_dist}.

\begin{restatable}{theorem}{ConvergenceDP} \label{thm:dp_converge}
Suppose objective functions $f$ and $g$ satisfy Assumption~\ref{ass:grad_dist}, and $f$ is $\beta_f$-smooth with $\nu$ expected curvature, then the output of Algorithm~\ref{fig:DP_grad_desc}, $\vec{w}_T$, satisfies: 
\begin{align*}
&\mathbb{E}[\|\vec{w}_{g,T} - \vec{w}^*_f\|^2] \\ \leq &\left ( 1 - \frac{\eta_{\ref{fig:DP_grad_desc}} \nu}{2} \right )^T \|\vec{w}_{g,0} - \vec{w}^*_f\|^2 + \frac{2\zeta^2}{\nu} \left ( \frac{1}{\nu} + 1 \right) +
\frac{2\eta_{\ref{fig:DP_grad_desc}} m \sigma^2}{\nu}.
\end{align*}
\end{restatable}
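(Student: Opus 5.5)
We track the distance to the optimum, $\Delta_i := \mathbb{E}\|\vec{w}_{g,i} - \vec{w}^*_f\|^2$, over one step of Algorithm~\ref{fig:DP_grad_desc}. Write $\eta = \eta_{\ref{fig:DP_grad_desc}}$. We take the update to be $\vec{w}_{g,i+1} = \vec{w}_{g,i} - \eta(\grad g(\vec{w}_{g,i}) + \boldsymbol{\chi}_i)$ with fresh $\boldsymbol{\chi}_i \sim \mathcal{N}(\vec{0},\sigma^2\vec{I}_m)$. The natural tool is expected curvature (Definition~\ref{def:expected_curvature}), which is stated at a Gaussian-perturbed point. So, following \cite{yu2019gradient}, we view the analysis through the \emph{perturbed} point $\vec{u}_i := \vec{w}_{g,i} - \eta\boldsymbol{\chi}_{i-1}$, or, equivalently, shift the indexing so that the gradient is evaluated at a point of the form $\vec{w} + \vec{\chi}$ with $\vec{\chi}$ Gaussian and independent of $\vec{w}$. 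We then apply Definition~\ref{def:expected_curvature} with variance $\eta^2\sigma^2$. Alternatively, we simply apply the definition conditional on the past, with $\vec{\chi}$ the current noise. We commit to the reading that the inner-product term $\mathbb{E}\langle \grad f(\vec{w}_{g,i}), \vec{w}_{g,i} - \vec{w}^*_f\rangle \ge \nu \Delta_i$ is available at the iterates themselves, since each iterate after the first is a deterministic function plus independent Gaussian noise.

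Next we expand the square:
\[
\|\vec{w}_{g,i+1}-\vec{w}^*_f\|^2 = \|\vec{w}_{g,i}-\vec{w}^*_f\|^2 - 2\eta\langle \grad g(\vec{w}_{g,i}) + \boldsymbol{\chi}_i, \vec{w}_{g,i}-\vec{w}^*_f\rangle + \eta^2\|\grad g(\vec{w}_{g,i})+\boldsymbol{\chi}_i\|^2 .
\]
Taking expectation kills the cross terms in $\boldsymbol{\chi}_i$ and produces $\eta^2 m\sigma^2$. We split $\grad g = \grad f + (\grad g - \grad f)$ and use Assumption~\ref{ass:grad_dist}. The curvature term contributes $-2\eta\nu\Delta_i$. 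The error term is bounded by Young's inequality: $2\eta\zeta\|\vec{w}_{g,i}-\vec{w}^*_f\| \le \frac{\eta\nu}{2}\|\vec{w}_{g,i}-\vec{w}^*_f\|^2 + \frac{2\eta\zeta^2}{\nu}$. For the squared gradient we use $\|\grad g\|^2 \le 2\|\grad f\|^2 + 2\zeta^2$, together with $\beta_f$-smoothness and $\grad f(\vec{w}^*_f)=0$. Smoothness plus convexity-type behavior gives co-coercivity, $\|\grad f(\vec{w})\|^2 \le \beta_f\langle \grad f(\vec{w}), \vec{w}-\vec{w}^*_f\rangle$. This lets the $2\eta^2\|\grad f\|^2$ term be absorbed into the curvature term when $\eta \le 1/\beta_f$, leaving a net contraction of $(1-\frac{\eta\nu}{2})$ after the Young split. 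The leftover $2\eta^2\zeta^2$ is at most $2\eta\zeta^2$ for $\eta\le 1$.

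The per-step recursion we aim for is therefore
\[
\Delta_{i+1} \le \Bigl(1-\tfrac{\eta\nu}{2}\Bigr)\Delta_i + \eta\Bigl(\tfrac{2\zeta^2}{\nu}+2\zeta^2\Bigr)\cdot\tfrac{1}{2}\cdot(\text{const}) + \eta^2 m\sigma^2 .
\]
We tune the Young constants so that, after unrolling and summing the geometric series $\sum_j (1-\eta\nu/2)^j \le \frac{2}{\eta\nu}$, the additive terms become exactly $\frac{2\zeta^2}{\nu}(\frac{1}{\nu}+1)$ and $\frac{2\eta m\sigma^2}{\nu}$. Concretely, the per-step additive term should be $\eta\zeta^2(\frac1\nu + 1) + \eta^2 m\sigma^2$. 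Unrolling $T$ steps from $\Delta_0 = \|\vec{w}_{g,0}-\vec{w}^*_f\|^2$ then gives the statement.

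The main obstacle is the squared-gradient term. Co-coercivity normally needs convexity, but here we only have expected curvature, which holds in expectation over a Gaussian perturbation. To justify applying it at the iterates, we must also handle the distinction between $\grad f$ at $\vec{w}$ and at $\vec{w}+\vec{\chi}$. We will first try the route of \cite{yu2019gradient}: bound $\mathbb{E}\|\grad f(\vec{w}_{g,i})\|^2 \le \beta_f^2\Delta_i$ by smoothness, and choose $\eta \le \nu/(2\beta_f^2)$ so that $2\eta^2\beta_f^2\Delta_i \le \eta\nu\Delta_i$. This is cruder but needs no convexity, and it still yields the $(1-\eta\nu/2)$ factor after the Young split. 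The implicit learning-rate condition in Algorithm~\ref{fig:DP_grad_desc} should be exactly of this form.
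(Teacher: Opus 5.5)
Your skeleton matches the paper's: expand the square, let the noise contribute $\eta^2 m\sigma^2$, split $\grad g=\grad f+(\grad g-\grad f)$, apply Young to the error cross term, and unroll with $\sum_j(1-\eta\nu/2)^j\le 2/(\eta\nu)$. The paper applies expected curvature directly at $\vec{w}_{g,i}$ without comment, so your conditioning argument there is, if anything, more careful.

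The real difference is the treatment of $\eta^2\mathbb{E}\|\grad g(\vec{w}_{g,i})\|^2$.
\begin{itemize}
\item \textbf{The paper} spends only half of $-2\eta\,\mathbb{E}\langle\grad f(\vec{w}_{g,i}),\vec{w}_{g,i}-\vec{w}^*_f\rangle$ on curvature. It bounds the other half using \emph{convexity} of $f$ plus smoothness: $-\eta\,\mathbb{E}[f(\vec{w}_{g,i})-f(\vec{w}^*_f)]\le-\frac{\eta}{2\beta_f}\mathbb{E}\|\grad f(\vec{w}_{g,i})\|^2\le-2\eta^2\mathbb{E}\|\grad f(\vec{w}_{g,i})\|^2$ for $\eta\le\frac{1}{2\beta_f}$. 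Together with its second claim, this absorbs the $\|\grad f\|^2$ and cross-term parts of the expanded $\|\grad g\|^2$, leaving $2\eta^2\zeta^2$.
\item \textbf{You} keep the full curvature term and use $\|\grad f(\vec{w})\|\le\beta_f\|\vec{w}-\vec{w}^*_f\|$ with $\eta\lesssim\nu/\beta_f^2$.
\end{itemize}
What each buys:
\begin{itemize}
\item The paper's route allows the natural step $\eta=\frac{1}{2\beta_f}$, giving per-step contraction $1-\frac{\nu}{4\beta_f}$ rather than your $1-\frac{\nu^2}{4\beta_f^2}$ (note $\nu\le\beta_f$). Its cost is convexity, which is not among the theorem's hypotheses, though it holds in the Section~\ref{sec:no_clip} application.
\item Your route avoids convexity but needs $\grad f$ to be genuinely $\beta_f$-Lipschitz. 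The paper's definition of $\beta$-smoothness is only the one-sided bound $\langle\grad f(\vec{w})-\grad f(\vec{v}),\vec{w}-\vec{v}\rangle\le\beta\|\vec{w}-\vec{v}\|^2$. Without convexity that does not give your bound, whereas the paper's inequality $f(\vec{w})-f(\vec{w}^*_f)\ge\frac{1}{2\beta_f}\|\grad f(\vec{w})\|^2$ does follow from it.
\end{itemize}
Your guess about the implicit step-size condition is therefore not the paper's, which takes $\eta\le\frac{1}{2\beta_f}$. Your co-coercivity variant is the one closest to the paper. It needs $\eta\le\frac{1}{2\beta_f}$, not $\frac{1}{\beta_f}$: at $\eta=1/\beta_f$ the factor $-2\eta(1-\eta\beta_f)$ in front of the inner product vanishes.

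On constants: with the choices you actually write down (Young weight $\nu/2$, $\eta\le\nu/(2\beta_f^2)$), the curvature budget is used up. Your recursion is then exactly the paper's, $\Delta_{i+1}\le(1-\frac{\eta\nu}{2})\Delta_i+2\eta\zeta^2(\frac{1}{\nu}+\eta)+\eta^2m\sigma^2$, which unrolls to $\frac{4\zeta^2}{\nu}(\frac{1}{\nu}+\eta)+\frac{2\eta m\sigma^2}{\nu}$. That is precisely where the paper's own proof ends. It exceeds the displayed $\frac{2\zeta^2}{\nu}(\frac{1}{\nu}+1)$ whenever $\nu\le 1$.

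So ``tuning the Young constants'' is not free under your stated step size. Your route does have the slack, however, if you commit to a smaller step:
\begin{itemize}
\item Use $2\eta\zeta\|\vec{w}_{g,i}-\vec{w}^*_f\|\le\eta\nu\|\vec{w}_{g,i}-\vec{w}^*_f\|^2+\frac{\eta\zeta^2}{\nu}$.
\item Require $\eta\le\min\{\frac{\nu}{4\beta_f^2},\frac{1}{2}\}$, so that $2\eta^2\beta_f^2\le\frac{\eta\nu}{2}$ and $2\eta^2\zeta^2\le\eta\zeta^2$.
\end{itemize}
The contraction coefficient is then $1-2\eta\nu+\eta\nu+\frac{\eta\nu}{2}=1-\frac{\eta\nu}{2}$. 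The per-step additive term is exactly $\eta\zeta^2(\frac{1}{\nu}+1)+\eta^2m\sigma^2$, and unrolling yields the bound as displayed.
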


The proof appears in Appendix~\ref{sec:dp-converge-proof}.

\begin{algorithm}[tb]
    \caption{Approximate Gradient Descent with DP-noise.}
    \label{fig:DP_grad_desc}
  \begin{algorithmic}
    \STATE {\bfseries Input:} Running steps $T$, learning rate
   $\eta_{\ref{fig:DP_grad_desc}}$, noise standard deviation $\sigma$.
    \STATE Initialize $\vec{w}_{g, 0} = \vec{0}$
    \FOR{$i = 0$ to $T-1$}
    \STATE $\vec{w}_{g, i+1} \gets \vec{w}_{g, i} - \eta_{\ref{fig:DP_grad_desc}}(\grad g(\vec{w}_{g, i})+ \boldsymbol\chi_i)$,
    \STATE where $\boldsymbol\chi_i \sim \mathcal{N}(0, \sigma^2 \vec{I}_m)$.
    \ENDFOR
    \STATE {\bfseries Ouput:}$\vec{w}_{g,T}$
  \end{algorithmic}
\end{algorithm}

\section{DP-Approximate Gradient Descent Algorithm without Clipping} \label{sec:no_clip}


We 
consider here 
smooth, convex cost functions
of the form
$f(\vec{w}) = \frac{1}{N} \sum_{(\vec{x}, y) \in D} \phi(\langle \vec{w}, \vec{x} \rangle, y) = \frac{1}{N} \sum_{(\vec{x}, y) \in D} y \phi_{1}(\langle \vec{w}, \vec{x} \rangle)
+ (1-y)\phi_{0}(\langle \vec{w}, \vec{x} \rangle)$.
\footnote{For notational simplicity, we write $\phi$ in place of $L_y \circ \psi$ as defined in Section~\ref{sec:background}.}
We denote by $\phi'(z,y)$ 
the function
$\phi'(z,y) := y \cdot \frac{d \phi_1(x)}{dx}(z) + 
 (1-y) \cdot \frac{d \phi_0(x)}{dx}(z)$.
We assume $\phi'_{max} := \sup \{|\phi'(z, y)| : z \in (-\infty, \infty), y \in \{0,1\}\} < \infty$.


We begin by modifying the original objective \(f\) to
$f^{+B}_\kappa(\vec{w}) \coloneqq f(\vec{w}) - \lambda B_\kappa(\Theta - \|\vec{w}\|^2)$,
where the additional term is a \emph{barrier function}, a standard tool for
enforcing constraints in optimization.
We define
\(B_\kappa(\Theta - \|\vec{w}\|^2) \coloneqq \ln(\Theta - \|\vec{w}\|^2)\) on the
domain \(\{\vec{w} : \|\vec{w}\|^2 \le (1-\kappa)\Theta\}\),
and use it to control the magnitude of the iterates \(\vec{w}_i\) in
Algorithm~\ref{fig:modified_grad_desc}.
Specifically, the barrier term causes the modified objective to approach $\infty$ as $\|\vec{w}\|^2 \to \Theta$.

Bounding \(\|\vec{w}_i\|\) immediately yields a bound on the hypothesis values,
since \(\langle \vec{w}_i, \vec{x} \rangle \le \sqrt{m}\,\|\vec{w}_i\|\).
This bound is essential for differential privacy: it allows us to fix an
approximation interval for \(\phi'(z,y)\) and guarantees that all
\(z=\langle \vec{w}_i, \vec{x} \rangle\) encountered during gradient descent lie
within this interval.
Consequently, we can bound the sensitivity of the \emph{approximate} gradient
using the known sensitivity of the true gradient together with the maximum
approximation error over the interval; this sensitivity, in turn, determines
the noise required to achieve DP.

Because we require a bound on \(\|\vec{w}_i\|\) at every iteration, not merely at
the optimum, and because polynomial approximations and DP noise are present, the
weight norm cannot be bounded directly by \(\sqrt{\Theta}\).
Instead, we derive an explicit upper bound on the iterates \(\|\vec{w}_i\|\) in
Lemma~\ref{lem:weight_magnitude}.

In Algorithm~\ref{fig:modified_grad_desc}, we define
$\tilde{p}'(z,y) := y\,\tilde{p}'_1(z) + (1-y)\,\tilde{p}'_0(z)$,
where \(\tilde{p}'_0\) and \(\tilde{p}'_1\) are polynomial approximations of
\(\phi'_0\) and \(\phi'_1\), respectively.
The polynomial \(P_\kappa\) approximates
\(F_\kappa(x) := \frac{d}{dx} B_\kappa(x)\) over the interval
\([\kappa\Theta, \Theta]\); note that \(F_\kappa(x) = 1/x\) on this interval.
All polynomials \(\tilde{p}'_0\), \(\tilde{p}'_1\), and \(P_\kappa\) are defined
on \((-\infty, \infty)\).

\begin{algorithm}[tb]
   \caption{Approximate Gradient Descent with Modified objective function $f^{+B}_\kappa$.}
\label{fig:modified_grad_desc}
\begin{algorithmic}
   \STATE {\bfseries Input:} Running steps $T$, learning rate
   $\eta_{\ref{fig:modified_grad_desc}}$.
   \STATE set 
   $\sigma = \frac{2\Delta_2 \sqrt{ T\log(3/\delta)}}{\epsilon N}$, 
$\Delta_2 = 2(\phi'_{max} + e_f) \sqrt{m}$
\STATE Initialize $\vec{w}_0 = \vec{0}$\\
   \FOR {$i = 0$ to $T-1$}
    \STATE
$
\begin{aligned}
&\vec{w}_{i+1} \gets  \vec{w}_i
- \eta_{\ref{fig:modified_grad_desc}} \Bigl ( 2 \lambda P_\kappa(\Theta - \|\vec{w}_i\|^2)\vec{w}_i
        \\
&+ \Bigl ( \frac{1}{N} \sum_{(\vec{x}, y) \in D}
        \tilde{p}'(\langle \vec{w}_i, \vec{x} \rangle, y) \vec{x} \Bigr )
 + \boldsymbol{\chi}_i  \Bigr )
\end{aligned}
$

    \STATE where $\boldsymbol\chi_i \sim \mathcal{N}(0, \sigma^2 \vec{I}_m)$.
    \ENDFOR
\STATE {\bfseries Output:} $\vec{w}_T$
\end{algorithmic}
\end{algorithm}

\begin{restatable}{theorem}{DiffPriv} [Differential Privacy of Algorithm~\ref{fig:modified_grad_desc}] \label{th:dp_no_clipping}
Fix constants $e_f \in [0,\phi'_{max}], e_B \geq 0, \kappa \in (0,1)$. Let
$\zeta_f = e_f \sqrt{m}$,
$d \geq \frac{\|\grad f(\vec{0})\|}{\sqrt{m}}$, $\mathsf{c}_\delta := \sqrt{2 \ln(\frac{3T}{\delta})}$, and
$R := \sqrt{(1-\kappa)\Theta} + \eta_{\ref{fig:modified_grad_desc}}\left(  \phi'_{max}\sqrt{m} + \zeta_f +
2\lambda e_B \sqrt{\Theta}
+ (\sqrt{m} + \mathsf{c}_\delta)\sigma \right)$.
Let $\mathsf{m}_P$, (resp. $\mathsf{M}_P$) be the minimum
(resp.~maximum) value of $P_\kappa$ on the interval
$[\Theta - R^2, \kappa \Theta]$.
If the following hold:
{\setlength\topsep{0pt}
\setlength\itemsep{0.1pt}
\setlength\parskip{0pt}
\begin{itemize}
\item For $y \in \{0,1\}$,
$\tilde{p}'(z,y)$
has error at most $e_f$ 
with respect to $\phi'(z,y)$
on the interval $z \in [-\sqrt{m}R, \sqrt{m}R]$.
\item $P_\kappa$
has error at most
$e_B$ with respect to $F_\kappa$ on the interval
$[\kappa \Theta, \Theta]$. and is monotonically \emph{decreasing}
on the interval $[\Theta - R^2, \kappa \Theta]$.
\item $\mathsf{m}_P \geq 0$ and $\eta_{\ref{fig:modified_grad_desc}}$ 
satisfies the following inequality:
\begin{equation} \label{eq:eta_rest}
\eta_{\ref{fig:modified_grad_desc}} \leq 
\min \left \{\frac{\kappa \Theta}{\lambda}, \frac{1}{\lambda(\mathsf{M}_P+\mathsf{m}_P) + \frac{(\phi'_{max} - d) \sqrt{m}}{2\sqrt{(1-\kappa)\Theta}}} \right \}.
   \end{equation}
\item $\kappa$ satisfies the following inequality:
\begin{equation} \label{eq:kappa_constraint}
\sqrt{(1-\kappa)\Theta}  \geq \frac{-B + \sqrt{B^2-4AC}}{2A},
\end{equation}
where 
$\alpha = 2\eta_{\ref{fig:modified_grad_desc}}\lambda\mathsf{m}_P$,
$A = (2\alpha - \alpha^2)$,
$B = -2\eta_{\ref{fig:modified_grad_desc}}((1-\alpha)(d\sqrt{m} + \mathsf{c}_\delta \cdot \sigma) + \zeta_f)$,
and $C = -\eta_{\ref{fig:modified_grad_desc}}^2((\sqrt{m}\phi'_{max}+(\sqrt{m} + \mathsf{c}_\delta)\sigma)^2-\zeta_f^2)$.
\end{itemize}}
then
the model, $\vec{w}_T$, outputted 
by Algorithm~\ref{fig:modified_grad_desc} achieves
$(\epsilon, \delta)$-differential privacy.
\end{restatable}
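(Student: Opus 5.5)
The proof has two parts: first a bound on the norm of every iterate, then a standard Gaussian-mechanism-plus-composition argument.

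\textbf{Privacy given bounded iterates.} Condition on a ``good event'' $\mathcal{G}$ on which every iterate satisfies $\|\vec{w}_i\| \le R$. On $\mathcal{G}$, every $z=\langle \vec{w}_i,\vec{x}\rangle$ lies in $[-\sqrt{m}R,\sqrt{m}R]$, because $\vec{x}\in[-1,1]^m$. Hence $|\tilde{p}'(z,y)| \le \phi'_{max}+e_f$ there. The barrier term $2\lambda P_\kappa(\Theta-\|\vec{w}_i\|^2)\vec{w}_i$ does not depend on the data. So replacing one record changes $\frac{1}{N}\sum \tilde{p}'(\langle\vec{w}_i,\vec{x}\rangle,y)\vec{x}$ by at most $2(\phi'_{max}+e_f)\sqrt{m}/N=\Delta_2/N$ in $\ell_2$ norm. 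Each iteration is then a Gaussian mechanism with sensitivity $\Delta_2/N$ and noise $\sigma$. Applying advanced composition, or the moments accountant bound in the form used for DP-GD, over $T$ adaptive steps with the stated $\sigma=\frac{2\Delta_2\sqrt{T\log(3/\delta)}}{\epsilon N}$ gives $(\epsilon,2\delta/3)$-DP on $\mathcal{G}$. The remaining $\delta/3$ is reserved for $\Pr[\mathcal{G}^c]$. To make this rigorous, I would define a modified mechanism that clips the data-dependent term whenever $\|\vec{w}_i\|>R$. This mechanism is always DP, and it agrees with Algorithm~\ref{fig:modified_grad_desc} except on $\mathcal{G}^c$. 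A total-variation argument then adds $\Pr[\mathcal{G}^c]\le\delta/3$ to $\delta$.

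\textbf{Bounding the noise.} Gaussian concentration gives $\|\boldsymbol\chi_i\|\le(\sqrt{m}+\mathsf{c}_\delta)\sigma$ for each $i$ with probability at least $1-\delta/(3T)$, since $\mathsf{c}_\delta=\sqrt{2\ln(3T/\delta)}$. Similarly, the projection of $\boldsymbol\chi_i$ onto the direction $\vec{w}_i$ is at most $\mathsf{c}_\delta\sigma$. A union bound over the $T$ steps gives $\Pr[\mathcal{G}^c]\le\delta/3$, provided the deterministic lemma below holds on the event of bounded noise.

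\textbf{Deterministic lemma.} Assuming bounded noise, I show by induction that $\|\vec{w}_i\|^2\le(1-\kappa)\Theta$ at every step, and that the one-step overshoot is at most $R$. There are two cases.

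In the first case, $\|\vec{w}_i\|^2\le\kappa'$, well inside the ball. The barrier argument $\Theta-\|\vec{w}_i\|^2$ lies in $[\kappa\Theta,\Theta]$, so $|P_\kappa|\le 1/(\kappa\Theta)+e_B$. A single step moves $\vec{w}$ by at most $\eta(\phi'_{max}\sqrt{m}+\zeta_f+2\lambda e_B\sqrt{\Theta}+(\sqrt{m}+\mathsf{c}_\delta)\sigma)$. The condition $\eta\le\kappa\Theta/\lambda$ handles the $1/x$ part of the barrier. Together these give $\|\vec{w}_{i+1}\|\le R$, which is how $R$ is defined.

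In the second case, $\|\vec{w}_i\|^2\in[(1-\kappa)\Theta,R^2]$, i.e.\ the barrier argument lies in $[\Theta-R^2,\kappa\Theta]$. Write $\vec{w}_{i+1}=(1-2\eta\lambda P_\kappa)\vec{w}_i-\eta(\vec{h}_i+\boldsymbol\chi_i)$, where $\vec{h}_i$ is the approximate data gradient. Since $P_\kappa\ge\mathsf{m}_P\ge0$ and $\eta\lambda(\mathsf{M}_P+\mathsf{m}_P)\le1$ by \eqref{eq:eta_rest}, the contraction factor lies in $[-(1-\alpha),1-\alpha]$ with $\alpha=2\eta\lambda\mathsf{m}_P$. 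Expanding $\|\vec{w}_{i+1}\|^2$ gives a quadratic in $r=\|\vec{w}_i\|$. For the cross term I use convexity of $f$: $\langle\grad f(\vec{w}_i),\vec{w}_i\rangle\ge\langle\grad f(\vec{0}),\vec{w}_i\rangle\ge -d\sqrt{m}\,r$. Then $\zeta_f$ accounts for the replacement of $\grad f$ by $\vec{h}_i$, $\mathsf{c}_\delta\sigma$ bounds the noise component along $\vec{w}_i$, and the squared terms give $C$. The requirement $\|\vec{w}_{i+1}\|\le r$ becomes $Ar^2+Br+C\ge0$, which holds for $r\ge\frac{-B+\sqrt{B^2-4AC}}{2A}$. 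Condition \eqref{eq:kappa_constraint} guarantees this at $r=\sqrt{(1-\kappa)\Theta}$ and beyond. The second term in \eqref{eq:eta_rest} is what handles the sign issue when $1-2\eta\lambda P_\kappa$ is small.

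The main obstacle is this second case: getting the cross-term signs right and showing the norm does not grow. This is the step I would check most carefully, and it is where monotonicity of $P_\kappa$ is used, to make the contraction strongest exactly where the norm is largest.
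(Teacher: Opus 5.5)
Your proposal follows the paper's proof essentially step for step. It uses the same high-probability bounds on $\|\boldsymbol\chi_i\|$ and on the component of $\boldsymbol\chi_i$ along $\vec{w}_i$. It uses the same induction, split at $\sqrt{(1-\kappa)\Theta}$; the actual invariant is $\|\vec{w}_i\|\le R$, not $\|\vec{w}_i\|^2\le(1-\kappa)\Theta$ as you first wrote. In the inner case you use $\eta\le\kappa\Theta/\lambda$. In the outer case you use the convexity bound $\langle\nabla f(\vec{w}_i),\vec{w}_i\rangle\ge -d\sqrt{m}\,\|\vec{w}_i\|$, then the second term of (\ref{eq:eta_rest}) to dominate the case where $1-2\eta\lambda P_\kappa$ is negative, and (\ref{eq:kappa_constraint}) to get $\|\vec{w}_{i+1}\|\le\|\vec{w}_i\|$. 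This is followed by the $\Delta_2/N$ sensitivity bound and zCDP composition.

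The remaining fixes are bookkeeping:
\begin{itemize}
\item The stated $\sigma$, built with $\log(3/\delta)$, already gives $(\epsilon,\delta/3)$-DP on the good event.
\item The two per-step noise bounds at level $\delta/(3T)$ fail with total probability up to $2\delta/3$, not $\delta/3$.
\item So the budget split must be $\delta/3+2\delta/3$, as in the paper.
\item Monotonicity of $P_\kappa$ plays no role in the outer case. That case only uses $\mathsf{m}_P\le P_\kappa\le\mathsf{M}_P$.
\end{itemize}
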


\vspace{3mm}

\begin{remark}[Role of monotonicity]
Monotonicity of $P_\kappa$ on
$[\Theta - R^2, \Theta]$ is not required for DP.
However, enforcing monotonicity simplifies the choice of
$\kappa$ and $\eta_{\ref{fig:modified_grad_desc}}$ satisfying
(\ref{eq:kappa_constraint}) and (\ref{eq:eta_bound}), and is used in our
convergence analysis.
\end{remark}

\begin{remark}[Feasibility of parameters]
For fixed $(\varepsilon,\delta), e_f, e_B, d$, 
constraints~(\ref{eq:eta_rest}) and~(\ref{eq:kappa_constraint}) can always be
satisfied by choosing $\kappa$ and
$\eta_{\ref{fig:modified_grad_desc}}$ sufficiently small.
The required polynomials $\tilde{p}'$ and $P_\kappa$ can likewise be constructed with
sufficiently high degree to achieve the desired error (and monotonicity) over
the specified intervals~\cite{DeVore1977,DeVoreYu1985}.
See Appendix~\ref{sec:choosing_parameters} for details.
\end{remark}

\begin{remark}[Data-dependent refinement]
The Client may compute
$d = \|\nabla f(\vec{0})\|/\sqrt{m}$ and release it to the Server in a
differentially private manner.
When $d \ll \max\{\phi'_0(0), \phi'_1(0)\}$, as observed empirically, this yields
significantly improved parameter choices.
We adopt this refinement in our experiments, and account for releasing it in our privacy budget, which essentially has the effect of increasing the number of GD iterations by 1.
\end{remark}

To prove Theorem~\ref{th:dp_no_clipping}, we first establish a bound on the magnitude of 
$\|\vec{w}_i\|$ in each iteration
of Algorithm~\ref{fig:modified_grad_desc}.
\begin{restatable}{lemma}{WeightNorm} \label{lem:weight_magnitude}
    (Bounding magnitude of weights)
Let $\vec{w}_i$ denote the $i$-th iterate of Algorithm~\ref{fig:modified_grad_desc}. Under the parameter conditions of Theorem~\ref{th:dp_no_clipping}, with probability at least $1-\tfrac{2\delta}{3}$, the following holds for all $i \in \{0,\ldots,T\}$:
\vspace{-2mm}
\[
\begin{aligned}
\|\vec{w}_i\| \leq &
\sqrt{(1-\kappa)\Theta} \\ &+ \eta_{\ref{fig:modified_grad_desc}}\left(  \phi'_{max}\sqrt{m} + \zeta_f +
2\lambda e_B \sqrt{\Theta}
+ (\sqrt{m} + \mathsf{c}_\delta)\sigma \right).
\end{aligned}
\]
\vspace{-4mm}
\end{restatable}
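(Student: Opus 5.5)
The plan is an induction on $i$ carried out on a single high-probability event for the noise. Write $r := \sqrt{(1-\kappa)\Theta}$ and let $R$ be as in Theorem~\ref{th:dp_no_clipping}.

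\textbf{The noise event.} For each $i$, the Gaussian concentration bound for $\|\boldsymbol\chi_i\|$ gives $\|\boldsymbol\chi_i\| \le (\sqrt{m} + \mathsf{c}_\delta)\sigma$ with probability at least $1-\exp(-\mathsf{c}_\delta^2/2) = 1-\delta/(3T)$. The projection bound for a fixed direction gives $|\langle \boldsymbol\chi_i, \vec{w}_i/\|\vec{w}_i\|\rangle| \le \mathsf{c}_\delta\sigma$ with probability at least $1-\delta/(3T)$. This uses that $\boldsymbol\chi_i$ is independent of $\vec{w}_i$. A union bound over the $T$ iterations and the two events gives a total failure probability of $2\delta/3$. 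The rest of the argument is deterministic on this event.

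\textbf{The inductive claim.} I will prove that $\|\vec{w}_i\| \le R$ for every $i$. Moreover, whenever $\|\vec{w}_i\| \le r$, the next iterate satisfies $\|\vec{w}_{i+1}\| \le R$ directly. The base case holds since $\vec{w}_0=\vec{0}$. For the step, the induction hypothesis puts every $\langle \vec{w}_i,\vec{x}\rangle$ in $[-\sqrt{m}R,\sqrt{m}R]$, because $\vec{x}\in[-1,1]^m$. Hence $\tilde p'$ is within $e_f$ of $\phi'$ there, and the data term has norm at most $\phi'_{max}\sqrt{m}+\zeta_f$. I then split into two cases.

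\textbf{Case 1: $\|\vec{w}_i\|^2 \le (1-\kappa)\Theta$.} Then $\Theta-\|\vec{w}_i\|^2 \in [\kappa\Theta,\Theta]$, so $P_\kappa$ is within $e_B$ of $1/x$. This gives $0 < \lambda P_\kappa \le \lambda/(\kappa\Theta) + \lambda e_B$. The condition $\eta \le \kappa\Theta/\lambda$ then makes $1 - 2\eta\lambda/(\kappa\Theta)$ large enough that the contraction factor $|1-2\eta\lambda P_\kappa|$ is at most $1$, up to the $e_B$ error. The $e_B$ error contributes at most $2\eta\lambda e_B\|\vec{w}_i\| \le 2\eta\lambda e_B\sqrt{\Theta}$. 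Applying the triangle inequality gives
\[
\|\vec{w}_{i+1}\| \le r + \eta\bigl(\phi'_{max}\sqrt m+\zeta_f+2\lambda e_B\sqrt\Theta+(\sqrt m+\mathsf{c}_\delta)\sigma\bigr) = R .
\]

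\textbf{Case 2: $r < \|\vec{w}_i\| \le R$.} In this case $\Theta-\|\vec{w}_i\|^2 \in [\Theta-R^2,\kappa\Theta]$, so $\mathsf{m}_P \le P_\kappa \le \mathsf{M}_P$. The goal is to show $\|\vec{w}_{i+1}\| \le \|\vec{w}_i\|$, or at least $\|\vec{w}_{i+1}\| \le r$, so that the iterate is pulled back inward.

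First expand
\[
\|\vec{w}_{i+1}\|^2 = (1-2\eta\lambda P)^2\|\vec{w}_i\|^2 - 2\eta(1-2\eta\lambda P)\langle \vec{w}_i, \vec{v}\rangle + \eta^2\|\vec{v}\|^2,
\]
where $\vec{v}$ is the data-gradient term plus the noise. The first part of the $\eta$ condition ensures $0 \le 1-2\eta\lambda P \le 1-\alpha$, with $\alpha = 2\eta\lambda\mathsf{m}_P$.

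For the cross term, I will not bound $\langle\vec{w}_i,\nabla f(\vec{w}_i)\rangle$ crudely. Instead I use convexity: $\langle \nabla f(\vec{w}),\vec{w}\rangle \ge \langle\nabla f(\vec{0}),\vec{w}\rangle \ge -d\sqrt m\|\vec{w}\|$. The approximation error is at most $\zeta_f\|\vec{w}\|$, and the projected noise is at most $\mathsf{c}_\delta\sigma\|\vec{w}\|$. Writing $s=\|\vec{w}_i\|$, these bounds give
\[
\|\vec{w}_{i+1}\|^2 - s^2 \le -(2\alpha-\alpha^2)s^2 + 2\eta\bigl((1-\alpha)(d\sqrt m+\mathsf{c}_\delta\sigma)+\zeta_f\bigr)s + \eta^2(\cdots)^2 = -(As^2+Bs+C') ,
\]
with $A$, $B$, $C$ exactly as in the theorem. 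The matching of the constant term needs care, which is presumably why $\zeta_f^2$ is subtracted in $C$.

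The quadratic $As^2+Bs+C$ is nonnegative for $s$ at least its positive root. Condition~(\ref{eq:kappa_constraint}) says $r$ is at least that root, so for all $s > r$ the norm does not increase, and $\|\vec{w}_{i+1}\| \le \|\vec{w}_i\| \le R$. The second part of the $\eta$ bound, involving $\mathsf{M}_P+\mathsf{m}_P$ and the $(\phi'_{max}-d)$ term, is what guarantees non-expansion. It prevents overshoot, so that the factor $1-2\eta\lambda P$ stays nonnegative and the dependence on $P$ is monotone. Monotonicity of $P_\kappa$ lets me replace $P$ by its extreme values.

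\textbf{Main obstacle.} The hardest step is Case 2: making the algebra land exactly on the stated $A$, $B$, $C$, and pinning down the precise role of the $\eta$ constraint. If a direct match fails, I would fall back to proving the weaker statement that $s\mapsto\|\vec{w}_{i+1}\|$ is nonincreasing beyond $r$. That statement still yields the lemma.
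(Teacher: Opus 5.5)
Your overall structure (the union-bounded noise event, induction on $\|\vec{w}_i\|\le R$, Case 1, and the convexity bound $\langle\nabla f(\vec{w}_i),\vec{w}_i\rangle\ge -d\sqrt m\,\|\vec{w}_i\|$) matches the paper. However, Case 2 has a genuine gap.

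You assert that the step-size conditions give $1-2\eta\lambda P_\kappa(\Theta-\|\vec{w}_i\|^2)\ge 0$, and that the second part of (\ref{eq:eta_rest}) ``prevents overshoot.'' Neither is true.
The first part only gives $V:=2\eta\lambda P\le 2\kappa\Theta P$, and $\kappa\Theta\,\mathsf{m}_P\approx 1$. The second part only gives $2\eta\lambda\mathsf{M}_P\le 2\mathsf{M}_P/(\mathsf{M}_P+\mathsf{m}_P)$. That bound exceeds $1$ whenever $\mathsf{M}_P>\mathsf{m}_P$, which is the generic case: $P_\kappa$ is decreasing, so $\mathsf{M}_P=P_\kappa(\Theta-R^2)$ can be much larger than $\mathsf{m}_P$.

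So the barrier step may carry $\vec{w}_i$ through the origin, and then your cross-term estimate points the wrong way. With $1-V<0$, the term $-2\eta(1-V)\langle\vec{w}_i,\vec{v}\rangle$ must be controlled by the \emph{upper} bound $\langle\vec{w}_i,\nabla f(\vec{w}_i)+\boldsymbol\chi_i\rangle\le(\phi'_{max}\sqrt m+\mathsf{c}_\delta\sigma)s$, not by the lower bound involving $d$.

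The paper treats both regimes. For $1-V>0$ the worst case is $V=\alpha$ with the $d$-bound. For $1-V<0$ it is $V=2\eta\lambda\mathsf{M}_P$ with the $\phi'_{max}$-bound. It then shows the overshoot case is dominated: using $d\le\phi'_{max}$, it suffices that $(2\eta\lambda\mathsf{M}_P-1)s+\eta(\phi'_{max}\sqrt m+\mathsf{c}_\delta\sigma)\le(1-\alpha)s+\eta(d\sqrt m+\mathsf{c}_\delta\sigma)$. This rearranges exactly to the second part of (\ref{eq:eta_rest}) once $s\ge\sqrt{(1-\kappa)\Theta}$.

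That domination, not nonnegativity of $1-V$, is what the $\mathsf{M}_P+\mathsf{m}_P$ and $(\phi'_{max}-d)$ terms provide. Without it, your argument does not cover the parameter range the theorem allows.

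Second, the quadratic you obtain does not match the stated one, so (\ref{eq:kappa_constraint}) does not close your argument. Keeping the polynomial error inside $\vec{v}$ and then squaring gives constant term $-\eta^2(G+\zeta_f)^2$, with $G=\sqrt m\,\phi'_{max}+(\sqrt m+\mathsf{c}_\delta)\sigma$. This is below $C=-\eta^2(G^2-\zeta_f^2)$ by $2\eta^2\zeta_f(G+\zeta_f)$. Hence at the smallest $r$ permitted by (\ref{eq:kappa_constraint}), your inequality fails whenever $\zeta_f>0$.

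The stated $B$ and $C$ come from first splitting off the error with the triangle inequality, $\|\vec{w}_{i+1}\|\le\|(1-V)\vec{w}_i-\eta(\nabla f(\vec{w}_i)+\boldsymbol\chi_i)\|+\eta\zeta_f$. One then requires the first norm to be at most $s-\eta\zeta_f$. Squaring $s-\eta\zeta_f$ is what produces the $-2\eta\zeta_f s$ in $B$ and the $-\zeta_f^2$ in $C$.

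Your proposed fallback does not sidestep this, since non-expansion beyond $r$ is precisely the statement that requires the quadratic condition.
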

We note that Lemma~\ref{lem:weight_magnitude}---under the parameter settings of
Theorem~\ref{th:dp_no_clipping}---guarantees that the weight bound fails with probability at most $\frac{2\delta}{3}$, and hence holds with probability at least $1-\frac{2\delta}{3}$. Conditioned on this event, Algorithm~\ref{fig:modified_grad_desc} satisfies $(\epsilon,\delta/3)$-DP. Hence, by a union bound, the overall algorithm satisfies $(\epsilon,\delta)$-DP. The proof of Lemma~\ref{lem:weight_magnitude} is deferred to Appendix \ref{sec:proof_Lemma_bounding_weights}.

Lemma~\ref{lem:weight_magnitude} implies that
$|\langle \vec{w}_i, \vec{x} \rangle| \leq \sqrt{m}R$, which allows us to fix the approximation interval for $\tilde{p}'(z,y)$ and bound the sensitivity.

Appendix~\ref{sec:sensitivy_bound} shows that the $\ell_2$ sensitivity of
$\nabla g(\vec{w}_i)$ is
$\frac{\Delta_2}{N}$, where
$\Delta_2 \le 2(\phi'_{\max}+e_f)\sqrt{m}$.
Using the zCDP-based composition analysis of
\cite{jayaraman2018distributed,bun2016concentrated} and the inequality
$\sqrt{\log(3/\delta)+\epsilon}-\sqrt{\log(3/\delta)}
\ge \epsilon/(2\sqrt{2\log(3/\delta)})$ for
$\epsilon \le \log(3/\delta)$,
Algorithm~\ref{fig:modified_grad_desc} is $(\epsilon,\delta)$-DP provided
$\sigma \ge \frac{2\Delta_2 \sqrt{T\log(3/\delta)}}{\epsilon N}$.

\sloppy
\vspace{-0.5em}\paragraph{Convergence of Algorithm~\ref{fig:modified_grad_desc}.}
To apply Theorem~\ref{thm:dp_converge}, which establishes convergence of
approximate gradient descent with DP noise, to
Algorithm~\ref{fig:modified_grad_desc}, we extend the objective
$f^{+B}_\kappa$ to the domain $\{\vec{w} : \|\vec{w}\| \le R\}$.
We do so by extending the domain of $F_\kappa(x)$ to
$x \in [\Theta - R^2, \Theta]$ and defining $B_\kappa(x)$ as its antiderivative.
We further extend $F_\kappa$ so that the resulting objective
$f^{+B}_\kappa$ is strongly convex and has bounded smoothness, enabling the
application of Theorem~\ref{thm:dp_converge}.
Finally, the maximum approximation error between $F_\kappa$ and its polynomial
approximation $P_\kappa$ over this enlarged interval remains bounded by $e_B$.
As a result, the gradient approximation error satisfies
$\zeta \le \zeta_f + 2\lambda e_B R$.
See Appendix~\ref{sec:modified_converge}.

\section{Approximating Polynomials and Hyperparameter Selection}
\label{sec:guidance_of_approx}

We first consider the non-DP setting, where training follows
Algorithm~\ref{fig:grad_desc}. By Theorem~\ref{thm:npdp_converge}, the number of
iterations \(T\) required for convergence grows with the smoothness parameter
\(\beta_g\) of the approximating polynomial, while the final optimization error
$\|\vec{w}_{g,T} - \vec{w}^*_f\|^2$
is primarily controlled by \(\zeta\), which captures the approximation error.
We empirically illustrate the tradeoff between smoothness/convergence rate and approximation/optimization error in Appendix~\ref{sec:guidance_smooth}, showing that the minimax and least-squares polynomial approximations of a fixed degree exemplify this effect.

Appendix~\ref{sec:guidance_interval} further shows that even mild underestimation of the
approximation interval can cause divergence. In the FHE setting, however, the
Server cannot detect when inputs fall outside this interval. In
Lemma~\ref{lem:weight_magnitude}, we show that modifying the objective function
(as in Algorithm~\ref{fig:modified_grad_desc}) allows upperbounding the magnitude of the weights so that intervals can be fixed
\emph{a priori} with provable correctness; the non-DP case follows by setting
\(\sigma=0\) in Algorithm~\ref{fig:modified_grad_desc}.

Beyond illustrating these tradeoffs empirically, Appendix~\ref{sec:guide_modified_GD} provides data-independent guidance for selecting the polynomial approximation (e.g., degree and approximation interval) and hyperparameters (e.g., $\eta_{\ref{fig:modified_grad_desc}}, \Theta, \lambda, \kappa$) such that all conditions of Theorem~\ref{th:dp_no_clipping} are satisfied. This procedure enables DP training under FHE via Algorithm~\ref{fig:modified_grad_desc}.

\section{Experimental Results on DP FHE Training}
\label{sec:fhe_experiments}

\begin{figure}[tb]
    \centering
    \includegraphics[width=1\linewidth]
{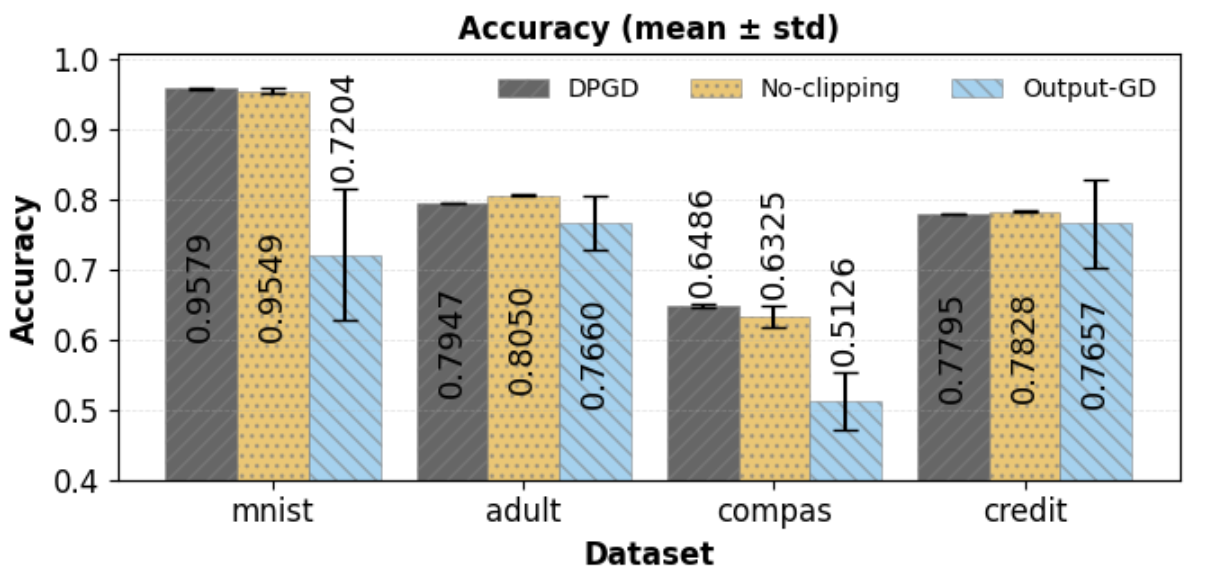}
    \vskip 0.1in
    \caption{Visual comparison of model accuracy with error bars over 100 runs across four datasets under a privacy budget of ($\epsilon=1$, $\delta=10^{-5}$). Model 1: standard DP-GD; 
    Model 2: augmented no-clipping DP-GD, where the sigmoid and barrier functions are respectively replaced by a 7th or 9th-degree polynomial and a 4th-degree polynomial; 
    Model 3: DP model using output perturbation (Algorithm 2 in \cite{wu2017bolt}).
    Error bars represent standard deviation across runs. (see Table \ref{tb:no-clipping_nofhe} for numerical values.)}
    \vspace{-1em}
\label{fig:dp_no_fhe_accuracy}
\end{figure}


We evaluate our proposed no-clipping approach (Algorithm~\ref{fig:modified_grad_desc}) against standard DP-(S)GD, which enforces differential privacy via per-sample gradient clipping. 
For training under FHE, we use an SGD-based update for efficiency and compute the required noise scale using the subsampled DP analysis of~\cite{wang2019subsampled}. We first assess both methods in a plaintext setting to verify that the no-clipping approach maintains comparable model accuracy. We then implement both algorithms in the FHE domain (as Algorithm~\ref{fig:modified_DP-GD_fhe} vs Algorithm~\ref{fig:trad_DP-GD_fhe} in Section~\ref{sec:protocols}) to compare runtime and accuracy. In the FHE setting, we conduct experiments under two security configurations with different ring dimensions (RingDim = 32768 and RingDim = 131072). While the two methods achieve similar accuracy, our approach reduces the multiplicative depth per iteration from 24 to 9, resulting in roughly a $3\times$ reduction (RingDim = 32768) and a $5\times$ reduction (RingDim = 131072) in computational cost (Tables~\ref{tb:no-clipping_fhe}, \ref{tb:no-clipping_fhe_128}, and \ref{tab:depth-cost_analysis}). 


\textbf{Data Preprocessing:} We applied standard preprocessing steps to handle missing values, encode categorical variables numerically, and normalize all features to the range $[-1, 1]$. To mitigate resource and computational constraints, high-dimensional datasets, such as MNIST, were compressed via Principal Component Analysis (PCA)~\cite{PCA, Auto_dimension_PCA}, using the scikit-learn implementation~\cite{Scikit-learn}. We do not include PCA in the privacy accounting of our DP guarantee. Since it is applied uniformly to both our method and the DP-SGD baseline, DP-PCA is orthogonal to our analysis and does not affect comparisons.


\textbf{Approximation Algorithms:} 
 We approximate the sigmoid and square-root functions using either a minimax polynomial, which minimizes
$\sup_{x\in[a,b]} |f(x)-g(x)|$, or a least-squares polynomial, which minimizes
$\int_a^b (f(x)-g(x))^2\,dx$ over the interval $[a,b]$.
Efficient algorithms for computing optimal minimax and least-squares polynomials of fixed degree on a given interval are well known~\cite{remez1934,bjorck1996numerical}.
The inverse operation is approximated using Taylor expansions or least-squares polynomials.
Per-sample clipping requires a comparison operation, which we approximate following~\cite{mazzone2025efficient}.


\begin{table*}[tb]  
\centering
\caption{Model performance under FHE on an AMD EPYC 9534 with multi-thread parallelism ($\epsilon=1, \delta=10^{-5}$, RingDim = 32768)}
\vskip 0.1in
\label{tb:no-clipping_fhe}
\begin{tabular}{l l c c c c c c}
\toprule
\cmidrule(lr){5-8}
\textbf{Data} & \textbf{Training Model} & \textbf{ACC} & \textbf{AUC} & \textbf{10-threads} & \textbf{20-threads} & \textbf{30-threads} & \textbf{50-threads} \\
\midrule
\multirow{3}{*}{mnist} 
& Algo~\ref{fig:trad_DP-GD_fhe} (DP-SGD) 
& 93.62\% & 98.21\% & 600.1 sec/iter & 338.0 sec/iter & 283.0 sec/iter & 187.2 sec/iter \\
& Algo~\ref{fig:modified_DP-GD_fhe} (No Clip) 
& 93.99\% & 98.22\% & 148.2 sec/iter & 93.0 sec/iter & 86.9 sec/iter & 58.2 sec/iter \\
& Output-SGD
& 74.18\% & 88.59\% & 147.5 sec/iter & 90.1 sec/iter& 82.9 sec/iter & 55.7 sec/iter \\
\midrule
\multirow{3}{*}{credit} 
& Algo~\ref{fig:trad_DP-GD_fhe} (DP-SGD) 
& 78.00\% & 68.88\% & 446.1 sec/iter & 258.7 sec/iter & 227.5 sec/iter & 164.0 sec/iter \\
& Algo~\ref{fig:modified_DP-GD_fhe} (No Clip) 
& 77.99\% & 68.69\% & 132.2 sec/iter & 79.4 sec/iter & 70.4 sec/iter &  53.7 sec/iter \\
& Output-SGD
& 76.76\% & 54.82\% & 133.9 sec/iter & 82.7 sec/iter & 75.7 sec/iter & 53.9 sec/iter \\
\bottomrule
\end{tabular}
\end{table*}

\begin{table*}[tb]  
\centering
\caption{Model performance under FHE on an AMD EPYC 7502 with multi-thread parallelism ($\epsilon=1, \delta=10^{-5}$, RingDim = 131072, 128-bit security)}
\vskip 0.1in
\label{tb:no-clipping_fhe_128}
\begin{tabular}{l l c c c c c c}
\toprule
\cmidrule(lr){5-8}
\textbf{Data} & \textbf{Training Model} & \textbf{ACC} & \textbf{AUC} & \textbf{10-threads} & \textbf{20-threads} & \textbf{30-threads} & \textbf{50-threads} \\
\midrule
\multirow{3}{*}{mnist} 
& Algo~\ref{fig:trad_DP-GD_fhe} (DP-SGD) 
& 93.77\% & 98.20\% & 7996.8 sec/iter & 4240.0 sec/iter & 3504.4 sec/iter & 1813.5 sec/iter \\
& Algo~\ref{fig:modified_DP-GD_fhe} (No Clip) 
& 93.97\% & 98.20\% & 566.7 sec/iter & 442.2 sec/iter & 409.8 sec/iter & 350.5 sec/iter \\
& Output-SGD
& 73.87\% & 86.68\% & 524.5 sec/iter & 434.9 sec/iter& 397.4 sec/iter & 351.1 sec/iter \\
\midrule
\multirow{3}{*}{credit} 
& Algo~\ref{fig:trad_DP-GD_fhe} (DP-SGD) 
& 77.96\% & 69.18\% & 8028.1 sec/iter & 3910.5 sec/iter & 3236.3 sec/iter & 1843.3 sec/iter \\
& Algo~\ref{fig:modified_DP-GD_fhe} (No Clip) 
& 77.99\% & 68.85\% & 521.9 sec/iter & 449.2 sec/iter & 394.1 sec/iter &  343.1 sec/iter \\
& Output-SGD
& 76.76\% & 55.61\% & 551.4 sec/iter & 415.8 sec/iter & 379.9 sec/iter & 342.0 sec/iter \\
\bottomrule
\end{tabular}
\end{table*}
\subsection{Empirical Results}
\label{sec: test_result_fhe}


We first evaluate both algorithms in the plaintext setting to enable a direct comparison of accuracy without confounding errors introduced by FHE. Results in this setting are still subject to randomness from DP noise, data splits, and other sources; therefore, each experiment was repeated 100 times and the results were averaged. We conducted this evaluation on four datasets: Adult~\cite{adult}, Compas~\cite{compas}, Credit~\cite{default_of_credit_card_clients_350}, and a restructured version of MNIST~\cite{mnist, han2018_git}. For MNIST, we considered a binary classification task distinguishing digits 3 and 8. As shown in Figure~\ref{fig:dp_no_fhe_accuracy} and Table~\ref{tb:no-clipping_nofhe} (in Appendix), the no-clipping approach achieves accuracy comparable to standard DP-GD (e.g., a maximum drop of 1.61\% in accuracy and 1.05\% in AUC.)

We next analyze both methods in the FHE setting.
As shown in Table~\ref{tb:no-clipping_fhe} and \ref{tb:no-clipping_fhe_128}, our proposed method attains accuracy comparable to standard DP-GD while matching the runtime efficiency of Output-GD, yielding at least a $3\times$ and $5\times$ speedup over DP-GD for RingDim = 32768 and 131072, respectively. Unlike Output-GD, which trades accuracy for speed, our approach preserves strong model performance. 
To accommodate limited computational resources, each iteration trains on a random subsample of 100 data points for DP-GD and Output-GD (98 for our method). Notably, as fewer parallel threads are used (i.e., each thread handles more data), the relative speedup over DP-GD increases, demonstrating superior scalability.\footnote{Experiments can be reproduced via: \href{https://github.com/dpfhe096-design/dp_fhe}{github.com/dpfhe096-design/dp\_fhe}(including e.g., $T, \eta$, and FHE-related settings). 
}

\textbf{Multiplicative Depth Analysis}
Table~\ref{tb:no-clipping_fhe} shows that our method trains significantly faster than standard DP-GD. To explain this, we analyze the computational complexity in terms of multiplicative depth, a key metric drives overhead and noise growth in FHE (see Appendix Table~\ref{tab:depth-cost_analysis}). Standard DP-GD (Algorithm~\ref{fig:trad_DP-GD_fhe}) is bottlenecked by costly per-sample gradient clipping under FHE. Our algorithm (Algorithm~\ref{fig:modified_DP-GD_fhe}) removes clipping and instead uses a lightweight "Barrier Calculation" (depth 6) running once per iteration in parallel with simplified gradient (depth 8). The critical path depth is thus $\max(8,6)=8$. Including the weight update, total depth per iteration drops from 24 to 9, a $2.5\times$ reduction that explains the observed FHE speedup.

\textbf{Observations.} Our empirical comparison shows that our proposed no-clipping
algorithm outperforms the traditional clipping-based approach for DP-ML under FHE.
Both methods achieve comparable accuracy, with the no-clipping model at most a 0.01\% accuracy drop and a 0.33\% AUC decrease.
Compared to output perturbation (Output-GD), which runs standard (no-DP) GD under FHE and adds noise to the output, our method essentially matches its computation time while delivering significantly higher accuracy.

Beyond efficiency, our approach addresses fundamental limitations of
DP-GD under FHE. The standard approach requires a polynomial
approximation of \(\phi'\) over an input interval determined by the
data-dependent hypothesis value \(z=\langle \vec{w}, \vec{x} \rangle\), which
varies dynamically during training. If this interval is chosen incorrectly, the
approximation of \(\phi'\), and thus also the approximate squared gradient norm, may become unbounded, leading to divergence or severe
errors in training. For example, during MNIST training, using a
square-root approximation over \([0,10]\) caused divergence after only 10
iterations. Under FHE, such failures cannot be remedied by iteratively retuning
approximation intervals, since interactive corrections are not possible. In
contrast, our method explicitly bounds \(z\) and provides formal guarantees on
the required approximation intervals (see
Theorem~\ref{th:dp_no_clipping}).

These issues affect not only convergence but also DP. In
particular, clipping involves 
normalization of the gradient,
which relies on polynomial approximations of \(1/x\).
If the gradient norm falls outside the approximation interval, this normalization
can become arbitrarily inaccurate, and the effective clipping threshold can no
longer be assumed to equal \(C\). As a result, the DP sensitivity bound may be
violated. Our no-clipping approach explicitly accounts for all approximation
errors and therefore yields provable differential privacy guarantees (see
Theorem~\ref{th:dp_no_clipping} and Algorithm~\ref{fig:modified_grad_desc}).

Finally, the no-clipping method offers substantial practical gains under FHE,
reducing multiplicative depth by more than 2.5$\times$ and cutting training time
by more than one-third, making it both theoretically rigorous and practically
viable for privacy-preserving ML.

\section{Discussion and Limitations}

This work provides the first formal analysis of DP for gradient descent with polynomial approximations in the FHE setting. We introduce an efficient training algorithm that is provably DP even in the presence of polynomial approximation error. We establish convergence and end-to-end privacy under encryption, and show that polynomial smoothness and approximation error govern convergence and model utility, enabling principled choices of approximation scheme, degree, and interval. Together, these results strengthen the theory and practice of private, encrypted learning.

Our combined convergence and DP analysis, however, is currently limited to smooth, convex objectives\footnote{We note that adding the barrier function to the objective induces strong convexity over the interior of the feasible region.}, where iterate norms can be tightly controlled to enable sensitivity analysis. Extending these results to general neural networks is an important direction for future work. The core mechanism, augmenting the objective with a barrier, may extend beyond convex settings by helping bound weight norms, stabilize activations, and support sensitivity control under polynomial approximations, though formal guarantees remain open for future work.

\section*{Impact Statement}
\label{sec:impact_statement}
This paper advances machine learning by enabling fully private model training through the combination of Fully Homomorphic Encryption (FHE) and Differential Privacy (DP). We provide the first theoretical convergence analysis for gradient descent with polynomial approximations compatible with FHE, together with a differentially private training algorithm designed for this encrypted setting. Our approach enables learning directly on encrypted data while also producing models that are themselves differentially private, protecting individuals not only during training but also against information leakage in downstream use.

By removing the need for costly per-sample gradient clipping, our methods substantially improve the practicality of secure, outsourced learning for data owners who lack computational resources, particularly in sensitive domains such as healthcare, finance, and government services. Although FHE-based training incurs higher computational costs than plaintext learning, our work reduces these overheads and improves scalability, bringing end-to-end private learning closer to real-world deployment. We believe the societal benefits of enabling verifiable, confidential training and privacy-preserving model release, without requiring trust in the training infrastructure, outweigh the remaining computational costs and any modest impact on model utility.

\begin{ack}
The authors thank the anonymous reviewers for their insightful comments and valuable feedback on this work.  Dana Dachman-Soled is supported in part by NSF grants CNS-2154705, CNS-1933033, and IIS2147276 is jointly provided with Min Wu.
We gratefully acknowledge support from the JP Morgan Chase Faculty Research Award.
\vspace{-0.5em}\paragraph{Disclaimer.}
This paper was prepared for informational purposes in part by the Artificial Intelligence Research group of JPMorgan Chase \& Co. and its affiliates (``JP Morgan'') and is not a product of the Research Department of JP Morgan. JP Morgan makes no representation and warranty whatsoever and disclaims all liability, for the completeness, accuracy or reliability of the information contained herein. This document is not intended as investment research or investment advice, or a recommendation, offer or solicitation for the purchase or sale of any security, financial instrument, financial product or service, or to be used in any way for evaluating the merits of participating in any transaction, and shall not constitute a solicitation under any jurisdiction or to any person, if such solicitation under such jurisdiction or to such person would be unlawful.
\end{ack}

\bibliographystyle{plainnat}

\bibliography{refs}

\newpage
\appendix

\onecolumn

\section{Proof of Theorem~\ref{thm:npdp_converge}}
\label{sec:no_dp_converge}

We begin by restating Theorem~\ref{thm:npdp_converge}.

\ConvergenceNoDP*

To prove Theorem~\ref{thm:npdp_converge}, we analyze gradient descent on the
(non-convex) approximate objective $g$. By
Claim~\ref{claim:critical_point}, within $T$ iterations there exists an iterate
$\vec{w}_{g,\tilde t}$, for some $\tilde t \in [T]$, such that
$\|\nabla g(\vec{w}_{g,\tilde t})\| \le \rho$.
Assumption~\ref{ass:grad_dist} then implies
$\|\nabla f(\vec{w}_{g,\tilde t})\| \le \rho + \zeta$.

The same claim guarantees $g(\vec{w}_{g,T}) \le g(\vec{w}_{g,\tilde t})$, and
Assumption~\ref{ass:obj_dist} yields
$f(\vec{w}_{g,T}) \le f(\vec{w}_{g,\tilde t}) + 2\alpha$.
Using the bound on $\|\nabla f(\vec{w}_{g,\tilde t})\|$ and the strong convexity
of $f$, we upper bound
$f(\vec{w}_{g,\tilde t}) - f(\vec{w}_f^*)$, where $\vec{w}_f^*$ is the minimizer
of $f$.
Combining these inequalities gives a bound on
$f(\vec{w}_{g,T}) - f(\vec{w}_f^*)$, which directly implies a bound on
$\|\vec{w}_{g,T} - \vec{w}_f^*\|^2$.
We now present the detailed step-by-step proof.

To prove Theorem~\ref{thm:npdp_converge} we begin with the following claim, restated from~\cite{cme323_lec8}. 

\begin{claim}[In \cite{cme323_lec8}, Theorem 8.3] \label{claim:critical_point}
Fix $\rho >0$ and let $T = \frac{2\beta_g(g(\vec{w}_{g,0}) - L)}{\rho^2}$. Then there exists $\tilde{t} \in [T]$ such that
$g(\vec{w}_{g,T}) \leq g(\vec{w}_{g, \tilde{t}})$ and
$\|\grad g(\vec{w}_{g, \tilde{t}}) \|\leq \rho$,
where $\vec{w}_{g,\tilde{t}}, \vec{w}_{g,T}$ are defined as in Algorithm~\ref{fig:grad_desc} with $\eta_{\ref{fig:grad_desc}} = \frac{1}{\beta_{g}}$.
\end{claim}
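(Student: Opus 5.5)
The plan is to run the standard descent-lemma argument for smooth (possibly non-convex) functions. First, from $\beta_g$-smoothness of $g$ I will derive the quadratic upper bound
\[
g(\vec{v}) \le g(\vec{w}) + \langle \grad g(\vec{w}), \vec{v}-\vec{w}\rangle + \tfrac{\beta_g}{2}\|\vec{v}-\vec{w}\|^2 .
\]
I will get this by integrating the gradient along the segment from $\vec{w}$ to $\vec{v}$. The definition given in the paper is the one-sided monotonicity-type inequality, and that is exactly what the integral $\int_0^1 \langle \grad g(\vec{w}+s(\vec{v}-\vec{w}))-\grad g(\vec{w}), \vec{v}-\vec{w}\rangle\,ds \le \tfrac{\beta_g}{2}\|\vec{v}-\vec{w}\|^2$ requires. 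This is the point to check carefully, since the paper's smoothness definition is weaker than Lipschitz gradients. It still suffices, because the integrand at parameter $s$ is bounded by $\beta_g s\|\vec{v}-\vec{w}\|^2$; one applies the definition to the pair $\vec{w}+s(\vec{v}-\vec{w})$ and $\vec{w}$, then divides by $s$.

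Next, I plug in $\vec{v}=\vec{w}_{g,i+1}=\vec{w}_{g,i}-\frac{1}{\beta_g}\grad g(\vec{w}_{g,i})$. This gives the per-step decrease
\[
g(\vec{w}_{g,i+1}) \le g(\vec{w}_{g,i}) - \frac{1}{2\beta_g}\|\grad g(\vec{w}_{g,i})\|^2 .
\]
Two consequences follow at once. First, the sequence $g(\vec{w}_{g,i})$ is non-increasing. So for every $\tilde t\le T$ we have $g(\vec{w}_{g,T})\le g(\vec{w}_{g,\tilde t})$, which settles the first conclusion for whatever $\tilde t$ is chosen. Second, telescoping over $i=0,\dots,T-1$ gives
\[
\frac{1}{2\beta_g}\sum_{i=0}^{T-1}\|\grad g(\vec{w}_{g,i})\|^2 \le g(\vec{w}_{g,0}) - g(\vec{w}_{g,T}) \le g(\vec{w}_{g,0}) - L .
\]
Here I use the hypothesis $L\le g(\vec{w}_{g,T})$.

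Finally, I argue by contradiction. Suppose $\|\grad g(\vec{w}_{g,i})\|>\rho$ for every $i\in\{0,\dots,T-1\}$. Then the left side exceeds $\frac{T\rho^2}{2\beta_g}=g(\vec{w}_{g,0})-L$, which contradicts the telescoped bound. So some index $\tilde t$ has $\|\grad g(\vec{w}_{g,\tilde t})\|\le\rho$, and by the monotonicity above it also satisfies $g(\vec{w}_{g,T})\le g(\vec{w}_{g,\tilde t})$.

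Two bookkeeping issues remain. One is the indexing: the claim writes $\tilde t\in[T]$ while the witness ranges over $\{0,\dots,T-1\}$. I will handle this either by reading $[T]$ loosely or by noting that the same argument works with the sum shifted by one index. The other is integrality of $T$, which is handled by taking the ceiling. Beyond the smoothness-to-descent-lemma step, neither should present difficulty.
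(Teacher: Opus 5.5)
Your proposal is correct and follows the standard argument: the paper gives no proof of its own but imports the claim from Theorem~8.3 of the cited CME~323 notes, whose proof is exactly your descent lemma (which, as you rightly verify, still follows from the paper's one-sided smoothness definition), the per-step decrease $g(\mathbf{w}_{g,i+1}) \le g(\mathbf{w}_{g,i}) - \frac{1}{2\beta_g}\|\nabla g(\mathbf{w}_{g,i})\|^2$, monotonicity, and telescoping against $L$. On indexing, prefer your first option; the paper defines $[t]$ as the natural numbers up to $t$, so admitting $\tilde t = 0$ is consistent, whereas shifting the telescoping sum by one index would need $L \le g(\mathbf{w}_{g,T+1})$, which the stated hypothesis $L \le g(\mathbf{w}_{g,T})$ does not give unless $L$ is a global lower bound.
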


We now complete the proof of Theorem~\ref{thm:npdp_converge}, given Claim~\ref{claim:critical_point}.

\begin{proof}[Proof of Theorem~\ref{thm:npdp_converge}]
Let $T$, $\tilde{t}$ be as in Claim~\ref{claim:critical_point}.
Then
\begin{align}
&f(\vec{w}_{g,T}) - f(\vec{w}^*_f) =
f(\vec{w}_{g,T}) - f(\vec{w}_{g, \tilde{t}}) + f(\vec{w}_{g, \tilde{t}}) - f(\vec{w}^*_f)\nonumber \\
&=
g(\vec{w}_{g,T}) - g(\vec{w}_{g, \tilde{t}}) +
\left( f(\vec{w}_{g,T}) - f(\vec{w}_{g, \tilde{t}}) - (g(\vec{w}_{g,T}) - g(\vec{w}_{g, \tilde{t}})) \right )+ f(\vec{w}_{g, \tilde{t}}) - f(\vec{w}^*_f)\nonumber \\
&\quad \leq g(\vec{w}_{g,T}) - g(\vec{w}_{g, \tilde{t}}) + 2\alpha + 
f(\vec{w}_{g, \tilde{t}}) - f(\vec{w}^*_f) \label{eq:remove_alpha} \\
&\quad \leq 2\alpha + f(\vec{w}_{g, \tilde{t}}) - f(\vec{w}^*_f), \label{eq:reduce}
\end{align}
where (\ref{eq:remove_alpha}) follows from Assumption~\ref{ass:obj_dist} and
(\ref{eq:reduce}) follows from the fact that 
$g(\vec{w}_{g,T}) \leq g(\vec{w}_{g, \tilde{t}})$.

Since $\|\grad g(\vec{w}_{g, \tilde{t}})\| \leq \rho$,
we have by Assumption~\ref{ass:grad_dist}
that $\|\grad f(\vec{w}_{g, \tilde{t}})\|^2 \leq \rho^2 + 2\zeta \rho + \zeta^2$ and by
the
Polyak–Lojasiewicz condition that
\[
f(\vec{w}_{g, \tilde{t}}) - f(\vec{w}^*_{f}) \leq \frac{\rho^2 + 2\zeta \rho + \zeta^2}{2\mu}.
\]
Substituting into
(\ref{eq:reduce}), we obtain
\[
f(\vec{w}_{g,T}) - f(\vec{w}^*_f) \leq 2\alpha + \frac{\rho^2 + 2\zeta \rho + \zeta^2}{2\mu}.
\]

By the Quadratic-Growth condition, we can also bound,
\begin{align*}
\|\vec{w}_{g,T} - \vec{w}^*_f\|^2 &\leq \frac{2}{\mu} \cdot (f(\vec{w}_{g,T}) - f(\vec{w}^*_f))\\
&= \frac{4\alpha}{\mu} + \frac{\rho^2 + 2\zeta \rho + \zeta^2}{\mu^2}.
\end{align*}
\end{proof}

\subsection{Removing Assumption~\ref{ass:obj_dist}} \label{sec:alpha}

In this section, we will show the following bound:
\begin{align} \label{eq:bound_alpha}
f(\vec{w}_{g,T}) - f(\vec{w}_{g, \tilde{t}})
- (g(\vec{w}_{g,T}) - g(\vec{w}_{g, \tilde{t}})) \leq
\zeta \cdot \|\vec{w}_{g,T} -\vec{w}_{g, \tilde{t}} \|.
\end{align}
This bound allows us to eliminate the additional assumption that
\[
\forall \vec{w} \in \mathcal{D},
| f(\vec{w}) - g(\vec{w}) | \leq \alpha.
\]
Specifically, in the analysis in Appendix~\ref{sec:no_dp_converge} above, (\ref{eq:remove_alpha}) can be replaced with
\[
g(\vec{w}_{g,T}) - g(\vec{w}_{g, \tilde{t}}) + \zeta \cdot \|\vec{w}_{g,T} -\vec{w}_{g, \tilde{t}} \| + 
f(\vec{w}_{g, \tilde{t}}) - f(\vec{w}^*_f),\]
yielding the final bounds of
\[
f(\vec{w}_{g,T}) - f(\vec{w}^*_f) \leq \zeta \cdot \|\vec{w}_{g,T} -\vec{w}_{g, \tilde{t}} \| + \frac{\rho^2 + 2\zeta \rho + \zeta^2}{2\mu}
\]
and
\[
\|\vec{w}_{g,T}) - \vec{w}^*_f\|^2 \leq \frac{2\zeta \cdot \|\vec{w}_{g,T} -\vec{w}_{g, \tilde{t}} \|}{\mu} + \frac{\rho^2 + 2\zeta \rho + \zeta^2}{\mu^2}.
\]

We now prove the upperbound given in (\ref{eq:bound_alpha}):
let $C$ be the line that goes from 
$\vec{w}_{g,\tilde{t}}$ to $\vec{w}_{g, T}$.
This line can be parameterized in terms of a single variable $\tau$ such that $\vec{\ell}(\tau)= \vec{w}_{g,\tilde{t}} + \tau \cdot (\vec{w}_{g, T}-\vec{w}_{g,\tilde{t}})$, where $\tau \in [0, 1]$. 
We have that
\begin{align}
    \int_C \langle \grad f(\vec{w}) - \grad g(\vec{w}), d\vec{w} \rangle &= \int_0^1 \langle \grad f(\vec{\ell}(\tau)) - \grad g(\vec{\ell}(\tau)), \frac{d\vec{\ell}(\tau)}{d\tau} \rangle d\tau \nonumber \\
    &=\int_0^1 \langle \grad f(\vec{\ell}(\tau)) - \grad g(\vec{\ell}(\tau)), \vec{w}_{g,T}-\vec{w}_{g,\tilde{t}} \rangle d\tau \nonumber \\
    &\leq \int_0^1 ||\grad f(\vec{\ell}(\tau)) - \grad g(\vec{\ell}(\tau)))|| \cdot ||\vec{w}_{g,T}-\vec{w}_{g,\tilde{t}}|| d\tau \nonumber \\
    &\leq \int_0^1 \zeta \cdot ||\vec{w}_{g,T}-\vec{w}_{g,\tilde{t}}|| d\tau \label{eq:convex} \\
    &= \zeta \cdot ||\vec{w}_{g,T}-\vec{w}_{g,\tilde{t}})||, \label{eq:line_int_1}
\end{align}
where~(\ref{eq:convex}) follows from Assumption~\ref{ass:grad_dist} holding on the convex set $\mathcal{S}$. Since $\vec{w}_{g,\tilde{t}}, \vec{w}_{g,T} \in \mathcal{S}$, the line segment $\ell(\tau) = \vec{w}_{g,\tilde{t}} + \tau \cdot (\vec{w}_{g, T}-\vec{w}_{g,\tilde{t}})$ lies entirely in $\mathcal{S}$ for all $\tau \in [0,1]$, implying
\[
\|\nabla f(\ell(\tau)) - \nabla g(\ell(\tau))\| \le \zeta \quad \text{for all } \tau \in [0,1].
\]

On the other hand, by the gradient theorem for line integrals
\begin{align}
\int_C \langle \grad f(\vec{w}) - \grad g(\vec{w}), d\vec{w} \rangle &=
f(\vec{w}_{g,T}) - g(\vec{w}_{g,T}) - (f(\vec{w}_{g, \tilde{t}}) -
g(\vec{w}_{g, \tilde{t}}) ) \nonumber \\
&= f(\vec{w}_{g,T}) - f(\vec{w}_{g, \tilde{t}})
- (g(\vec{w}_{g,T}) - g(\vec{w}_{g, \tilde{t}})) \label{eq:line-int}
\end{align}

Combining (\ref{eq:line_int_1}) and (\ref{eq:line-int})we obtain:
\[
f(\vec{w}_{g,T}) - f(\vec{w}_{g, \tilde{t}})
- (g(\vec{w}_{g,T}) - g(\vec{w}_{g, \tilde{t}})) \leq \zeta \cdot ||\vec{w}_{g,T}-\vec{w}_{g,\tilde{t}})||,
\]
as desired.

\section{Proof of Theorem~\ref{thm:dp_converge}}
\label{sec:dp-converge-proof}

We begin by restating Theorem~\ref{thm:dp_converge}.

\ConvergenceDP*

To prove Theorem~\ref{thm:dp_converge}, we analyze the evolution of the expected
squared distance between the iterate $\vec{w}_{g,i}$, obtained by optimizing the
approximate objective $g$, and the optimal solution $\vec{w}_f^*$ of the true
objective $f$.

We begin by expanding $\vec{w}_{g,i+1}$ as the sum of $\vec{w}_{g,i}$, the gradient
update step, and the injected DP noise, and express
$\mathbb{E}\!\left[\|\vec{w}_{g,i+1}-\vec{w}_f^*\|^2\right]$ in terms of
$\vec{w}_{g,i}$. Using Assumption~\ref{ass:grad_dist}, we relate the gradient of
the approximate objective to that of the true objective, replacing
$\nabla g(\vec{w}_{g,i})$ with $\nabla f(\vec{w}_{g,i})$ plus the error term, $\zeta$.

This reduction allows us to apply standard convexity, smoothness, and expected curvature
properties of $f$ to bound the expected progress made in each iteration,
following a classical gradient descent convergence argument. Iterating this
bound for $i=0,\ldots,T-1$ yields a convergence guarantee on
$\mathbb{E}\!\left[\|\vec{w}_{g,T}-\vec{w}_f^*\|^2\right]$, completing the proof.

\begin{proof}
Below, we set $\eta_{\ref{fig:DP_grad_desc}} \leq \frac{1}{2 \beta_{f}}$. 

We first bound $\mathbb{E}[\|\vec{w}_{g, i+1} - \vec{w}^*_f\|^2]$ in terms of $\mathbb{E}[\|\vec{w}_{g, i} - \vec{w}^*_f\|^2]$. We begin by expanding
$\mathbb{E}[\|\vec{w}_{g, i+1} - \vec{w}^*_f\|^2]$:
\begin{align}
\mathbb{E}[\|\vec{w}_{g, i+1} - \vec{w}^*_f\|^2] &= \mathbb{E}[\| \vec{w}_{g, i} - \eta_{\ref{fig:DP_grad_desc}} \grad g(\vec{w}_{g, i}) - \eta_{\ref{fig:DP_grad_desc}} \boldsymbol\chi_i - \vec{w}^*_f\|^2] \nonumber \\
&= \mathbb{E}[\|\vec{w}_{g, i} - \vec{w}^*_f \|^2] - 2\eta_{\ref{fig:DP_grad_desc}} \mathbb{E}[\langle \grad g(\vec{w}_{g, i}), \vec{w}_{g, i} - \vec{w}^*_f \rangle] + 
\eta_{\ref{fig:DP_grad_desc}}^2\mathbb{E}[\|\grad g(\vec{w}_{g, i})\|^2] + \eta_{\ref{fig:DP_grad_desc}}^2 m \sigma^2 \label{eq:expand}
\end{align}
We next
replace the term
$- 2\eta_{\ref{fig:DP_grad_desc}} \mathbb{E}[\langle \grad g(\vec{w}_{g, i}), \vec{w}_{g, i} - \vec{w}^*_f \rangle]$ 
in (\ref{eq:expand})
with $- 2\eta_{\ref{fig:DP_grad_desc}} \mathbb{E}[\langle \grad f(\vec{w}_{g, i}), \vec{w}_{g, i} - \vec{w}^*_f \rangle] + 
2 \eta_{\ref{fig:DP_grad_desc}} \mathbb{E}[\langle \vec{w}_{g, i} - \vec{w}^*_f, \grad f(\vec{w}_{g, i}) - \grad g(\vec{w}_{g, i}) \rangle]$
to obtain:
\begin{align}
(\ref{eq:expand}) &= \mathbb{E}[\|\vec{w}_{g, i} - \vec{w}^*_f \|^2] - 2\eta_{\ref{fig:DP_grad_desc}} \mathbb{E}[\langle \grad f(\vec{w}_{g, i}), \vec{w}_{g, i} - \vec{w}^*_f \rangle]\nonumber \\ 
&\quad \quad
+ \eta_{\ref{fig:DP_grad_desc}}^2\mathbb{E}[\|\grad g(\vec{w}_{g, i})\|^2] + 
2 \eta_{\ref{fig:DP_grad_desc}} \mathbb{E}[\langle \vec{w}_{g, i} - \vec{w}^*_f, \grad f(\vec{w}_{g, i}) - \grad g(\vec{w}_{g, i}) \rangle]
+ \eta_{\ref{fig:DP_grad_desc}}^2 m \sigma^2 \label{eq:switch_f}
\end{align}
We next use expected curvature, convexity, and smoothness to upper bound the term $-2\eta_{\ref{fig:DP_grad_desc}} \mathbb{E}[\langle \grad f(\vec{w}_{g, i}), \vec{w}_{g, i} - \vec{w}^*_f\rangle]$ as follows:
\begin{align}
-2\eta_{\ref{fig:DP_grad_desc}} \mathbb{E}[\langle \grad f(\vec{w}_{g, i}), \vec{w}_{g, i} - \vec{w}^*_f\rangle] &=  - \eta_{\ref{fig:DP_grad_desc}} \mathbb{E}[\langle \grad f(\vec{w}_{g, i}), \vec{w}_{g, i} - \vec{w}^*_f \rangle] - \eta_{\ref{fig:DP_grad_desc}} \mathbb{E}[\langle \grad f(\vec{w}_{g, i}), \vec{w}_{g, i} - \vec{w}^*_f \rangle] \nonumber \\
&\leq 
- \eta_{\ref{fig:DP_grad_desc}} \nu \mathbb{E}[\|\vec{w}_{g, i} - \vec{w}^*_f\|^2] -
\eta_{\ref{fig:DP_grad_desc}} \mathbb{E}[\langle \grad f(\vec{w}_{g, i}), \vec{w}_{g, i} - \vec{w}^*_f \rangle] \label{eq:strong_convexity} \\
&\leq 
- \eta_{\ref{fig:DP_grad_desc}} \nu \mathbb{E}[\|\vec{w}_{g, i} - \vec{w}^*_f\|^2] -
\eta_{\ref{fig:DP_grad_desc}} \mathbb{E}[f(\vec{w}_{g, i}) - f(\vec{w}^*_f)]] \label{eq:convexity}  \\
&\leq - \eta_{\ref{fig:DP_grad_desc}} \nu \mathbb{E}_[\|\vec{w}_{g, i} - \vec{w}^*_f\|^2]
- \frac{\eta_{\ref{fig:DP_grad_desc}}}{2\beta_f} \mathbb{E}[\|\grad f(\vec{w}_{g, i})\|^2] 
\label{eq:smoothness} \\
&\leq
- \eta_{\ref{fig:DP_grad_desc}} \nu \mathbb{E}[\|\vec{w}_{g, i} - \vec{w}^*_f\|^2]
- 2 \eta_{\ref{fig:DP_grad_desc}}^2 \mathbb{E}[\|\grad f(\vec{w}_{g, i})\|^2] \label{eq:eta_bound},
\end{align}
where (\ref{eq:strong_convexity}) holds due to the expected curvature of $f$ being at least $\nu$, 
(\ref{eq:convexity}) holds due to convexity of $f$, 
(\ref{eq:smoothness}) holds due to  the $\beta_f$-smoothness condition, 
and (\ref{eq:eta_bound}) holds due to setting $\eta_{\ref{fig:DP_grad_desc}} \leq \frac{1}{2 \beta_f}$, which implies that  $- \frac{\eta_{\ref{fig:DP_grad_desc}}}{2\beta_f} \mathbb{E}[\|\grad f(\vec{w}_{g, i})\|^2]\leq - 2 \eta_{\ref{fig:DP_grad_desc}}^2 \mathbb{E}[\|\grad f(\vec{w}_{g, i})\|^2]$.

Replacing $-2\eta_{\ref{fig:DP_grad_desc}} \mathbb{E}[\langle \grad f(\vec{w}_{g, i}), \vec{w}_{g, i} - \vec{w}^*_f\rangle]$ 
in
(\ref{eq:switch_f})
with (\ref{eq:eta_bound})  yields:
\begin{align}
(\ref{eq:switch_f}) 
&\leq
\mathbb{E}[\|\vec{w}_{g, i} - \vec{w}^*_f \|^2] - \eta_{\ref{fig:DP_grad_desc}} \nu \mathbb{E}[\|\vec{w}_{g, i} - \vec{w}^*_f\|^2]
- 2 \eta_{\ref{fig:DP_grad_desc}}^2 \mathbb{E}[\|\grad f(\vec{w}_{g, i})\|^2] \nonumber \\
&\quad\quad + 
\eta_{\ref{fig:DP_grad_desc}}^2 \mathbb{E}[\|\grad g(\vec{w}_{g, i})\|^2] + 2 \eta_{\ref{fig:DP_grad_desc}} \mathbb{E}[\langle \vec{w}_{g, i} - \vec{w}^*_f, \grad f(w_{g, i}) - \grad g(w_{g, i}) \rangle] +
\eta_{\ref{fig:DP_grad_desc}}^2 m \sigma^2
\label{eq:eta_bound_2},
\end{align}

Re-arranging terms and using Assumption~\ref{ass:grad_dist} to upperbound the following term 
\begin{align*}
\mathbb{E}[\|\grad g(\vec{w}_{g, i})\|^2] &=
\mathbb{E}[\|\grad f(\vec{w}_{g, i})\|^2] + 2\mathbb{E}[\langle \grad f(\vec{w}_{g, i}), \grad g(\vec{w}_{g, i}) - \grad f(\vec{w}_{g, i}) \rangle] +
\mathbb{E}[\|\grad g(\vec{w}_{g, i}) - \grad f(\vec{w}_{g, i})\|^2]\\
&\leq \mathbb{E}[\|\grad f(\vec{w}_{g, i})\|^2] + 2\mathbb{E}[\langle \grad f(\vec{w}_{g, i}), \grad g(\vec{w}_{g, i}) - \grad f(\vec{w}_{g, i}) \rangle] +
\zeta^2
\end{align*}
we obtain:
\begin{align}
(\ref{eq:eta_bound_2}) 
&\leq \Big (\mathbb{E}[\|\vec{w}_{g, i} - \vec{w}^*_f \|^2] - \eta_{\ref{fig:DP_grad_desc}} \nu \mathbb{E}[\|\vec{w}_{g, i} - \vec{w}^*_f\|^2] \nonumber \\
&\quad \quad + 2 \eta_{\ref{fig:DP_grad_desc}} \mathbb{E}[\langle \vec{w}_{g, i} - \vec{w}^*_f, \grad g(\vec{w}_{g, i}) - \grad f(\vec{w}_{g, i})\rangle] \Big) \nonumber \\
&\quad\quad + 
\Big (- \eta_{\ref{fig:DP_grad_desc}}^2 \mathbb{E}[\|\grad f(\vec{w}_{g, i})\|^2] +
2\eta_{\ref{fig:DP_grad_desc}}^2 \mathbb{E}[\langle \grad f(\vec{w}_{g, i}), \grad g(\vec{w}_{g, i}) - \grad f(\vec{w}_{g, i}) \rangle] \Big ) \nonumber \\
&\quad\quad + \eta_{\ref{fig:DP_grad_desc}}^2 \zeta^2   +
\eta_{\ref{fig:DP_grad_desc}}^2 m \sigma^2 \nonumber \\
&\leq (1 - \frac{\eta_{\ref{fig:DP_grad_desc}} \nu}{2}) \mathbb{E}[\|\vec{w}_{g, i} - \vec{w}^*_f \|^2 +
\frac{2\eta_{\ref{fig:DP_grad_desc}} \zeta^2}{\nu} + 2\eta_{\ref{fig:DP_grad_desc}}^2 \zeta^2 + \eta_{\ref{fig:DP_grad_desc}}^2 m \sigma^2 \label{eq:final_bound} \\ 
&= (1 - \frac{\eta_{\ref{fig:DP_grad_desc}} \nu}{2}) \mathbb{E}[\|\vec{w}_{g, i} - \vec{w}^*_f \|^2] +
2\eta_{\ref{fig:DP_grad_desc}} \zeta^2 \left ( \frac{1}{\nu} + \eta_{\ref{fig:DP_grad_desc}} \right ) +
\eta_{\ref{fig:DP_grad_desc}}^2 m \sigma^2, \label{eq:final_bound_2}
\end{align}
where 
(\ref{eq:final_bound}) holds 
due to the following two claims:
\begin{claim}
    \begin{align*}
        \mathbb{E}[\langle \vec{w}_{g, i} - \vec{w}^*_f, \grad f(\vec{w}_{g, i}) - \grad g(\vec{w}_{g, i})\rangle]
\leq \frac{\nu}{4} \mathbb{E}[\|\vec{w}_{g, i} - \vec{w}^*_f\|^2] + \frac{\zeta^2}{\nu}
    \end{align*}
\end{claim}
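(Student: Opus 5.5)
The claim is a pointwise inequality followed by taking expectations, so I will prove it for a fixed realization of the iterate and then integrate. Write $\vec{u} := \vec{w}_{g, i} - \vec{w}^*_f$ and $\vec{e} := \grad f(\vec{w}_{g, i}) - \grad g(\vec{w}_{g, i})$.

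First, Cauchy--Schwarz gives $\langle \vec{u}, \vec{e}\rangle \le \|\vec{u}\|\,\|\vec{e}\|$. Since $\vec{w}_{g,i}$ lies in the domain $\mathcal{S}$ on which Assumption~\ref{ass:grad_dist} holds, we have $\|\vec{e}\| \le \zeta$ for every realization of the noise. This yields $\langle \vec{u}, \vec{e}\rangle \le \zeta\|\vec{u}\|$.

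Next, I apply Young's inequality $ab \le \frac{c}{2}a^2 + \frac{1}{2c}b^2$ with $a = \|\vec{u}\|$, $b = \zeta$, and $c = \nu/2$. This gives
\[
\zeta\|\vec{u}\| \le \frac{\nu}{4}\|\vec{u}\|^2 + \frac{\zeta^2}{\nu},
\]
which is exactly the claimed form. Taking expectations over the noise $\boldsymbol\chi_0,\dots,\boldsymbol\chi_{i-1}$ uses only linearity and monotonicity of expectation. The constant $\zeta^2/\nu$ passes through unchanged, so the claim follows.

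I do not see a real obstacle. The only points to check are, first, that the bound on $\|\vec{e}\|$ is uniform (it holds almost surely, not just in expectation), and second, that $\nu>0$, so that the choice $c=\nu/2$ is legitimate; both are implicit in the hypotheses of Theorem~\ref{thm:dp_converge}. The weight $\nu/4$ is the one that, after multiplication by $2\eta_{\ref{fig:DP_grad_desc}}$, consumes half of the $-\eta_{\ref{fig:DP_grad_desc}}\nu$ contraction. This leaves the contraction factor $(1-\eta_{\ref{fig:DP_grad_desc}}\nu/2)$ used in (\ref{eq:final_bound}).
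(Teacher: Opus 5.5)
Your proof is correct and is essentially the paper's argument. The paper rescales the two vectors by $\sqrt{\nu/2}$ and $\sqrt{2/\nu}$ and applies $\langle \vec{a},\vec{b}\rangle \le \frac{1}{2}\|\vec{a}\|^2 + \frac{1}{2}\|\vec{b}\|^2$ before invoking Assumption~\ref{ass:grad_dist} on $\|\grad f - \grad g\|^2$; you instead use Cauchy--Schwarz and $\|\vec{e}\| \le \zeta$ first, then scalar Young with the same weight $\nu/2$, which yields identical constants.
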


\begin{proof}

\begin{align*}
\mathbb{E}[\langle \vec{w}_{g, i} - \vec{w}^*_f, \grad f(\vec{w}_{g, i}) - \grad g(\vec{w}_{g, i})\rangle]
&=
\mathbb{E}[\langle \sqrt{\frac{\nu}{2}} (\vec{w}_{g, i} - \vec{w}^*_f), \sqrt{\frac{2}{\nu}} (\grad f(\vec{w}_{g, i}) - \grad g(\vec{w}_{g, i})) \rangle]\\
&\leq
\frac{\nu}{4} \mathbb{E}[ \|\vec{w}_{g, i} - \vec{w}^*_f\|^2]
+ \frac{1}{\nu} \mathbb{E}[\|\grad f(\vec{w}_{g, i}) - \grad g(\vec{w}_{g, i})\|^2] \rangle]\\
&\leq
\frac{\nu}{4} \mathbb{E}[ \|\vec{w}_{g, i} - \vec{w}^*_f\|^2]
+ \frac{\zeta^2}{\nu},
\end{align*}
where the final inequality holds due to Assumption~\ref{ass:grad_dist}.
\end{proof}

\begin{claim}
    \begin{align*}
        -\mathbb{E}[\|\grad f(\vec{w}_{g, i})\|^2] + 2 \mathbb{E}[\langle \grad f(\vec{w}_{g, i}), \grad g(\vec{w}_{g, i}) - \grad f(\vec{w}_{g, i}) \rangle]
\leq \zeta^2
    \end{align*}
\end{claim}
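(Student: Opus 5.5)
The plan is to prove the inequality pointwise, for every realization of the iterate $\vec{w}_{g,i}$, and then take expectations. Pointwise suffices because expectation is monotone and the right-hand side $\zeta^2$ is a deterministic constant. To lighten notation, fix a realization and write $\vec{a} := \grad f(\vec{w}_{g,i})$ and $\vec{e} := \grad g(\vec{w}_{g,i}) - \grad f(\vec{w}_{g,i})$. By Assumption~\ref{ass:grad_dist}, $\|\vec{e}\| \le \zeta$ whenever $\vec{w}_{g,i} \in \mathcal{S}$, which is the standing domain hypothesis used throughout this proof.

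The key step is to bound the cross term by Cauchy--Schwarz: $2\langle \vec{a}, \vec{e}\rangle \le 2\|\vec{a}\|\,\|\vec{e}\| \le 2\zeta\|\vec{a}\|$. This gives
\[
-\|\vec{a}\|^2 + 2\langle \vec{a}, \vec{e}\rangle \le -\|\vec{a}\|^2 + 2\zeta \|\vec{a}\| = \zeta^2 - (\|\vec{a}\| - \zeta)^2 \le \zeta^2,
\]
where the last step completes the square and drops a nonpositive term. An equivalent route is Young's inequality, $2\langle \vec{a},\vec{e}\rangle \le \|\vec{a}\|^2 + \|\vec{e}\|^2$, which directly yields $-\|\vec{a}\|^2 + 2\langle\vec{a},\vec{e}\rangle \le \|\vec{e}\|^2 \le \zeta^2$. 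I would present this version since it is one line.

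Finally, take expectation over the noise $\boldsymbol\chi_0,\dots,\boldsymbol\chi_{i-1}$ that determines $\vec{w}_{g,i}$. Linearity of expectation applied to the pointwise inequality gives the claim.

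There is no real obstacle. The only point needing care is that Assumption~\ref{ass:grad_dist} holds at the (random) iterate, i.e.\ that $\vec{w}_{g,i}$ stays in $\mathcal{S}$. This is the same implicit requirement already used in the preceding claim and in the expansion (\ref{eq:expand}), so I would simply invoke it as stated.
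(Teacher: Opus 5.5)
Your proposal is correct and matches the paper's proof: the paper bounds the cross term by $2\langle \grad f, \grad g - \grad f\rangle \le \|\grad f\|^2 + \|\grad g - \grad f\|^2$ (your Young's-inequality route), cancels the $\|\grad f\|^2$ terms, and applies Assumption~\ref{ass:grad_dist}. The only difference is that you argue pointwise and then take expectations, while the paper works inside the expectations directly; the two are equivalent.
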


\begin{proof}

\begin{align*}
&-\mathbb{E}[\|\grad f(\vec{w}_{g, i})\|^2] + 2\mathbb{E}[\langle \grad f(\vec{w}_{g, i}), \grad g(\vec{w}_{g, i}) - \grad f(\vec{w}_{g, i}) \rangle]\\
&\leq
-\mathbb{E}[\|\grad f(\vec{w}_{g, i})\|^2] + 
\mathbb{E}[\|\grad f(\vec{w}_{g, i})\|^2]
+ 
\mathbb{E}[\|\grad g(\vec{w}_{g, i}) - \grad f(\vec{w}_{g, i})\|^2]\\ 
&\leq \zeta^2,
\end{align*}
where the final inequality holds due to Assumption~\ref{ass:grad_dist}.
\end{proof}

The analysis from above culminating with (\ref{eq:final_bound_2}) yields the following bound on
$\mathbb{E}[\|\vec{w}_{g, i+1} - \vec{w}^*_f\|^2]$ in terms of $\mathbb{E}[\|\vec{w}_{g, i} - \vec{w}^*_f\|^2]$:

\begin{equation} \label{eq:final_bound_3}
\mathbb{E}[\|\vec{w}_{g, i+1} - \vec{w}^*_f\|^2] \leq
(1 - \frac{\eta_{\ref{fig:DP_grad_desc}} \nu}{2}) \mathbb{E}[\|\vec{w}_{g, i} - \vec{w}^*_f \|^2] +
2\eta_{\ref{fig:DP_grad_desc}} \zeta^2 \left ( \frac{1}{\nu} + \eta_{\ref{fig:DP_grad_desc}} \right ) +
\eta_{\ref{fig:DP_grad_desc}}^2 m \sigma^2.
\end{equation}

Applying the bound 
in
(\ref{eq:final_bound_3}) 
iteratively and taking expectation with
respect to $\vec{w}_{T}, \vec{w}_{T-1}, \ldots, \vec{w}_0$ yields the desired bound:
\begin{align}
\mathbb{E}[\|\vec{w}_{g,T} - \vec{w}^*_f\|^2]
&\leq \left ( 1 - \frac{\eta_{\ref{fig:DP_grad_desc}} \nu}{2} \right )^T \|\vec{w}_{g,0} - \vec{w}^*_f\|^2 +\left(
2\eta_{\ref{fig:DP_grad_desc}} \zeta^2 \left ( \frac{1}{\nu} +  \eta_{\ref{fig:DP_grad_desc}} \right ) +
\eta_{\ref{fig:DP_grad_desc}}^2 m \sigma^2\right ) \sum_{i=0}^T \left ( 1 - \frac{\eta_{\ref{fig:DP_grad_desc}} \nu}{2} \right )^i \nonumber \\
&\leq \left ( 1 - \frac{\eta_{\ref{fig:DP_grad_desc}} \nu}{2} \right )^T \|\vec{w}_{g,0} - \vec{w}^*_f\|^2 +\left(
2\eta_{\ref{fig:DP_grad_desc}} \zeta^2 \left ( \frac{1}{\nu} +  \eta_{\ref{fig:DP_grad_desc}} \right ) +
\eta_{\ref{fig:DP_grad_desc}}^2 m \sigma^2\right ) \frac{2}{\eta_{\ref{fig:DP_grad_desc}} \nu} \nonumber \\
&= \left ( 1 - \frac{\eta_{\ref{fig:DP_grad_desc}} \nu}{2} \right )^T \|\vec{w}_{g,0} - \vec{w}^*_f\|^2 + \frac{4 \zeta^2}{\nu} \left ( \frac{1}{\nu} + \eta_{\ref{fig:DP_grad_desc}} \right) +
\frac{2\eta_{\ref{fig:DP_grad_desc}} m \sigma^2}{\nu}.\nonumber
\end{align}
\end{proof}

\section{Proof of Theorem~\ref{th:dp_no_clipping} and Missing Analyses from Section~\ref{sec:no_clip}}

We begin by restating Theorem~\ref{th:dp_no_clipping}.

\DiffPriv*

To prove Theorem~\ref{th:dp_no_clipping}, we first bound the maximum magnitude of the weights, $R$, encountered during gradient descent in Lemma~\ref{lem:weight_magnitude} (see Appendix~\ref{sec:proof_Lemma_bounding_weights} for the proof).
We then use this to define the approximation interval $[-\sqrt{m}R, \sqrt{m}R]$ for approximations $\tilde{p}'_0$ and $\tilde{p}'_1$ relative to target functions $\phi'_0$ and $\phi'_1$. This then allows us to bound the sensitivity in
Appendix~\ref{sec:sensitivy_bound}. Finally, by employing standard DP analysis to set the noise variance $\sigma$, we achieve the full DP guarantee.

\subsection{Proof of Lemma~\ref{lem:weight_magnitude}}
\label{sec:proof_Lemma_bounding_weights}

We begin by restating Lemma~\ref{lem:weight_magnitude}.

\WeightNorm*

We prove the lemma by induction on the iteration index \(i\). The base case is immediate, since Algorithm~\ref{fig:modified_grad_desc} is initialized at \(\vec{w}_0 = \vec{0}\).

For the inductive step, we analyze a single update of the form
\[
\vec{w}_{i+1}
= \vec{w}_i
- \eta_{\ref{fig:modified_grad_desc}}
\Bigl(
2\lambda P_\kappa(\Theta - \|\vec{w}_i\|^2)\vec{w}_i
+ \frac{1}{N}\!\sum_{(\vec{x},y)\in D}
\tilde{p}'(\langle \vec{w}_i,\vec{x}\rangle,y)\vec{x}
+ \boldsymbol{\chi}_i
\Bigr).
\]
The update decomposes naturally into three components.
The first term corresponds to the gradient of the barrier function; since
\(P_\kappa(x)\) approximates \(1/x\), this term is approximately
\(\frac{2\lambda}{\Theta-\|\vec{w}_i\|^2}\vec{w}_i\) and acts to reduce the norm of \(\vec{w}_i\) as it approaches the boundary.
The second term approximates the gradient of the original objective \(f\).
The third term accounts for the injected Gaussian noise, whose magnitude we bound over all \(T\) iterations with high probability.

As \(\|\vec{w}_i\|^2\) approaches \(\Theta\), the barrier term increasingly pulls the iterate inward, while the data-dependent gradient and noise may push it outward. We therefore split the analysis into two cases.

In the first case, when \(\|\vec{w}_i\| \le \sqrt{(1-\kappa)\Theta}\), we choose parameters so that the barrier term cannot increase the norm. Bounding the contributions of the objective gradient, approximation error, and noise then yields
\[
\|\vec{w}_{i+1}\|
\le \sqrt{(1-\kappa)\Theta}
+ \eta_{\ref{fig:modified_grad_desc}}
\bigl(
\phi'_{\max}\sqrt{m}
+ \zeta_f
+ 2\lambda e_B\sqrt{\Theta}
+ (\sqrt{m}+\mathsf{c}_\delta)\sigma
\bigr).
\]

In the second case, when \(\|\vec{w}_i\| \ge \sqrt{(1-\kappa)\Theta}\), we show that the inward pull of the barrier term dominates any outward contribution from the objective gradient and noise. This requires a refined decomposition of these terms into components parallel and orthogonal to \(\vec{w}_i\). Under the parameter constraints in
(\ref{eq:eta_rest}) and (\ref{eq:kappa_constraint}), this analysis implies
\(\|\vec{w}_{i+1}\| \le \|\vec{w}_i\|\).

Combining both cases with the inductive hypothesis yields the desired bound on \(\|\vec{w}_i\|\) for all iterations, completing the proof.

We now proceed with the formal proof

\begin{proof}[Proof] 
We first note that due to the setting of $\mathsf{c}_\delta$, by standard analysis, with probability
$1-\frac{2\delta}{3}$,
$\|\boldsymbol\chi_i\| \leq (\sqrt{m} + \mathsf{c}_\delta)\sigma$ and
$\frac{\langle \boldsymbol\chi_i, \vec{w}_i \rangle}{\|\vec{w}_i\|} \leq \mathsf{c}_\delta \cdot \sigma$,
for all $i \in \{0, \ldots, T\}$.

Assuming the above bounds hold for all $i \in \{0, \ldots, T\}$,
we proceed to argue by induction:

\noindent
\underline{Base case:} $\vec{w}_0$ is initialized to the all $0$ vector in so
$\|\vec{w}_{0}\| = 0$.

\noindent
\underline{Inductive case.}
Assume that
$\|\vec{w}_{i}\|
\leq \sqrt{(1-\kappa)\Theta} +  \eta_{\ref{fig:modified_grad_desc}}\left(  \phi'_{max}\sqrt{m} + \zeta_f +
2\lambda e_B \sqrt{\Theta}
+ (\sqrt{m} + \mathsf{c}_\delta)\sigma \right)$.

We further split the inductive case into two sub-cases:

\underline{Sub-Case 1:}
Assume that 
$\|\vec{w}_{i}\|
\leq \sqrt{(1-\kappa)\Theta}$.


We have
\begin{align}
\|\vec{w}_{i+1}\| &=
\|\vec{w}_{i} - \eta_{\ref{fig:modified_grad_desc}} \sum_{(\vec{x}, y) \in D} p'(\langle \vec{w}_i, \vec{x} \rangle)\vec{x} 
-
2\tilde{p}'(z)\eta_{\ref{fig:modified_grad_desc}} \lambda P_\kappa(\Theta - \|\vec{w}_i\|^2)\vec{w}_i
-
\eta_{\ref{fig:modified_grad_desc}} \boldsymbol\chi_i\| \nonumber \\
&\leq \|\vec{w}_{i} - \eta_{\ref{fig:modified_grad_desc}} \grad f^{+B}_\kappa(\vec{w}_i)\| + \eta_{\ref{fig:modified_grad_desc}} \zeta_f+
2\eta_{\ref{fig:modified_grad_desc}} \lambda e_B \sqrt{(1-\kappa)\Theta} +
\eta_{\ref{fig:modified_grad_desc}} \|\boldsymbol\chi_i\| \nonumber \\
&\leq \|\vec{w}_{i} - 2 \eta_{\ref{fig:modified_grad_desc}} \lambda F_\kappa(\Theta - \|\vec{w}_i\|^2)\vec{w}_i\| + \eta_{\ref{fig:modified_grad_desc}} \phi'_{max} \sqrt{m} + \eta_{\ref{fig:modified_grad_desc}} \zeta_f+
2\eta_{\ref{fig:modified_grad_desc}} \lambda e_B \sqrt{(1-\kappa)\Theta} +
\eta_{\ref{fig:modified_grad_desc}} \|\boldsymbol\chi_i\| \nonumber \\
&\leq
\|\vec{w}_{i}\| + \eta_{\ref{fig:modified_grad_desc}} \phi'_{max} \sqrt{m} + \eta_{\ref{fig:modified_grad_desc}} \zeta_f +
2\eta_{\ref{fig:modified_grad_desc}} \lambda e_B \sqrt{(1-\kappa)\Theta}
+\eta_{\ref{fig:modified_grad_desc}} \|\boldsymbol\chi_i\| \label{eq:non-negative} \\
&\leq
\|\vec{w}_{i}\| + \eta_{\ref{fig:modified_grad_desc}}\left(  \phi'_{max} \sqrt{m} + \zeta_f +
2\lambda e_B \sqrt{\Theta}
+ (\sqrt{m} + \mathsf{c}_\delta)\sigma \right), \nonumber
\end{align}
where (\ref{eq:non-negative}) holds since in Sub-Case 1, 
$F_\kappa(\Theta - \|\vec{w}_i\|^2) \leq \frac{1}{\kappa \Theta}$ and since
$\eta_{\ref{fig:modified_grad_desc}} \leq \frac{\kappa \Theta}{\lambda}$,
we have that
$|1-2\eta_{\ref{fig:modified_grad_desc}}\lambda F_\kappa(\Theta - \|\vec{w}_i\|^2)| \leq 1$.

\noindent
\underline{Sub-Case 2:}
Assume that 
$\|\vec{w}_{i}\|
\geq \sqrt{(1-\kappa)\Theta}$.
First recall that 
$\| \grad f(\vec{w}_i) \| \leq \phi'_{max} \cdot \sqrt{m}$.
Further, since $f$ is convex,
we have that 
$\langle \grad f(\vec{w}_i) - \grad f(\vec{0}), \vec{w}_i\rangle \geq 0$.
This implies that

\[
\langle \grad{f}(\vec{w}_i), \vec{w}_i \rangle \geq \langle  \grad f(\vec{0}), \vec{w}_i \rangle
\geq
-\| \grad f(\vec{0})\| \|\vec{w}_i\|.
\]

We can write
\[
\grad{f}(\vec{w}_i) =
a \widetilde{\vec{w}}
+ b \widehat{\vec{w}},
\]
where $\widetilde{\vec{w}} = \frac{\vec{w}_i}{\|\vec{w}_i\|}$,
$\widehat{\vec{w}}$ is a unit vector
orthogonal to $\widetilde{\vec{w}}$,
and 
\begin{equation} \label{eq:a_b_bound}
-\| \grad f(\vec{0})\| \leq a \leq \phi'_{max} \cdot \sqrt{m}
\quad
\mbox{ and }
\quad 
c:= \sqrt{a^2 + b^2} \leq \phi'_{max} \cdot \sqrt{m}.
\end{equation}
Further, 
since we assume
\[
\frac{|\langle \boldsymbol\chi_i, \vec{w}_i \rangle|}{\|\vec{w}_i\|} \leq
\mathsf{c}_\delta \cdot \sigma,
\]
$\boldsymbol\chi_i$ can be written as:
\[
\boldsymbol\chi_i = a' \widetilde{\vec{w}}
+ b' \overline{\vec{w}},
\]
where $\overline{\vec{w}}$
is a unit vector orthogonal to 
$\widetilde{\vec{w}}$
and 
\begin{equation} \label{eq:a_b_prime_bound}
|a'| \leq \mathsf{c}_\delta \cdot \sigma
\quad
\mbox{and}
\quad
c' := \sqrt{(a')^2 + (b')^2} \leq (\sqrt{m} + \mathsf{c}_\delta)\sigma.
\end{equation}

We will use the following fact later in the proof:
\begin{equation}\label{eq:pythagorean}
(a + a')^2 + (|b| + |b'|)^2 \leq (c + c')^2.
\end{equation}
This can be seen since:
\begin{align*}
(a + a')^2 + (|b| + |b'|)^2 &=
a^2 + b^2 + (a')^2 + (b')^2 + 2aa' + 2|b||b'|\\
&= a^2 + b^2 + (a')^2 + (b')^2 + 2\langle (a,|b|), (a', |b'|) \rangle\\
&\leq a^2 + b^2 + (a')^2 + (b')^2 + 2\|(a,|b|)\| \cdot \|(a', |b'|)\|\\
&= c^2 + c'^2 + 2c \cdot c'\\
&= (c + c')^2.
\end{align*}

We thus have
\begin{align}
&\|\vec{w}_{i+1}\| =
\|\vec{w}_{i} - \eta_{\ref{fig:modified_grad_desc}} \sum_{(\vec{x}, y) \in D} p'(\langle \vec{w}_i, \vec{x} \rangle)\vec{x} 
-
2\tilde{p}'(z)\eta_{\ref{fig:modified_grad_desc}} \lambda P_\kappa(\Theta - \|\vec{w}_i\|^2)\vec{w}_i
-
\eta_{\ref{fig:modified_grad_desc}} \boldsymbol\chi_i\| \nonumber \\
&\leq \|\vec{w}_{i} - 
2\tilde{p}'(z)\eta_{\ref{fig:modified_grad_desc}} \lambda P_\kappa(\Theta - \|\vec{w}_i\|^2)\vec{w}_i
-\eta_{\ref{fig:modified_grad_desc}} \grad f(\vec{w}_i) - \eta_{\ref{fig:modified_grad_desc}} \boldsymbol\chi_i\| + \eta_{\ref{fig:modified_grad_desc}} \zeta_f \nonumber \\
&=
\sqrt{\|\vec{w}_{i} - 2\tilde{p}'(z)\eta_{\ref{fig:modified_grad_desc}} \lambda P_\kappa(\Theta - \|\vec{w}_i\|^2)\vec{w}_i - \eta_{\ref{fig:modified_grad_desc}} \cdot a \widetilde{\vec{w}} - \eta_{\ref{fig:modified_grad_desc}} \cdot a' \widetilde{\vec{w}}\|^2 +
\eta^2_{\ref{fig:modified_grad_desc}} \|b \widehat{\vec{w}} + b'\overline{\vec{w}}\|^2}
+
\eta_{\ref{fig:modified_grad_desc}} \zeta_f \nonumber \\
&\leq
\sqrt{\|\vec{w}_{i} - 2\tilde{p}'(z)\eta_{\ref{fig:modified_grad_desc}} \lambda P_\kappa(\Theta - \|\vec{w}_i\|^2)\vec{w}_i - \eta_{\ref{fig:modified_grad_desc}} \cdot a \widetilde{\vec{w}} - \eta_{\ref{fig:modified_grad_desc}} \cdot a' \widetilde{\vec{w}}\|^2 +
\eta^2_{\ref{fig:modified_grad_desc}}(\|b \widehat{\vec{w}}\| + \|b'\overline{\vec{w}}\|)^2}
+
\eta_{\ref{fig:modified_grad_desc}} \zeta_f \nonumber \\
&\leq
\sqrt{\|\vec{w}_{i} - 2\tilde{p}'(z)\eta_{\ref{fig:modified_grad_desc}} \lambda P_\kappa(\Theta - \|\vec{w}_i\|^2)\vec{w}_i - \eta_{\ref{fig:modified_grad_desc}} \cdot a \widetilde{\vec{w}} - \eta_{\ref{fig:modified_grad_desc}} \cdot a' \widetilde{\vec{w}}\|^2 +
\eta^2_{\ref{fig:modified_grad_desc}}(|b| + |b'|)^2}
+
\eta_{\ref{fig:modified_grad_desc}} \zeta_f \nonumber \\
&=
\sqrt{
\left ( \left ( 1 - 2\tilde{p}'(z)\eta_{\ref{fig:modified_grad_desc}} \lambda P_\kappa(\Theta - \|\vec{w}_i\|^2) \right )\|\vec{w}_i\| - \eta_{\ref{fig:modified_grad_desc}}(a + a')\right )^2 +
\eta^2_{\ref{fig:modified_grad_desc}}(|b| + |b'|)^2}
+
\eta_{\ref{fig:modified_grad_desc}} \zeta_f \nonumber \\
&=
\sqrt{
\left ( 1 - 2\tilde{p}'(z)\eta_{\ref{fig:modified_grad_desc}} \lambda P_\kappa(\Theta - \|\vec{w}_i\|^2) \right )^2\|\vec{w}_i\|^2 -2 \eta_{\ref{fig:modified_grad_desc}}\left ( 1 - 2\tilde{p}'(z)\eta_{\ref{fig:modified_grad_desc}} \lambda P_\kappa(\Theta - \|\vec{w}_i\|^2) \right )\|\vec{w}_i\|(a + a') +
\eta^2_{\ref{fig:modified_grad_desc}}(a + a')^2 +
\eta^2_{\ref{fig:modified_grad_desc}}(|b| + |b'|)^2}
+
\eta_{\ref{fig:modified_grad_desc}} \zeta_f \nonumber \\
&\leq
\sqrt{
 \left ( 1 - 2\tilde{p}'(z)\eta_{\ref{fig:modified_grad_desc}} \lambda P_\kappa(\Theta - \|\vec{w}_i\|^2) \right )^2\|\vec{w}_i\|^2 -2 \eta_{\ref{fig:modified_grad_desc}}\left ( 1 - 2\tilde{p}'(z)\eta_{\ref{fig:modified_grad_desc}} \lambda P_\kappa(\Theta - \|\vec{w}_i\|^2) \right )\|\vec{w}_i\|(a + a') +
\eta^2_{\ref{fig:modified_grad_desc}}(c + c')^2}
+
\eta_{\ref{fig:modified_grad_desc}} \zeta_f \label{eq:use_pythagorean} \\
&\leq
\sqrt{
\left ( 1 - 2\tilde{p}'(z)\eta_{\ref{fig:modified_grad_desc}} \lambda P_\kappa(\Theta - \|\vec{w}_i\|^2) \right )^2\|\vec{w}_i\|^2 -2 \eta_{\ref{fig:modified_grad_desc}}\left ( 1 - 2\tilde{p}'(z)\eta_{\ref{fig:modified_grad_desc}} \lambda P_\kappa(\Theta - \|\vec{w}_i\|^2) \right )\|\vec{w}_i\|(a + a') +
\eta^2_{\ref{fig:modified_grad_desc}} \cdot (\sqrt{m}\phi'_{max} + (\sqrt{m} + \mathsf{c}_\delta)\sigma)^2}
+
\eta_{\ref{fig:modified_grad_desc}} \zeta_f \label{eq:final_bound_w_i+1}
\end{align}
where 
(\ref{eq:use_pythagorean}) 
follows from (\ref{eq:pythagorean})
and
(\ref{eq:final_bound_w_i+1}) follows from
the bounds on $c, c'$ given in
(\ref{eq:a_b_bound}) and (\ref{eq:a_b_prime_bound}).

We now aim to upperbound the quantity
\begin{equation} \label{eq:pos_neg}
\left ( 1 - 2\tilde{p}'(z)\eta_{\ref{fig:modified_grad_desc}} \lambda P_\kappa(\Theta - \|\vec{w}_i\|^2) \right )^2\|\vec{w}_i\|^2 -2 \eta_{\ref{fig:modified_grad_desc}}\left ( 1 - 2\tilde{p}'(z)\eta_{\ref{fig:modified_grad_desc}} \lambda P_\kappa(\Theta - \|\vec{w}_i\|^2) \right )\|\vec{w}_i\|(a + a')
\end{equation}
from (\ref{eq:final_bound_w_i+1}).

First, for every fixed value $\|\vec{w}_i\|$
and for every fixed value
$V: = 2\tilde{p}'(z)\eta_{\ref{fig:modified_grad_desc}} \lambda P_\kappa(\Theta - \|\vec{w}_i\|^2)$ such that $1-V$ is positive, (\ref{eq:pos_neg}) is maximized by taking $(a + a')$ to be as small a \emph{negative value} as possible,
which by (\ref{eq:a_b_bound}), is $(a + a') = -(d\sqrt{m} + \mathsf{c}_\delta \cdot \sigma)$.
Substituting this into (\ref{eq:pos_neg}) we obtain:
\begin{equation} \label{eq:pos_neg_2}
\left ( 1 - V \right )^2\|\vec{w}_i\|^2 + 2 \eta_{\ref{fig:modified_grad_desc}}\left ( 1 - V) \right )\|\vec{w}_i\|(d\sqrt{m} + \mathsf{c}_\delta \cdot \sigma).
\end{equation}
Further, for every fixed value $\|\vec{w}_i\|$ and for all
values $V$, such that $(1-V)$ is positive,
(\ref{eq:pos_neg_2}) is maximized by taking
$V$ as small as possible,
which in Sub-Case 2, is $V = 2\tilde{p}'(z)\eta_{\ref{fig:modified_grad_desc}} \lambda \mathsf{m}_P$, since $\eta_{\ref{fig:modified_grad_desc}} \leq \frac{1}{2\mathsf{m}_P}$.
So when $(1-V)$ is positive,
this yields
\begin{equation*} 
\mathsf{Pos} := \left ( 1 - 2\tilde{p}'(z)\eta_{\ref{fig:modified_grad_desc}} \lambda \mathsf{m}_P \right )^2\|\vec{w}_i\|^2 + 2 \eta_{\ref{fig:modified_grad_desc}}\left ( 1 - 2\tilde{p}'(z)\eta_{\ref{fig:modified_grad_desc}} \lambda \mathsf{m}_P) \right )\|\vec{w}_i\|(d\sqrt{m} + \mathsf{c}_\delta \cdot \sigma).
\end{equation*}

Second, for every fixed value $\|\vec{w}_i\|$ and for every fixed value
$V: = 2\tilde{p}'(z)\eta_{\ref{fig:modified_grad_desc}} \lambda P_\kappa(\Theta - \|\vec{w}_i\|^2)$ such that $1-V$ is negative, (\ref{eq:pos_neg}) is maximized by taking $(a + a')$ to be as large a \emph{positive value} as possible,
which by (\ref{eq:a_b_bound}), is $(a + a') = (\phi'_{max} \cdot \sqrt{m} + \mathsf{c}_\delta \cdot \sigma)$.
Substituting this into (\ref{eq:pos_neg}) we obtain:
\begin{equation} \label{eq:pos_neg_2}
\left ( 1 - V \right )^2\|\vec{w}_i\|^2 -2 \eta_{\ref{fig:modified_grad_desc}}\left ( 1 - V) \right )\|\vec{w}_i\|(\phi'_{max} \cdot \sqrt{m} + \mathsf{c}_\delta \cdot \sigma).
\end{equation}
Further, for every fixed value $\|\vec{w}_i\|$ and for all
values $V$, such that $(1-V)$ is negative,
(\ref{eq:pos_neg_2}) is maximized by taking
$V$ as large as possible,
which in Sub-Case 2, is $V = 2\tilde{p}'(z)\eta_{\ref{fig:modified_grad_desc}} \lambda \mathsf{M}_P$.
So when $(1-V)$ is negative,
this yields
\begin{equation*} 
\mathsf{Neg} := \left ( 1 - 2\tilde{p}'(z)\eta_{\ref{fig:modified_grad_desc}} \lambda \mathsf{M}_P \right )^2\|\vec{w}_i\|^2 -2 \eta_{\ref{fig:modified_grad_desc}}\left ( 1 - 2\tilde{p}'(z)\eta_{\ref{fig:modified_grad_desc}} \lambda \mathsf{M}_P) \right )\|\vec{w}_i\|(\phi'_{max} \cdot \sqrt{m} + \mathsf{c}_\delta \cdot \sigma).
\end{equation*}

Thus,
\[
(\ref{eq:final_bound_w_i+1}) \leq
\sqrt{
\max\{\mathsf{Pos}, \mathsf{Neg} \} +
\eta^2_{\ref{fig:modified_grad_desc}} \cdot (\sqrt{m}\phi'_{max} +(\sqrt{m} + \mathsf{c}_\delta)\sigma)^2}
+
\eta_{\ref{fig:modified_grad_desc}} \zeta_f. 
\]

Finally, we show that
$\mathsf{Neg} \leq 
\mathsf{Pos}$.
Since $\phi'_{max} \geq d$,
this is implied by
\[
\mathsf{Neg} +  \eta_{\ref{fig:modified_grad_desc}}^2 (\phi'_{max}\cdot\sqrt{m} + \mathsf{c}_\delta \cdot \sigma)^2  \leq \mathsf{Pos} + 
\eta_{\ref{fig:modified_grad_desc}}^2 (d\sqrt{m} + \mathsf{c}_\delta \cdot \sigma)^2,
\]
or equivalently
\begin{equation}\label{eq:intermediate_pos_neg}
\left (\left ( 1 - 2\eta_{\ref{fig:modified_grad_desc}}\lambda\mathsf{M}_P \right )\|\vec{w}_i\| - \eta_{\ref{fig:modified_grad_desc}}(\phi'_{max}\cdot \sqrt{m} + \mathsf{c}_\delta \cdot \sigma)\right)^2 \leq
\left (\left ( 1 - 2\eta_{\ref{fig:modified_grad_desc}}\lambda\mathsf{m}_P \right )\|\vec{w}_i\| + \eta_{\ref{fig:modified_grad_desc}}(d\sqrt{m} + \mathsf{c}_\delta \cdot \sigma) \right)^2.
\end{equation}
(\ref{eq:intermediate_pos_neg}), in turn,
is implied by
\[
\left ( -1 + 2\eta_{\ref{fig:modified_grad_desc}}\lambda\mathsf{M}_P \right )\|\vec{w}_i\| + \eta_{\ref{fig:modified_grad_desc}}(\phi'_{max}\cdot \sqrt{m} + \mathsf{c}_\delta \cdot \sigma)
\leq
\left ( 1 - 2\eta_{\ref{fig:modified_grad_desc}}\lambda\mathsf{m}_P \right )\|\vec{w}_i\| + \eta_{\ref{fig:modified_grad_desc}}(d\sqrt{m} + \mathsf{c}_\delta \cdot \sigma).
\]
This is implied by
\[
\eta_{\ref{fig:modified_grad_desc}} \leq 
\frac{1}{\lambda(\mathsf{M}_P+\mathsf{m}_P) + \frac{(\phi'_{max} - d) \sqrt{m}}{2\sqrt{(1-\kappa)\Theta}}},
\]
which is the restriction we place on 
$\eta_{\ref{fig:modified_grad_desc}}$ in (\ref{eq:eta_rest}).

Thus, we have that
\[
\| \vec{w}_{i+1}\| \leq 
\sqrt{
\left ( 1 - 2\tilde{p}'(z)\eta_{\ref{fig:modified_grad_desc}} \lambda \mathsf{m}_P \right )^2\|\vec{w}_i\|^2 + 2 \eta_{\ref{fig:modified_grad_desc}}\left ( 1 - 2\tilde{p}'(z)\eta_{\ref{fig:modified_grad_desc}} \lambda \mathsf{m}_P) \right )\|\vec{w}_i\|(d\sqrt{m} + \mathsf{c}_\delta \cdot \sigma) +
\eta^2_{\ref{fig:modified_grad_desc}} \cdot  ( \sqrt{m}\phi'_{max} + (\sqrt{m}+\mathsf{c}_\delta) \sigma)^2}
+
\eta_{\ref{fig:modified_grad_desc}} \zeta_f.
\]

We next argue that
\[
\sqrt{
\left ( 1 - 2\tilde{p}'(z)\eta_{\ref{fig:modified_grad_desc}} \lambda \mathsf{m}_P \right )^2\|\vec{w}_i\|^2 + 2 \eta_{\ref{fig:modified_grad_desc}}\left ( 1 - 2\tilde{p}'(z)\eta_{\ref{fig:modified_grad_desc}} \lambda \mathsf{m}_P) \right )\|\vec{w}_i\|(d\sqrt{m} + \mathsf{c}_\delta \cdot \sigma) +
\eta^2_{\ref{fig:modified_grad_desc}} \cdot  ( \sqrt{m}\phi'_{max} + (\sqrt{m}+\mathsf{c}_\delta) \sigma)^2}
+
\eta_{\ref{fig:modified_grad_desc}} \zeta_f \leq \|\vec{w}_i\|,
\]
which implies that
$\|\vec{w}_{i+1}\| \leq \|\vec{w}_i\|$.

Since $\alpha = 2\eta_{\ref{fig:modified_grad_desc}}\lambda\mathsf{m}_P$ is positive and less than $1$,
the inequality is satisfied as long as 
\[
\|\vec{w}_i\| \geq \frac{-B + \sqrt{B^2-4AC}}{2A},
\]
where
$A = (2\alpha - \alpha^2)$,
$B = -2\eta_{\ref{fig:modified_grad_desc}}((1-\alpha)(d\sqrt{m} + \mathsf{c}_\delta \cdot \sigma) + \zeta_f)$,
and $C = -\eta_{\ref{fig:modified_grad_desc}}^2((\sqrt{m}\phi'_{max}+(\sqrt{m} + \mathsf{c}_\delta)\sigma)^2-\zeta^2_f)$.
Since we are in Sub-Case 2, $\|\vec{w}_i\| \geq \sqrt{(1-\kappa)\Theta}$.
Recall that
we set
$\kappa$ so that:

\begin{align*}
\sqrt{(1-\kappa)\Theta} \geq 
\frac{-B + \sqrt{B^2-4AC}}{2A}.
\end{align*}

Thus, the inequality is indeed satisfied in Sub-Case 2.
By the inductive hypothesis this implies that
$\|\vec{w}_{i+1}\| \leq \|\vec{w}_{i}\| \leq
\sqrt{(1-\kappa)\Theta} +  \eta_{\ref{fig:modified_grad_desc}}\left(  \phi'_{max}\sqrt{m} + \zeta_f +
2\lambda e_B \sqrt{\Theta}
+ (\sqrt{m} + \mathsf{c}_\delta) \sqrt{m}\sigma \right)$.
This concludes the proof of Lemma~\ref{lem:weight_magnitude}.
\end{proof}

\subsection{Bounding the Sensitivity of Gradient Updates in Algorithm~\ref{fig:modified_grad_desc}}
\label{sec:sensitivy_bound}
Since $P_\kappa(\vec{w}_i)$ is data-independent, we can upper bound the $\ell_2$ sensitivity of $\grad g(\vec{w}_i)$ by $\frac{\Delta_2}{N}$, where
\begin{align}
\Delta_2 = \|\grad g_{(\vec{x},y)}(\vec{w}_i) -  \grad g_{(\vec{x}',y')}(\vec{w}_i)\| &= \|\tilde{p}'(\langle \vec{w}_i, \vec{x} \rangle, y) \cdot \vec{x}  - 
\tilde{p}'(\langle \vec{w}_i, \vec{x}' \rangle, y') \cdot \vec{x}' \| \nonumber \\
&\leq
\|\tilde{p}'(\langle \vec{w}_i, \vec{x} \rangle, y) \cdot \vec{x} \| + 
\|\tilde{p}'(\langle \vec{w}_i, \vec{x}' \rangle, y') \cdot \vec{x}' \| \nonumber \\
&=
|\tilde{p}'(\langle \vec{w}_i, \vec{x} \rangle, y)| \cdot \| \vec{x} \| +
|\tilde{p}'(\langle \vec{w}_i, \vec{x}' \rangle, y')| \cdot \|\vec{x}' \|\nonumber \\
&\leq
(|\phi'(\langle \vec{w}_i, \vec{x} \rangle, y)| + e_f) \cdot \| \vec{x} \| +
(|\phi'(\langle \vec{w}_i, \vec{x}' \rangle, y')| + e_f) \cdot \|\vec{x}' \| \label{eq:interval_bound} \\
&\leq 2(\phi'_{max} + e_f) \sqrt{m}. \nonumber
\end{align}
Cruciallly, (\ref{eq:interval_bound}) follows since, via Lemma~\ref{lem:weight_magnitude},
$\langle \vec{w}_i, \vec{x} \rangle$
is guaranteed to be contained in the interval
$[-\sqrt{m}R, \sqrt{m}R]$ and $\tilde{p}'$ is guaranteed to have error at most $e_f$  with respect to $\phi'$ on this interval.

Thus we finally obtain
$\Delta_2 \leq 2(\phi'_{max} + e_f) \sqrt{m}$,
where
$\phi'_{max} = \sup \{|\phi'(a, b)| : a \in (-\infty, \infty), b \in \{0,1\}\}$.

\subsection{Choosing polynomials and parameters for Algorithm~\ref{fig:modified_grad_desc}}\label{sec:choosing_parameters}

Choosing parameters is delicate since the constraint on $\kappa$ in (\ref{eq:kappa_constraint}) depends on the setting of $\eta_{\ref{fig:modified_grad_desc}}$ but the constraint on $\eta_{\ref{fig:modified_grad_desc}}$ in (\ref{eq:eta_rest})
depends on the settings of $\kappa$ and $P_\kappa$.

If one just wants to find \emph{some} feasible setting of parameters, this can be done as follows:
First, the constraint (\ref{eq:kappa_constraint}) on $\kappa$ given in Theorem~\ref{th:dp_no_clipping} is implied by the following constraint
(assuming $\alpha \leq 1$, which occurs when $\eta_{\ref{fig:modified_grad_desc}} \leq \frac{1}{2 \lambda \mathsf{m}_P}$ and is implied by (\ref{eq:eta_rest})):
\begin{equation}\label{eq:stronger_constraint}
\sqrt{(1-\kappa)\Theta} \geq
\frac{2(d\sqrt{m} + \mathsf{c}_\delta \cdot \sigma + \zeta_f) + \sqrt{8(d\sqrt{m} + \mathsf{c}_\delta \sigma + \zeta_f)^2 + 4((\sqrt{m}\phi'_{max} + (\sqrt{m} + \mathsf{c}_\delta)\sigma)^2 - \zeta^2_f)}}{2\lambda \mathsf{m}_P}.
\end{equation}
Note that this constraint is now independent of $\eta_{\ref{fig:modified_grad_desc}}$.
Furthermore, the right hand side of (\ref{eq:stronger_constraint}) decreases as $\kappa$ decreases.
This is because the value $\mathsf{m}_P$, which appears in the denominator of the right hand side, is equal to the value of
$P_\kappa(\kappa \Theta)$ (since $P_\kappa$ will be chosen such that it is decreasing on the interval $[\Theta - R^2, \kappa \Theta]$. By the approximation error bound, we can lower bound 
$P_\kappa(\kappa \Theta) \geq \frac{1}{\kappa \Theta} - e_B$ and this quantity increases as $\kappa$ decreases. 
On the other hand, the left hand side of (\ref{eq:stronger_constraint}) increases as $\kappa$ decreases.
Thus, there must be a setting of $\kappa$ that satisfies the equation, while all remaining parameters are fixed.

Once $\kappa$ is fixed,
we can determine $P_\kappa$ as follows:
First, we find a polynomial approximation, $\tilde{P}_\kappa$, of $1/x$ on the interval $[\frac{\kappa \Theta}{2}, \Theta]$ that preserves monotonicity and has error at most $\frac{e_B}{2}$. Such an approximation can be found by taking a sufficiently high degree polynomial using the works of~\cite{DeVore1977,DeVoreYu1985}. 
Since $\eta_{\ref{fig:modified_grad_desc}}$ is not yet determined, we assume
$\eta_{\ref{fig:modified_grad_desc}} \leq \frac{1}{2\beta_f}$ (which is the estimated learning rate needed for convergence).

To construct $P_\kappa$, we add an adjustment $h(x)$ to $\tilde{P}'_\kappa(x)$
and then set $P_\kappa := \tilde{P}_\kappa(x) + \tilde{h}(x)$ to be an antiderivative of
$\tilde{P}'(x) + h(x)$.
We choose $h(x)$
so that $\tilde{P}'_\kappa(x) + h(x)$ is negative
on the interval $[\Theta - \tilde{R}^2, \frac{\kappa \Theta}{2}]$. Further, we choose an $h(x)$ that is non-positive on $[\Theta - \tilde{R}^2, \Theta]$, $\tilde{P}'(x) + h(x)$ , so that $\tilde{P}'(x) + h(x)$ is negative on $[\frac{\kappa \Theta}{2}, \kappa \Theta]$.
Taken together, these ensure that
$P_\kappa$ is monotonically decreasing on the interval $[\Theta - \tilde{R}^2, \kappa \Theta]$.
To ensure that the error between $P_\kappa$ and $1/x$ does not increase by more than $\frac{e_B}{2}$ beyond the error of at most $\frac{e_B}{2}$ between
$\tilde{P}_\kappa$ and $1/x$ on the interval
$[\kappa \Theta, \Theta]$, we choose $h(x)$ such that
$|\tilde{h}(\kappa \Theta)|$ is decreasing on
$[\kappa \Theta, \Theta]$ and such that
$|\tilde{h}(\kappa \Theta)| \leq \frac{e_B}{2}$. Details follow. 

Let $M := \max \{|\Theta - \tilde{R}^2|, (1-\kappa)\Theta \}$.
Let $q \geq 0$ be a sufficiently large integer such that
for all $x \in [\Theta - \tilde{R}^2, \frac{\kappa \Theta}{2}]$,
\quad \quad 
\begin{equation} \label{eq:adjust_negative}
\tilde{P}'_\kappa(x) -\frac{e_B(q+1)}{2M} \left ( 1 - \frac{x - \kappa \Theta}{M} \right)^q \leq -1.
\end{equation}
Define 
\[
h(x) := -\frac{e_B(q+1)}{2M} \left ( 1 - \frac{x - \kappa \Theta}{M} \right)^q.
\]
Note that (\ref{eq:adjust_negative}) implies that $\tilde{P}'_\kappa(x) + h(x) < 0$ on the interval
$[\Theta - \tilde{R}^2, \frac{\kappa \Theta}{2}]$.
The fact that $\tilde{P}_\kappa$ is monotonically decreasing on
$[\frac{\kappa \Theta}{2}, \Theta]$ and $h(x)$ is non-positive on
$[\frac{\kappa \Theta}{2}, \Theta]$ implies that
$\tilde{P}'_\kappa(x) + h(x) < 0$ on the interval
$[\frac{\kappa \Theta}{2}, \Theta]$.
Let 
\[
\tilde{h}(x) := \frac{e_B}{2}\left(1-\frac{x-\kappa \Theta}{M}\right)^{q+1}
\]
be an antiderivative of $h(x)$.
Then $P_\kappa(x) := \tilde{P}_\kappa(x) + \tilde{h}(x)$ is monotonically decreasing on the interval $[\Theta - \tilde{R}^2, \kappa \Theta]$.
Moreover, since $|\tilde{h}(x)|$ is decreasing on the interval
$[\kappa \Theta, \Theta]$, the maximum error between $P_\kappa$ and $\frac{1}{x}$ is 
at most $\frac{e_B}{2} + |\tilde{h}(\kappa \Theta)| \leq e_B$
on the interval $[\kappa \Theta, \Theta]$.

Now that $P_\kappa$ is fixed, we set $\eta_{\ref{fig:modified_grad_desc}}$
sufficiently small so that it satisfies (\ref{eq:eta_rest}) and is at most $\frac{1}{2\beta_f}$. Note that this also fixes the value of $R$.

Finally, approximations $p'_0$ and $p'_1$ to $\phi'_0, \phi'_1$ with error at most $e_f$ over the interval $[-R\sqrt{m}, R\sqrt{m}]$ can be found by taking sufficiently high degree
using the works of~\cite{bernstein1912proof,Roulier70}.

The above always leads to a feasible setting of parameters, but it may not be the best practical choice. In Appendix~\ref{sec:guide_modified_GD}, we describe a heuristic procedure that leads to better parameters in practice.

\subsection{On the Convergence of Algorithm~\ref{fig:modified_grad_desc}}
\label{sec:modified_converge}

In order to apply Theorem~\ref{thm:dp_converge}, we must define $F_\kappa$ on the entire interval $[\Theta - R^2, \Theta]$, whereas it is currently only defined on $[\kappa \Theta, \Theta]$.
Once we do this, we can set $B_\kappa$ to be an antiderivative of $F_\kappa$ so that $f^{+B}_\kappa(\vec{w}) = f(\vec{w}) - \lambda B_\kappa(\Theta - \| \vec{w}\|^2)$ is fully defined on the domain
$\{\vec{w} : \| \vec{w}\| \leq R\}$.
We then analyze the strong convexity and smoothness of $f^{+B}_\kappa$ over the domain. Finally, we determine an upperbound on $\zeta$ by bounding the norm of the difference between the gradient of $f^{+B}_\kappa$:
\[
\grad f^{+B}_\kappa(\vec{w}) =
\frac{1}{N} \sum_{(\vec{x}, y) \in D} y \phi'_1(\langle \vec{w}, \vec{x} \rangle) +
 (1-y) \phi'_0(\langle \vec{w}, \vec{x} \rangle) 
+ 2\lambda F_\kappa(\Theta - \|\vec{w}\|^2)
\]
and the approximate gradient:
\[
\frac{1}{N}\sum_{(\vec{x}, y) \in D} y \tilde{p}'_1(\langle \vec{w}, \vec{x} \rangle) +
 (1-y) \tilde{p}'_0(\langle \vec{w}, \vec{x} \rangle) 
+ 2\lambda P_\kappa(\Theta - \|\vec{w}\|^2).
\]

\paragraph{Defining $F_\kappa$.} 
Recall that $F_\kappa(x) = \frac{1}{x}$ for $x \in [\kappa \Theta, \Theta]$ and that $P_\kappa(x)$ is defined on the entire interval $[\Theta - R^2, \kappa \Theta]$.
Recall that we further require $P_\kappa$ to be monotonically decreasing on the interval $[\Theta-R^2, \kappa \Theta]$.
For $x \in [\Theta - R^2, \kappa \Theta]$, we set
$F_\kappa(x) := (\frac{1}{\kappa \Theta} - P(\kappa \Theta)) + P_\kappa(x)$.

\paragraph{The Hessian of $-\lambda B_\kappa(\Theta - \|\vec{w}\|^2)$.}
The 
Hessian of the function $-\lambda B_\kappa(\Theta - \|\vec{w}\|^2)$ has the form
\[
4\lambda B''(\Theta - \|\vec{w}\|^2) \cdot \vec{w} \cdot \vec{w}^T - 2\lambda B'(\Theta - \|\vec{w}\|^2) \cdot \vec{I}_m.
\]
Since the first term is an outer product, it has a single non-zero eigenvalue which is equal to $4\lambda B''(\Theta - \|\vec{w}\|^2) \| \vec{w}\|^2$.
The second term has $m$ eigenvalues
$- 2\lambda B'(\Theta - \|\vec{w}\|^2)$.
Re-written in terms of $F_\kappa$, these are equal to
$-4\lambda F'_\kappa(\Theta - \|\vec{w}\|^2) \| \vec{w}\|^2$.
and
$2\lambda F_\kappa(\Theta - \|\vec{w}\|^2)$.

\paragraph{Strong Convexity of $f^{+B}_\kappa$.}
Given the above, the strong convexity parameter of
$-\lambda B_\kappa$ is equal to the minimum value of
$-4\lambda F'_\kappa(\Theta - \|\vec{w}\|^2) \| \vec{w}\|^2 + 2\lambda F_\kappa(\Theta - \|\vec{w}\|^2)$
over the domain
$\{\vec{w} : \| \vec{w} \|^2 \leq R\}$.
Since $F_\kappa$ is strictly decreasing, $-4\lambda F'_\kappa(\Theta - \|\vec{w}\|^2)$ is always non-negative, therefore
the lower bound of the first term is $0$ and occurs when $\|\vec{w}\|^2 = 0$.
The second term is also non-negative and the lower bound is $\frac{2\lambda}{\Theta}$ and occurs when $\|\vec{w}\|^2 = 0$.
Thus, the total strong convexity parameter of $f^{+B}_\kappa$ is equal to $\mu_{f^{+B}_\kappa} = \mu_f + \frac{2\lambda}{\Theta}$.

\paragraph{Smoothness of $f^{+B}_\kappa$.}
The smoothness parameter of
$-\lambda B_\kappa$ is equal to the maximum value of
$-4\lambda F'_\kappa(\Theta - \|\vec{w}\|^2) \| \vec{w}\|^2 + 2\lambda F_\kappa(\Theta - \|\vec{w}\|^2)$
over the domain
$\{\vec{w} : \| \vec{w}\| \leq R\}$.
Let $\mathsf{m}_{F'}$ denote the minimum of
$F'_\kappa(x)$ over the interval $[\Theta-R^2, \kappa \Theta]$. 
The minimum of $F'_\kappa(x)$ over the interval $[\kappa \Theta, \Theta]$ is $-1/(\kappa\Theta)^2$.
Since $F_\kappa(x)$ is decreasing on $[\Theta - R^2, \Theta]$, the maximum of the second term is
$F_\kappa(\Theta - R^2) = (\frac{1}{\kappa \Theta} - P(\kappa \Theta)) + P_\kappa(\Theta - R^2)$ on the interval
$[\Theta - R^2, \kappa \Theta]$
and $\frac{1}{\kappa \Theta}$ on the interval
$[\kappa \Theta, \Theta]$.
Thus, the total smoothness parameter of $f^{+B}_\kappa$ is equal to $\beta_{f^{+B}_\kappa} = \beta_f + \max \{ -4\lambda \mathsf{m}_{F'} + (\frac{1}{\kappa \Theta} - P(\kappa \Theta)) + P_\kappa(\Theta - R^2), 
\frac{4 \lambda (1-\kappa) \Theta}{(\kappa \Theta)^2} + \frac{1}{\kappa \Theta}\}$.

\paragraph{Setting $\zeta$.}
By the definition of
$F_\kappa$ on the interval
$[\Theta-R^2, \kappa \Theta]$, the difference between 
$|F_\kappa(x)-P_\kappa(x)| = |F_\kappa(\kappa \Theta) - P_\kappa(\kappa \Theta)|$.
On the other hand,
$|F_\kappa(\kappa \Theta) - P_\kappa(\kappa \Theta)| \leq e_B$.
Thus, the maximum error between
$F_\kappa$ and $P_\kappa$ on the entire interval
$[\Theta - R^2, \Theta]$ is at most 
$e_B$.
The difference in gradients is therefore
at most $2\lambda e_B \| \vec{w} \| \leq 2\lambda e_B R$.
Thus, the total value of $\zeta$
is at most $e_f \sqrt{m} + 2\lambda e_B R =
\zeta_f + 2\lambda e_B R$.

\section{Guidance on Polynomial Approximation and Hyperparameter Selection}\label{sec:guidance}

For concreteness, let us assume that the objective funciton $f$
has the form
\[
f(\vec{w}) = \sum_{(\vec{x}, y) \in D} y \phi_1(\langle \vec{w}, \vec{x} \rangle) + (1-y)\phi_0(\langle \vec{w}, \vec{x} \rangle).
\]
In particular, our experiments pertain to the logistic regression cost function, which has the above form.
In this case, the approximate objective function $g$ has the form
\[
g(\vec{w}) = \sum_{(\vec{x}, y) \in D} y \tilde{p}_1(\langle \vec{w}, \vec{x} \rangle)
+ (1-y) \tilde{p}_0(\langle \vec{w}, \vec{x} \rangle),
\]
where for $y \in \{0,1\}$,
$\tilde{p}'_y$ is an $e_f$-error approximation to 
$\phi'_y$ over some interval
$[-z_{min}, z_{max}]$.

The quality and effectiveness of training under the approximate objective function $g$ are governed by three critical parameters:

\begin{enumerate}
    \item The smoothness constant $\beta_g$ of the function $g$.
    \item The approximation error $e_f$ incurred when replacing the true gradient term $\phi'_y$ with its polynomial approximation $\tilde{p}'_y$.
    \item The approximation interval $[-z_{\min}, z_{\max}]$ over which the guarantees of the polynomial approximation hold.
\end{enumerate}

These parameters impact the training outcome in the following ways:

\begin{enumerate}
    \item The smoothness constant $\beta_g$ directly influences the convergence rate of the gradient descent algorithm. Specifically, a smaller $\beta_g$ yields faster convergence.
    
    \item The approximation error $e_f$ affects the proximity of the output weights $\vec{w}_{g,T}$, produced by approximate gradient descent, to the optimal weights $\vec{w}^*_f$ that minimize the original objective function $f$. A smaller $e_f$ leads to outputs that more closely approximate the true optimum.
    
    \item The choice of approximation interval is crucial for ensuring convergence. If the interval $[-z_{\min}, z_{\max}]$ is not selected appropriately, the guarantees provided by the polynomial approximation may not hold. In such cases, the approximate gradient descent algorithm can diverge, producing weight vectors with unbounded magnitude (i.e., $\|\vec{w}_{g,T}\| \to \infty$), and leading the approximate objective value $g(\vec{w}_{g,T})$ to diverge toward $-\infty$.
    This does not contradict Theorem~\ref{thm:npdp_converge}/Claim~\ref{claim:critical_point}, because the theorem/claim assumes the existence of a lower bound $L$ on the objective. In the cases considered here, $g$ has no global minimum and therefore no such lower bound exists.
\end{enumerate}

We will illustrate each of these effects in section \ref{sec:guidance_smooth} and \ref{sec:guidance_interval}, including an example that demonstrates the consequences of an improperly chosen approximation interval. Motivated by these considerations, the next subsection \ref{sec:guide_modified_GD} provides guidance for selecting appropriate polynomial approximations and training hyperparameters, enabling stable, privacy-preserving training under fully homomorphic encryption.

\subsection{Smoothness, Approximation Error, and Convergence Behavior}\label{sec:guidance_smooth}

In the setting of outsourced machine learning under fully homomorphic encryption (FHE), eliminating the need for data-dependent hyperparameter tuning is essential. Unlike conventional training pipelines, where convergence can be monitored and hyperparameters such as the learning rate or the number of training steps can be adjusted dynamically, outsourced training under non-interactive FHE must operate entirely over encrypted data and without interaction.

To support such a non-interactive protocol, one must either (1) enable the client to privately determine appropriate hyperparameters in a lightweight \emph{pre-processing} phase and communicate them in a differentially-private manner to the server, or (2) allow the server to select hyperparameters in a manner that is fully data-independent. In both cases, hyperparameters must be fixed in advance to enable one-shot execution by the server, without revealing intermediate computation or requiring feedback from the client.

Importantly, improperly selected hyperparameters may prevent convergence entirely. This is precisely where our theoretical results provide critical guidance. Theorem~\ref{thm:npdp_converge}/Claim~\ref{claim:critical_point} establish that setting the learning rate to $\eta = \frac{1}{\beta_g}$ and the number of gradient descent iterations to
\begin{equation} \label{eq:num_iter}
T = \frac{2\beta_g\left(g(\vec{w}_{g,0}) - L)\right)}{\rho^2},
\end{equation}
for a fixed $\rho > 0$, ensures convergence. These values depend on the properties of the polynomial approximation used for training and can be determined without reference to the underlying data distribution.

Moreover, our experimental results demonstrate that these hyperparameters can be estimated in a data-independent manner and still yield effective training performance in practice. This enables efficient, private, and non-interactive outsourced training entirely on the encrypted data.

In our experiments, we consider the original objective function $f$ to be the cost function for logistic regression and we set
\begin{itemize}
    \item Learning rate as
$\eta_{\ref{fig:grad_desc}}  \approx \frac{1}{\beta_\text{approx}}$, where $\beta_\text{approx}=\max\{\tilde{p}'(z)\}\cdot m$, is an upperbound on the smoothness constant $\beta_g$, and $\max\{\tilde{p}'(z)\}$ is the maximum value of $\tilde{p}'_0(z), \tilde{p}'_1(z)$ over the interval $[-z_{min}, z_{max}]$. 
\footnote{Note that one could instead use the potentially smaller value of $\frac{\max\{\tilde{p}'(z)\}}{N} \sum_{\vec{x}}\vec{x}^T \vec{x}$ as an upperbound on the smoothness. This quantity is data-dependent, but the Client can easily compute $\frac{1}{N} \sum_{\vec{x}}\vec{x}^T \vec{x}$ in a pre-processing step and release it to the Server in a differentially private manner.}
\item Training steps $T\approx \frac{2\beta_\text{approx} (g(\vec{w}_{g,0}))}{\rho^2}$ (taking $L = 0$).
\end{itemize}

\begin{table}[htbp]
\centering
\caption{Summary of Approximate-GD Training with Fixed Hyperparameters (7-Degree Polynomial, $\rho = 0.1$ for MNIST, $\rho = 0.05$ for others) using either minimax (MM) or least squares (LS) approximation. LR stands for learning rate.}
\begin{tabular}{@{}clllrrrrrr@{}}
\toprule
\textbf{Dataset} & \textbf{Pair} & \textbf{model} & \textbf{Method} & \textbf{Error} & \textbf{Smoothness} & \textbf{LR Est.} & \textbf{$T \cdot \rho^2$ Est.} & \textbf{Steps@$\rho$} & \textbf{Accuracy (\%)} \\
\midrule
\multirow{6}{*}{MNIST} 
  & \multirow{2}{*}{1} & model-1 & MM@[-20, 20]      & 0.1920 & 19.1185 & 0.0523 & 26.50 & 2650 & 96.3529 \\
  &                    & model-2 & LS@[-20, 20]      & 0.1439 & 35.1523 & 0.0284 & 48.73 & 4873 & 96.3937 \\
  \cmidrule{2-10}
  & \multirow{2}{*}{2} & model-1 & MM@[-25, 25]      & 0.2335 & 15.3325 & 0.0652 & 21.26 & 2125 & 96.2609 \\
  &                    & model-2 & LS@[-25, 25]      & 0.1834 & 27.4613 & 0.0364 & 38.07 & 3806 & 96.3328 \\
  \cmidrule{2-10}
  & \multirow{2}{*}{3} & model-1 & MM@[-30, 30]      & 0.3904 & 13.6093 & 0.0735 & 18.87 & 1886 & 96.1875 \\
  &                    & model-2 & LS@[-30, 30]      & 0.4250 & 16.8512 & 0.0593 & 23.36 & 2336 & 96.2694 \\
\midrule
\multirow{6}{*}{Adult}
  & \multirow{2}{*}{1} & model-1 & MM@[-10, 10]      & 0.0632 & 2.3223  & 0.4306 & 3.22  & 1287 & 84.4332 \\
  &                    & model-2 & LS@[-10, 10]      & 0.0498 & 2.5594  & 0.3907 & 3.55  & 1419 & 84.3991 \\
  \cmidrule{2-10}
  & \multirow{2}{*}{2} & model-1 & MM@[-15, 15]      & 0.1356 & 1.6680  & 0.5995 & 2.31  & 924  & 84.3650 \\
  &                    & model-2 & LS@[-15, 15]      & 0.0955 & 2.6682  & 0.3748 & 3.70  & 1479 & 84.3309 \\
  \cmidrule{2-10}
  & \multirow{2}{*}{3} & model-1 & MM@[-20, 20]      & 0.1920 & 1.2681  & 0.7886 & 1.76  & 703  & 84.3309 \\
  &                    & model-2 & LS@[-20, 20]      & 0.1439 & 2.3315  & 0.4289 & 3.23  & 1292 & 84.2285 \\
\midrule
\multirow{6}{*}{Compas}
  & \multirow{2}{*}{1} & model-1 & MM@[-7, 7]        & 0.0224 & 1.7384  & 0.5752 & 2.41  & 963  & 67.6212 \\
  &                    & model-2 & LS@[-7, 7]        & 0.0230 & 1.8121  & 0.5518 & 2.51  & 1004 & 67.6212 \\
  \cmidrule{2-10}
  & \multirow{2}{*}{2} & model-1 & MM@[-10, 10]      & 0.0632 & 1.4291  & 0.6997 & 1.98  & 792  & 67.4827 \\
  &                    & model-2 & LS@[-10, 10]      & 0.0498 & 1.5750  & 0.6349 & 2.18  & 873  & 67.5751 \\
  \cmidrule{2-10}
  & \multirow{2}{*}{3} & model-1 & MM@[-15, 15]      & 0.1356 & 1.0264  & 0.9742 & 1.42  & 569  & 67.2055 \\
  &                    & model-2 & LS@[-15, 15]      & 0.0955 & 1.6420  & 0.6090 & 2.28  & 910  & 67.4436 \\
\midrule
\multirow{6}{*}{Credit}
  & \multirow{2}{*}{1} & model-1 & MM@[-10, 10]      & 0.0632 & 4.1087  & 0.2434 & 5.70  & 2278 & 80.6229 \\
  &                    & model-2 & LS@[-10, 10]      & 0.0498 & 4.5281  & 0.2208 & 6.28  & 2510 & 80.6452 \\
  \cmidrule{2-10}
  & \multirow{2}{*}{2} & model-1 & MM@[-15, 15]      & 0.1356 & 2.9510  & 0.3389 & 4.09  & 1636 & 80.4378 \\
  &                    & model-2 & LS@[-15, 15]      & 0.0955 & 4.7207  & 0.2118 & 6.54  & 2617 & 80.4341 \\
  \cmidrule{2-10}
  & \multirow{2}{*}{3} & model-1 & MM@[-20, 20]      & 0.1920 & 2.2435  & 0.4457 & 3.11  & 1244 & 80.2970 \\
  &                    & model-2 & LS@[-20, 20]      & 0.1439 & 4.1250  & 0.2424 & 5.72  & 2287 & 80.2822 \\
\bottomrule
\end{tabular}
\label{tab:no-dp}
\end{table}

This work is the first to explicitly address the challenge of hyperparameter selection in the context of fully homomorphic encryption (FHE)-based training by proposing a principled method for estimating both the learning rate and the number of training iterations \emph{in advance}, based on the properties of the chosen polynomial approximation.

After establishing how to select the learning rate and training steps given a fixed polynomial approximation, we now turn to the choice of the polynomial itself.
In contrast to prior work, which either relies on manual, data-dependent, tuning or left them unspecified, we provide analytically grounded estimates. This advance enables fully automated, non-interactive, and privacy-preserving training under FHE that is both practical and efficient. Notably, prior FHE-based training works typically adopted a single polynomial approximation scheme, such as minimax or least-squares, without a principled justification of the resulting trade-offs. Our analysis shows that the choice of polynomial approximation has a direct impact on training behavior such as convergence rate and the optimization error. Even with same predefined approximation interval and with the same fixed degree, different approximation schemes can obtain polynomials with different smoothness and approximation error, further influencing overall training behavior.

\begin{figure}[htbp]
  \centering
  \begin{tabular}{ccc}
    \begin{subfigure}{0.3\textwidth}
      \includegraphics[width=\linewidth]{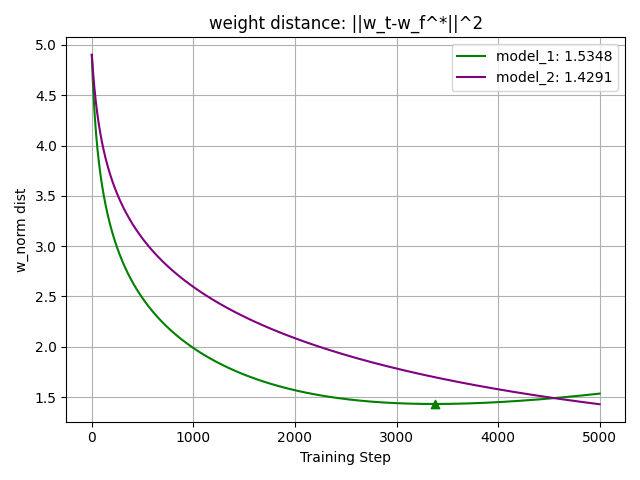}
      \caption{Mnist Pair-1}
    \end{subfigure} &
    \begin{subfigure}{0.3\textwidth}
      \includegraphics[width=\linewidth]{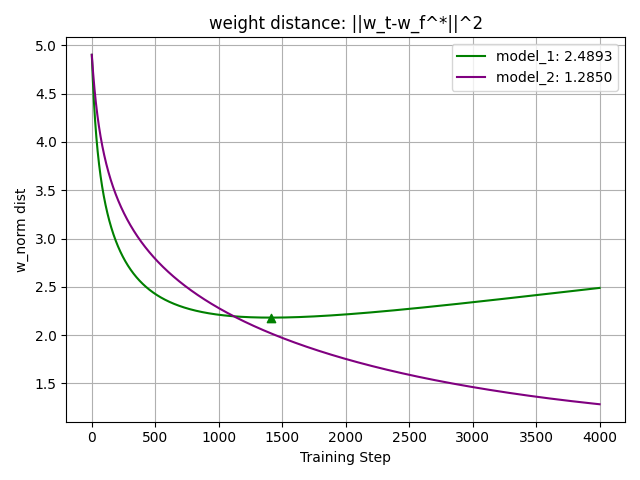}
      \caption{Mnist Pair-2}
    \end{subfigure} &
    \begin{subfigure}{0.3\textwidth}
      \includegraphics[width=\linewidth]{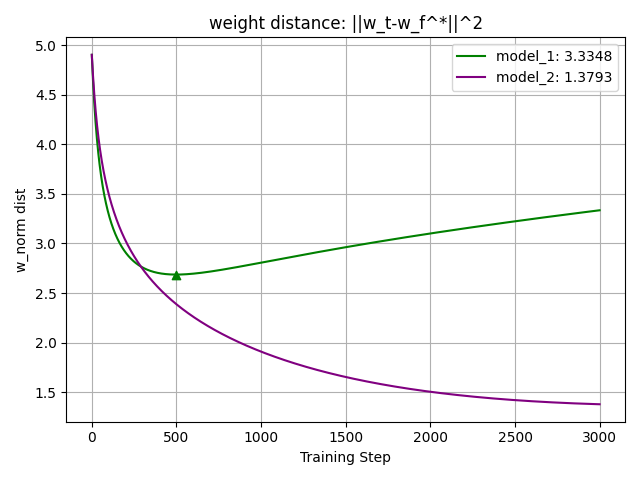}
      \caption{Mnist Pair-3}
    \end{subfigure} \\
    
    \begin{subfigure}{0.3\textwidth}
      \includegraphics[width=\linewidth]{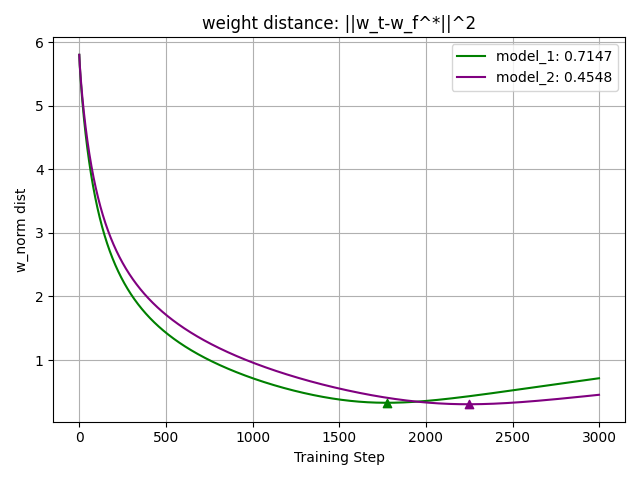}
      \caption{Adult Pair-1}
    \end{subfigure} &
    \begin{subfigure}{0.3\textwidth}
      \includegraphics[width=\linewidth]{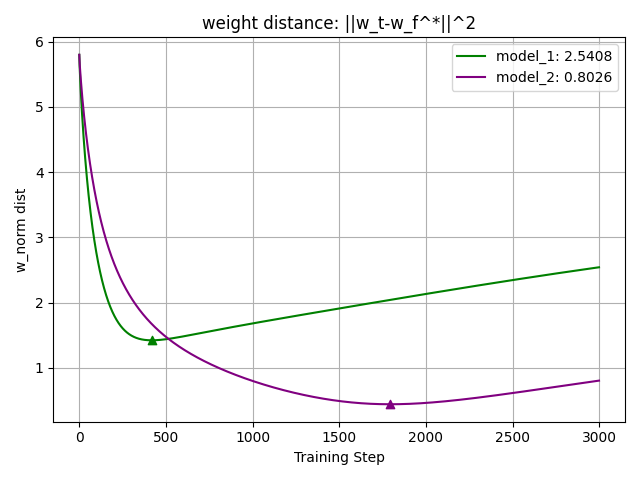}
      \caption{Adult Pair-2}
    \end{subfigure} &
    \begin{subfigure}{0.3\textwidth}
      \includegraphics[width=\linewidth]{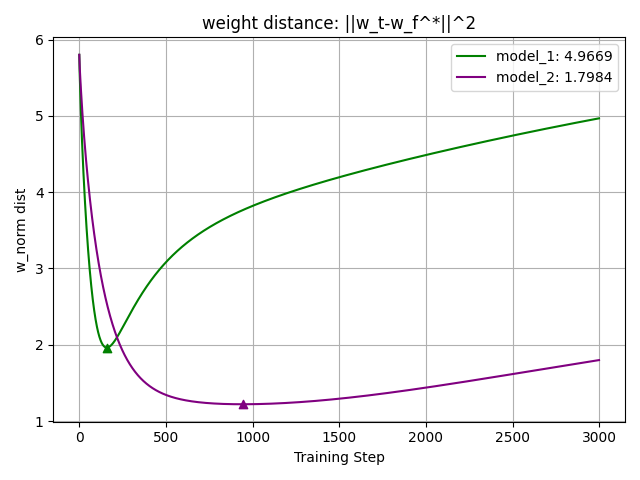}
      \caption{Adult Pair-3}
    \end{subfigure} \\
    
    \begin{subfigure}{0.3\textwidth}
      \includegraphics[width=\linewidth]{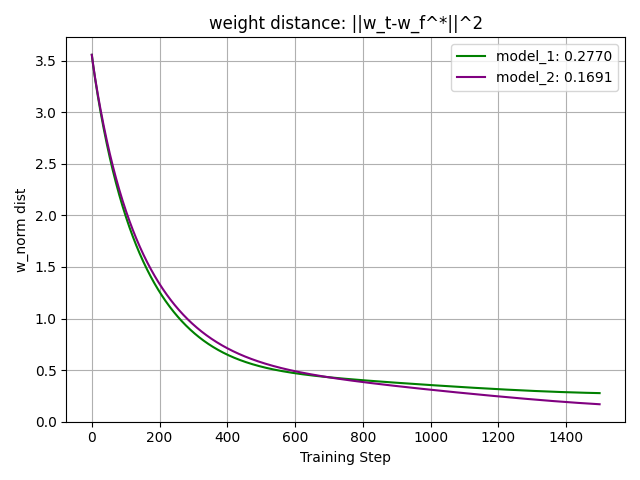}
      \caption{Compas Pair-1}
    \end{subfigure} &
    \begin{subfigure}{0.3\textwidth}
      \includegraphics[width=\linewidth]{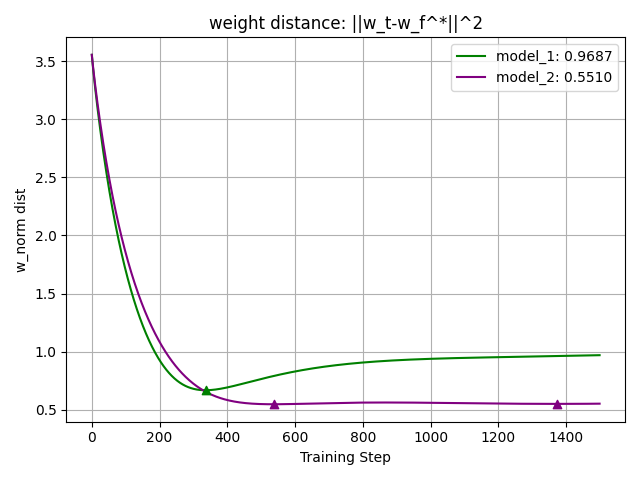}
      \caption{Compas Pair-2}
    \end{subfigure} &
    \begin{subfigure}{0.3\textwidth}
      \includegraphics[width=\linewidth]{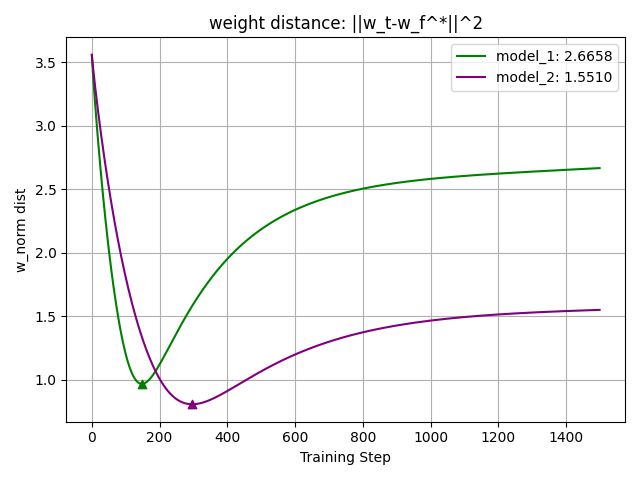}
      \caption{Compas Pair-3}
    \end{subfigure}\\

    \begin{subfigure}{0.3\textwidth}
      \includegraphics[width=\linewidth]{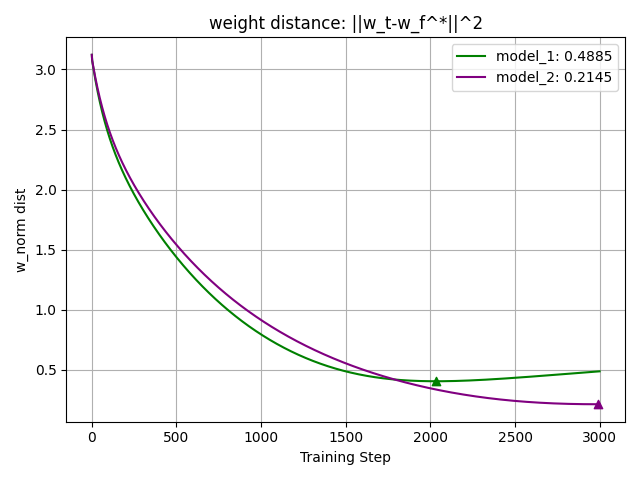}
      \caption{Credit Pair-1}
    \end{subfigure} &
    \begin{subfigure}{0.3\textwidth}
      \includegraphics[width=\linewidth]{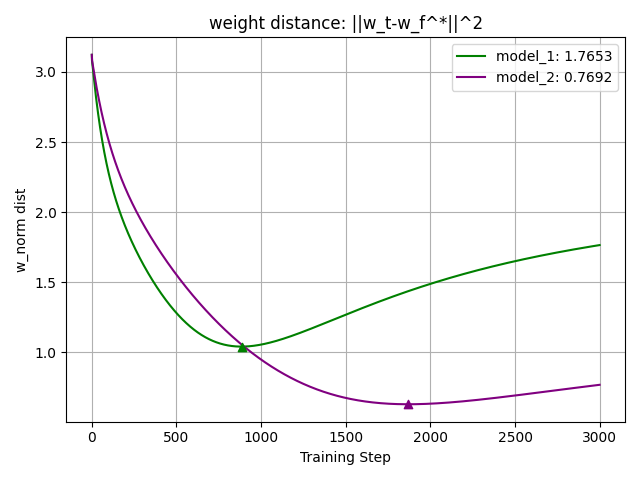}
      \caption{Credit Pair-2}
    \end{subfigure} &
    \begin{subfigure}{0.3\textwidth}
      \includegraphics[width=\linewidth]{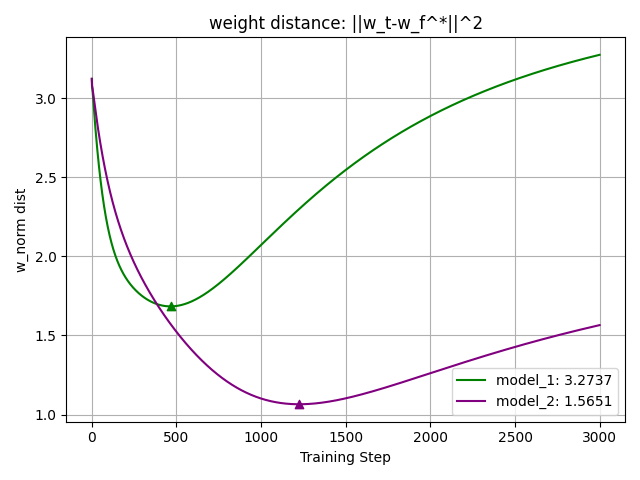}
      \caption{Credit Pair-3}
    \end{subfigure}
  \end{tabular}
  \caption{Pairwise comparisons of $\|\vec{w}_T - \vec{w}^*\|^2$ during training for two polynomial approximations.
Model~1 uses a minimax polynomial approximation, while Model~2 uses a least-squares approximation.
In each subfigure, the boxed values report the final optimization error $\|\vec{w}_T - \vec{w}^*\|^2$ after $T$ iterations.
We observe that Model~1 has a smaller $T\rho^2$ value (see Table~\ref{tab:no-dp}) and converges more rapidly, consistent with its stronger smoothness properties.
In contrast, Model~2 has lower approximation error (Table~\ref{tab:no-dp}) and, in most cases, achieves a smaller final optimization error.}
  \label{fig:no-dp test result}
\end{figure}

Our convergence analysis provides practical guidance for selecting the polynomial approximation used in training. In particular, the theoretical results indicate that convergence is faster when the smoothness parameter $\beta_g$ is smaller. As illustrated in Figure~\ref{fig:no-dp test result} and summarized in Table~\ref{tab:no-dp}, 
we compare training using the minimax and least-squares polynomial approximations of a fixed degree.
Interestingly, the experimental results establish that the two approximation schemes exhibit a clear and systematic trade-off. The minimax approximation consistently yields smaller smoothness (see Table~\ref{tab:no-dp}), and indeed exhibits faster convergence, and this is the case even when the learning rate is tuned to the data, as opposed to being set a priori (see Figure~\ref{fig:no-dp test result}). In contrast, the least squares approximation consistently achieves smaller approximation error (see Table~\ref{tab:no-dp}) and typically also achieves smaller optimization error $\|\vec{w}_{g,T} - \vec{w}^*_f \|^2$ (see Figure~\ref{fig:no-dp test result}).

Specifically, training with polynomial approximations that yield smaller values of $T \cdot \rho^2$, where $T$ and $\rho$ are defined in terms of $\beta_g$, results in more rapid convergence toward the optimal solution $\vec{w}_f^*$. Notably, this quantity can be approximated using a data-independent estimate $\beta_{\text{approx}}$, which may be computed by the server. This implies that selecting a polynomial approximation with smaller $\beta_\text{approx}$ allows similar training performance to be achieved using fewer gradient steps, thereby improving overall efficiency.

On the other hand, Theorem~\ref{thm:npdp_converge} shows that a larger value of $\zeta$, which captures the distance between the gradient of the approximate objective $g$ and that of the true objective $f$, corresponds to a greater deviation between the weights output by approximate gradient descent and the true minimizer $\vec{w}_f^*$. 

Recall that the gradient of the original objective function $f$ takes the form:
\[
\nabla f(\vec{w}) = \sum_{(\vec{x}, y) \in D} \phi'_y(\langle \vec{w}, \vec{x} \rangle) \cdot \vec{x},
\]
while the gradient of the approximated objective $g$ is:
\[
\nabla g(\vec{w}) = \sum_{(\vec{x}, y) \in D} \tilde{p}'_y(\langle \vec{w}, \vec{x} \rangle) \cdot \vec{x}.
\]
We adopt the heuristic that the approximation error $e_f$, defined as the maximum deviation between $\phi'_y$ and $\tilde{p}'_y$, contributes directly to $\zeta$. Importantly, $e_f$ is a data-independent quantity that can be computed by the server prior to training.

Empirical results support this relationship: in nearly all tested cases, larger values of $e_f$ lead to final output weights that lie farther from the optimum (see Figure~\ref{fig:no-dp test result}). These findings highlight the importance of careful polynomial selection, guided by both theoretical analysis and pre-computable approximation metrics.

\subsection{Approximation Interval and Training Stability}\label{sec:guidance_interval}

As discussed in Section~\ref{sec:guidance}, selecting an appropriate approximation interval is critical for ensuring both the accuracy and stability of training under polynomial approximations. If the interval is chosen too large, the approximation error increases for a fixed polynomial degree. This phenomenon is clearly demonstrated in Table~\ref{tab:no-dp}, where smaller intervals consistently yield lower approximation errors for the same degree of polynomial.

On the other hand, if the interval is chosen too small, there is a significant risk that the quantity $|\langle \vec{w}_i, \vec{x} \rangle|$ will exceed the approximation range during training. While in the case of logistic regression loss function is lower-bounded by zero regardless of the input magnitude, the polynomial approximation $g$ may be \emph{unbounded below} outside the interval. Specifically, as $|\langle \vec{w}_i, \vec{x} \rangle| \to \infty$, the value of the loss $g(\vec{w})$ can diverge to $-\infty$, resulting in the lower bound parameter $L$ in Equation~(\ref{eq:num_iter}) becoming unbounded. This can lead to instability in training: the approximate gradient descent procedure may diverge, producing weight vectors $\vec{w}_{g,T}$ with unbounded magnitude and corresponding loss values that tend toward $-\infty$.

An illustrative example of this behavior arises when training on the Adult dataset with the approximation interval set to $[-7, 7]$. As shown in Figure~\ref{fig:outbound_No-DP}, during later iterations, the value of $|\langle \vec{w}_i, \vec{x} \rangle|$ exceeds the interval, causing the magnitude of the weights to diverge and the loss to descend toward negative infinity. This ultimately leads to numerical instability, resulting in NaN values in both the training loss and gradient computations.

It is unclear how one can determine a suitable approximation interval for $|\langle \vec{w}_i, \vec{x} \rangle|$ in a non-dynamic or data-independent manner. In such cases, training can instead be performed using the modified objective function $f^{+B}_\kappa$, which incorporates a barrier function as defined in Section~\ref{sec:no_clip}. If differential privacy is not required, the barrier can be employed without injecting noise. 

This approach enforces a strict upper bound on the quantity $|\langle \vec{w}_i, \vec{x} \rangle|$ throughout training, regardless of the number of iterations $T$ (See Lemma~\ref{lem:weight_magnitude}). Consequently, with appropriately chosen approximation functions $\tilde{p}'$ and $P_\kappa$ (See Section~\ref{sec:no_clip}), the objective $g(\vec{w}_T)$ remains bounded from below by a constant $L$, and convergence is guaranteed via Theorem~\ref{thm:npdp_converge}. Figure~\ref{fig:inbound_No-DP} illustrates successful convergence using this barrier-augmented objective.

\begin{figure}[htbp]
    \centering
    \begin{subfigure}[t]{0.3\textwidth}
        \includegraphics[width=\linewidth]{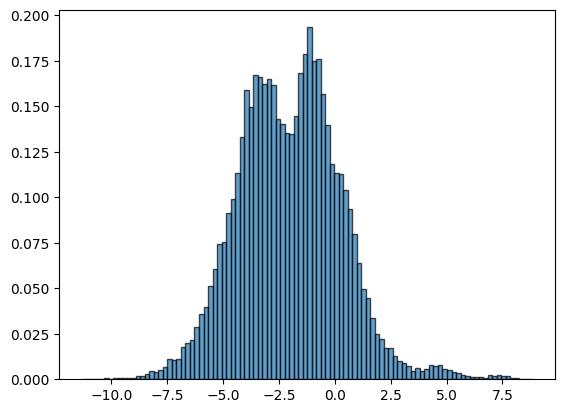}
        \caption{z values at t=1000}
        \label{fig:outbound_sub1}
    \end{subfigure}
    \hfill
    \begin{subfigure}[t]{0.3\textwidth}
        \includegraphics[width=\linewidth]{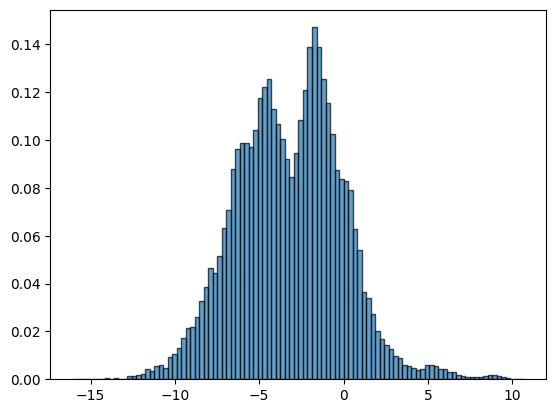}
        \caption{z values at t=1080}
        \label{fig:outbound_sub2}
    \end{subfigure}
    \hfill
    \begin{subfigure}[t]{0.3\textwidth}
        \includegraphics[width=\linewidth]{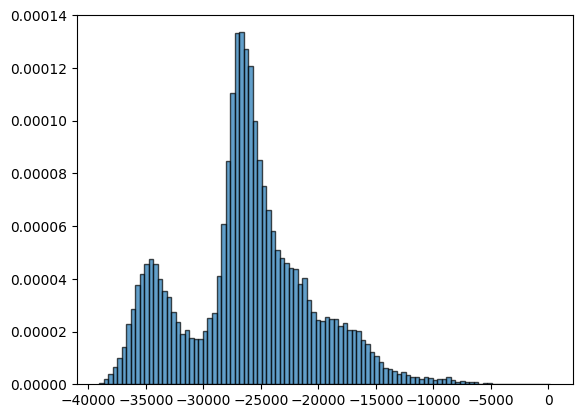}
        \caption{z values at t=1086}
        \label{fig:outbound_sub3}
    \end{subfigure}

    \vspace{0.5cm}

    \begin{subfigure}[t]{0.35\textwidth}
        \includegraphics[width=\linewidth]{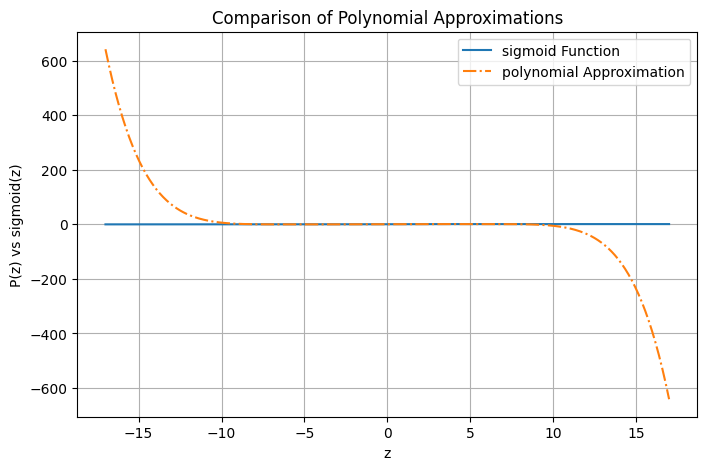}
        \caption{Comparison of sigmoid function and its polynomial approximation over a wide range. They exhibit close on $[-10, 10]$,  but diverge outside this range. Polynomial grows unbounded as $z\rightarrow \pm\infty$, while sigmoid remains bounded between 0 and 1.}
        \label{fig:outbound_sub4}
    \end{subfigure}
    \hfill
    \begin{subfigure}[t]{0.6\textwidth}
        \includegraphics[width=\linewidth]{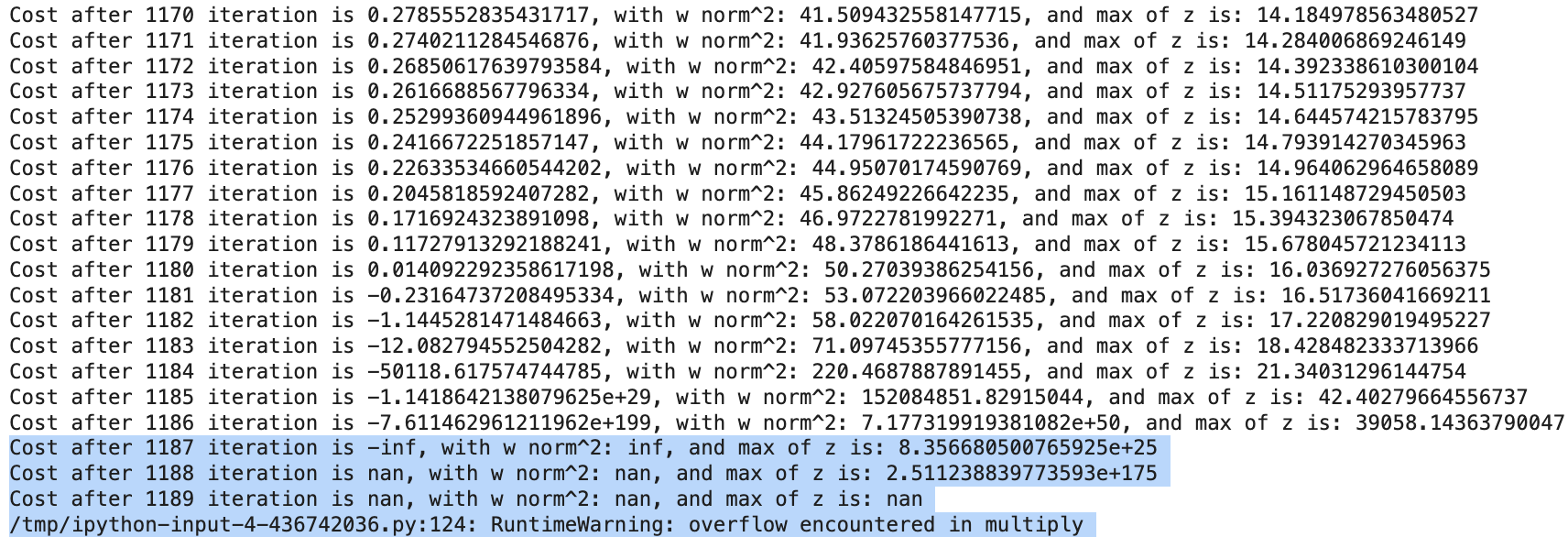}
        \caption{Due to the unbounded growth of the linear hypothesis value $z$ in the later training iteration, value $\tilde{p}'(z)$ diverges, and so does gradient and training loss, leading to a runtime error with \texttt{nan}.}       \label{fig:outbound_sub5}
    \end{subfigure}
    \caption{We trained on the Adult dataset using a polynomial approximation over the range $[-7, 7]$. However, as training progresses, the linear hypothesis value $z = \langle \vec{w}, \vec{x} \rangle$ sometimes exceeds this range. Since the polynomial $\tilde{p}'(z)$ is only accurate within $[-7, 7]$, evaluating it outside this range causes it to diverge, eventually leading to runtime errors as $\tilde{p}'(z) \to \infty$.
}
    \label{fig:outbound_No-DP}
\end{figure}

\begin{figure}[htbp]
    \centering
    \begin{subfigure}[t]{0.3\textwidth}
        \includegraphics[width=\linewidth]{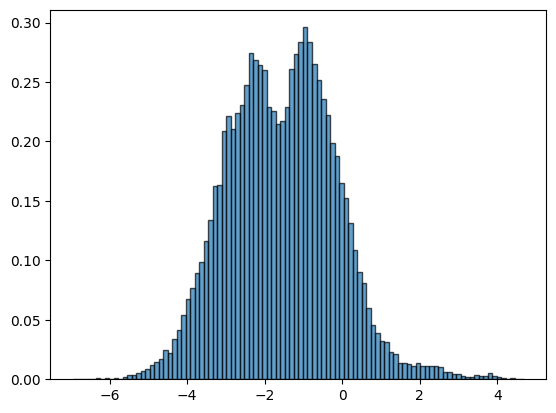}
        \caption{z values at t=1080}
        \label{fig:inbound_sub1}
    \end{subfigure}
    \hfill
    \begin{subfigure}[t]{0.3\textwidth}
        \includegraphics[width=\linewidth]{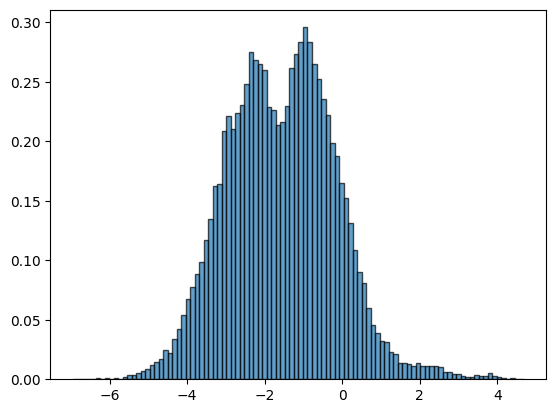}
        \caption{z values at t=1086}
        \label{fig:inbound_sub2}
    \end{subfigure}
    \hfill
    \begin{subfigure}[t]{0.3\textwidth}
        \includegraphics[width=\linewidth]{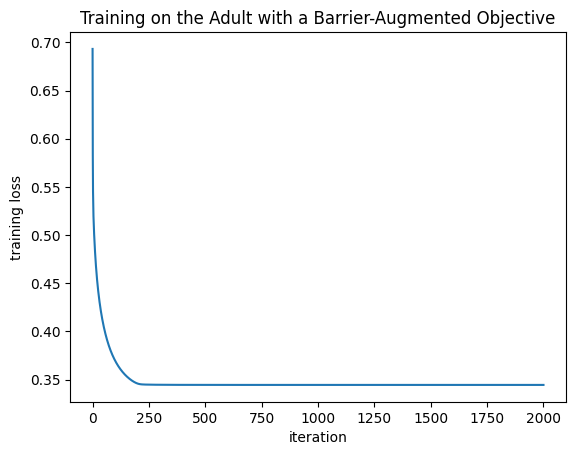}
        \caption{training loss converge over training with barrier-augmented objective}
    \label{fig:inbound_sub3}
    \end{subfigure}
\caption{We trained on the Adult dataset using the same polynomial approximation, but with a modified objective function incorporating a barrier function (as in Algorithm~\ref{fig:modified_grad_desc}). Throughout the whole training process, the linear hypothesis value $z$ remained bounded and and the training successfully converged.}
\label{fig:inbound_No-DP}
\end{figure}

A comparison between Figure~\ref{fig:outbound_No-DP} and Figure~\ref{fig:inbound_No-DP} clearly demonstrates the advantage of training with a barrier-augmented objective function (as in Algorithm~\ref{fig:modified_grad_desc}). Both experiments were conducted using the same dataset, polynomial approximation, learning rate, and number of training steps. The only distinction is that the model in Figure~\ref{fig:inbound_No-DP} was trained using an objective that incorporates a barrier function.
The results show that the barrier-augmented objective ensures stable convergence. As illustrated in Figure~\ref{fig:inbound_sub3}, the training process converges reliably under this formulation. Specifically, at training steps 1080 and 1086, the inner product values $z = \langle \vec{w}_i, \vec{x} \rangle$ remain bounded within the interval $[-7, 7]$, as shown in Figures~\ref{fig:inbound_sub1} and~\ref{fig:inbound_sub2}, respectively.
In contrast, training on the standard (non-barrier) objective exhibits unbounded growth in $z$. At step 1080, the values expand to the range $[-15, 10]$ (Figure~\ref{fig:outbound_sub2}), and by step 1086, they diverge dramatically, reaching nearly $-4000$ (Figure~\ref{fig:outbound_sub3}). 

\newpage
\subsection{Data-Independent Selection of Polynomial Approximations and Hyperparameters}
\label{sec:guide_modified_GD}

We now describe a fully data-independent procedure for selecting the polynomial approximations and training hyperparameters for the training procedure in Algorithm~\ref{fig:modified_grad_desc}, for our target application of logistic regression. This procedure ensures that all theoretical conditions are satisfied and can be executed \emph{a priori}, without access to the underlying dataset.
The sequence of steps presented in this section is just one possible way to select the parameters and approximations.  While Theorem~\ref{th:dp_no_clipping} provide the constraints that determine feasible values for guaranteeing DP, alternative orders or strategies for adjusting $\Theta$, $\lambda$, and $\kappa$, and the polynomial approximations are equally valid, as long as all conditions in theorem are eventually satisfied. This flexibility allows practitioners to adopt different sequences or heuristics for parameter selection without affecting the correctness or differential privacy guarantees of Algorithm~\ref{fig:modified_grad_desc}.

We begin by initializing the learning rate following smoothness-based analysis, we initialize the learning rate as
$
\eta_{\ref{fig:modified_grad_desc}} \approx \frac{1}{2\beta_{\text{approx}}},
$
where $\beta_{\text{approx}} = m/4$. We also fix a target approximation tolerance $E_f$ for $\phi'$, which is the sigmoid function for the case of logistic regression (e.g. in practice, we initialize $E_f = 0.05$.)

Next, we select the parameters governing the barrier function, namely the threshold $\Theta$ and the penalty coefficient $\lambda$. The threshold $\Theta$ determines the maximum admissible magnitude of 
$\|\vec{w}_i\|$, and therefore the maximum magnitude of the
inner products $|\langle \vec{w}_i, \vec{x} \rangle|$. Thus, $\Theta$ defines the approximation interval required by Theorem~\ref{th:dp_no_clipping}, while $\lambda$ controls the strength of the barrier penalty. Larger values of $\Theta$ and smaller values of $\lambda$ generally improve model accuracy, whereas smaller $\Theta$ and larger $\lambda$ enforce stricter stability. In practice, we initialize $\Theta$ to the feature dimension and set $\lambda$ to a small value (e.g., $0.001$). We then verify whether this choice satisfies constraint~(\ref{eq:kappa_constraint}) in Theorem~\ref{th:dp_no_clipping}. A larger $\Theta$ increases the approximation interval, which in turn raises the polynomial approximation error for a fixed degree. Conversely, if $\lambda$ is too small, the RHS of~(\ref{eq:kappa_constraint}) increases and may violate the constraint. If the constraint is not satisfied, we iteratively decrease $\Theta$ and increase $\lambda$ until the inequality holds. This adjustment process is entirely data-independent and requires no training.

We select the parameter $\kappa$ simultaneously with $\Theta$ and $\lambda$. The parameter $\kappa$ specifies the interval on which the polynomial $P_\kappa$ must approximate the derivative of the barrier function, $1/x$. Although decreasing $\kappa$ increases the LHS and decreases the RHS (by increasing $\mathsf{m}_P$) of~(\ref{eq:kappa_constraint}) and facilitates feasibility, doing so may increase approximation error. To balance these effects, we adopt an adaptive strategy: we initialize $\kappa$ to a relatively large value (e.g., $0.01$), which yields smaller approximation error $(e_B)$, and then progressively decrease $\kappa$ until constraint~(\ref{eq:kappa_constraint}) is satisfied.

With $\kappa$ fixed, we select the corresponding polynomial $P_\kappa$ and compute the true approximation error $e_B$, defined as the maximum approximation error of $P_\kappa$ from $1/x$ over the interval $[\kappa\Theta, \Theta]$. Several methods can be used to construct $P_\kappa$, including Newton-Raphson, Taylor series, or least-squares approximation. The goal is to approximate $1/x$ accurately over $[\kappa\Theta, \Theta]$, keeping $e_B$ small. This $e_B$ error is then used to determine the approximation interval for the sigmoid function, whose formula is defined in the first point in Theorem~\ref{th:dp_no_clipping}. A smaller $e_B$ narrows the required approximation interval for 
$\phi'$, corresponding to
the sigmoid function in logistic regression, which in turn reduces the approximation error $e_f$ for a given polynomial degree. Additionally, the polynomial should be monotone decreasing over the interval $[\Theta - R^2, \kappa\Theta]$, with the left tail kept as low as possible. This ensures that the maximum value, denoted by $\mathsf{M}_P$, over this interval does not become excessively large, which allows a larger learning rate while maintaining stability. The Newton-Raphson algorithm can achieve very high accuracy but often requires a higher polynomial degree, whereas Taylor series approximations can work with low-degree polynomials but sacrifice the accuracy. The choice of method can be made based on the desired trade-off between computational efficiency and precision. For simplicity and sufficient precision, we adopt a least-squares approach. In our experiments, a polynomial of degree 4 obtained via the least-squares method was adequate across all datasets. Using $P_\kappa$, we compute the maximum and minimum values attained over the interval $[\Theta - R^2, \kappa\Theta]$, denoted by $\mathsf{M}_P$ and $\mathsf{m}_P$, respectively. These quantities are used to verify whether the current learning rate satisfies condition (\ref{eq:eta_rest}) in Theorem~\ref{th:dp_no_clipping}. If the condition is not satisfied, the learning rate is reduced, and the procedure restarts from the step of computing the approximation interval. In our experiments, the initially chosen learning rates were typically sufficient to satisfy this condition, so few adjustments were required.

Finally, we select the polynomial degree for the sigmoid approximation. For each candidate degree, we verify three conditions: (i) the resulting approximation error $e_f$ does not exceed the preset tolerance $E_f$; (ii) the stability parameter $\alpha = 2\eta_{\ref{fig:modified_grad_desc}}\lambda\mathsf{m}_P$ satisfies $\alpha \leq 1$; and (iii) the constraint (\ref{eq:kappa_constraint})
holds. If any of these conditions are violated, we return to the barrier-parameter selection step-adjusting $\kappa, \Theta, \lambda$ as necessary, and repeat the procedure.

These three parameters can be adjusted in combination: increasing 
$\lambda$, or decreasing $\kappa$ and $\Theta$ pushes the system closer to satisfying the constraint (\ref{eq:kappa_constraint}). In practice, when the LHS and RHS of (\ref{eq:kappa_constraint}) are already close (e.g., within a ratio of 100), it is generally sufficient to keep $\lambda$ and $\Theta$ fixed and adjust only $\kappa$, which simplifies the tuning procedure.

This process yields a complete set of polynomial approximations and hyperparameters that can be fixed \emph{a priori}. As a result, it enables stable, non-interactive training under fully homomorphic encryption while satisfying all theoretical constraints.

\newpage

\section{Additional Experimental Results}

\begin{table}[th]
\caption{Accuracy and AUC for different models over 100 runs  across four (4) datasets under a privacy budget of ($\varepsilon=1$, $\delta=10^{-5}$). Results are reported as mean $\pm$ standard deviation. 
Model 1: standard DP-GD; 
Model 2: augmented no-clipping DP-GD, where the sigmoid and barrier functions are respectively replaced by a 7th or 9th-degree polynomial and a 4th-degree polynomial; 
Model 3: DP model using output perturbation.}
\label{tb:no-clipping_nofhe}
\centering
\begin{tabular}{llcc}
\toprule
\textbf{Dataset} & \textbf{Model} & \textbf{Accuracy} & \textbf{AUC} \\
\midrule
\multirow{3}{*}{mnist} & model1\_dpgd        & 0.9579 $\pm$ 0.0004 & 0.9898 $\pm$ 0.0001 \\
                        & model2\_noclipping  & 0.9549 $\pm$ 0.0038 & 0.9890 $\pm$ 0.0010 \\
                        & model3\_output\_GD  & 0.7204 $\pm$ 0.0937 & 0.8687 $\pm$ 0.0716 \\
\midrule
\multirow{3}{*}{adult} & model1\_dpgd        & 0.7947 $\pm$ 0.0001 & 0.8692 $\pm$ 0.0002 \\
                        & model2\_noclipping  & 0.8050 $\pm$ 0.0015 & 0.8750 $\pm$ 0.0014 \\
                        & model3\_output\_GD  & 0.7660 $\pm$ 0.0384 & 0.8191 $\pm$ 0.0222 \\
\midrule
\multirow{3}{*}{compas} & model1\_dpgd       & 0.6486 $\pm$ 0.0028 & 0.7034 $\pm$ 0.0015 \\
                         & model2\_noclipping & 0.6325 $\pm$ 0.0150 & 0.6929 $\pm$ 0.0121 \\
                         & model3\_output\_GD & 0.5126 $\pm$ 0.0418 & 0.5490 $\pm$ 0.0741 \\
\midrule
\multirow{3}{*}{credit} & model1\_dpgd       & 0.7795 $\pm$ 0.0000 & 0.6411 $\pm$ 0.0003 \\
                         & model2\_noclipping & 0.7828 $\pm$ 0.0016 & 0.6501 $\pm$ 0.0022 \\
                         & model3\_output\_GD & 0.7657 $\pm$ 0.0623 & 0.6012 $\pm$ 0.0355 \\
\bottomrule
\end{tabular}
\end{table}

\begin{table}[!htbp]
\centering
\captionof{table}{Multiplicative depth cost per iteration for the DP training approach with and without gradient clipping and for Output-GD, which runs no-DP under FHE}
\label{tab:depth-cost_analysis}
\begin{tabular}{l c c c}
\toprule
\textbf{Computational Step} 
& \textbf{DP-GD (w/ Clip)} 
& \textbf{DP-GD (No Clip)} 
& \textbf{Output-GD} \\
\midrule
\multicolumn{4}{l}{\textit{Path A: Gradient Calculation (Per-sample)}} \\
\quad $\vec{z} = \langle \vec{w}, \vec{x}\rangle$ (Inner Product) & +1 & +1 & +1 \\
\quad Sigmoid Poly & +4 & +4 & +4 \\
\quad Gradient $((y_{pred}-y)*\vec{x})$ & +2 & +2 & +2 \\
\quad \textit{--- Per-Sample Clipping ---} & & & \\
\quad \quad $||\text{grad}_i||^2$ (Norm) & +1 & \text{---} & \text{---} \\
\quad \quad $\sqrt{\text{norm}}$ (Sqrt Poly) & +4 & \text{---} & \text{---} \\
\quad \quad $\text{max} \{C, ||\text{grad}_i||\}$ & +6 & \text{---} & \text{---} \\
\quad \quad Inverse $\sqrt{\text{norm}}$ (using TS) & +3 & \text{---} & \text{---} \\
\quad \quad Scaling Logic ($\text{grad}_i$*scal) & +1 & \text{---} & \text{---} \\
\quad Average Gradient & +1 & +1 & +1 \\
\midrule
\multicolumn{4}{l}{\textit{Path B: Barrier Calculation (Parallel to A)}} \\
\quad $||\vec{w}||^2$ (Squared Norm) & \text{---} & +1 & \text{---} \\
\quad Inverse Term (using LS) & \text{---} & +3 & \text{---} \\
\quad Barrier Term & \text{---} & +2 & \text{---} \\
\midrule
\quad Critical Path Depth & 23 & $\max(8, 6)=8$ & 8 \\
\quad Weight Update (grad * lr) & +1 & +1 & +1 \\
\midrule
\textbf{Total Cost per Iteration} & \textbf{24} & \textbf{9} & \textbf{9} \\
\bottomrule
\end{tabular}
\end{table}

\clearpage
\section{Protocols}
\label{sec:protocols}
\begin{algorithm}[!htbp]
\caption{Traditional DP-(S)GD Protocol using FHE}
\label{fig:trad_DP-GD_fhe}
\begin{algorithmic}
\STATE {\bfseries Input:} Private data $(X, y)$, learning rate $\eta$, epochs $T$, clipping norm $C = 1$, DP noise vectors $\{z_t\}_{t=1}^T$, sigmoid approximation function $P_{\text{sigmoid}}$, BatchSize $n$\\
\STATE \textbf{// Client Side: Key Generation}
\STATE $(\texttt{pk}, \texttt{sk}) \gets \texttt{HE.KeyGen()}$

\STATE \textbf{// Client Side: Data Encryption}
\STATE $\texttt{Enc\_X} \gets \texttt{HE.Encrypt}(\texttt{pk}, X)$
\STATE $\texttt{Enc\_y} \gets \texttt{HE.Encrypt}(\texttt{pk}, y)$
\STATE $\texttt{Enc\_z} \gets \texttt{HE.Encrypt}(\texttt{pk}, \{z_t\}_{t=1}^T)$
\STATE \texttt{Send } $(\texttt{Enc\_X}, \texttt{Enc\_y}, \texttt{Enc\_z})$ \texttt{ to Server}
\STATE \textbf{// Server Side: Encrypted Training}
\STATE $\texttt{Enc\_w} \gets \texttt{HE.Encrypt}(\texttt{pk}, 0^d)$

\FOR{$t = 1$ to $T$}
    \STATE $\texttt{Enc\_grad\_sum} \gets \texttt{HE.Encrypt}(\texttt{pk}, 0^d)$\\
    \STATE $I_t \gets \text{SampleRandomIndices}(N, n)$ \\
    \FOR{$i$ in $I_t$}
        \STATE $\texttt{Enc\_logit}_i \gets \langle \texttt{Enc\_X}[i], \texttt{Enc\_w} \rangle$
        \STATE $\texttt{Enc\_pred}_i \gets P_{\text{sigmoid}}(\texttt{Enc\_logit}_i)$
        \STATE $\texttt{Enc\_error}_i \gets \texttt{Enc\_pred}_i - \texttt{Enc\_y}[i]$
        \STATE $\texttt{Enc\_grad}_i \gets \texttt{Enc\_X}[i] \cdot \texttt{Enc\_error}_i$
        \STATE $\texttt{Enc\_grad}_i \gets P_{\text{clip}}(\texttt{Enc\_grad}_i, C)$
        \STATE $\texttt{Enc\_grad\_sum} \gets \texttt{Enc\_grad\_sum} + \texttt{Enc\_grad}_i$
    \ENDFOR
    \STATE $\texttt{Enc\_grad} \gets \frac{1}{n} \cdot \texttt{Enc\_grad\_sum} + \texttt{Enc\_z}[t]$
    \STATE $\texttt{Enc\_w} \gets \texttt{Enc\_w} - \eta \cdot \texttt{Enc\_grad}$
\ENDFOR
\STATE \textbf{// Server sends } $\texttt{Enc\_w}$ \texttt{ to Client}

\STATE \textbf{// Client Side: Decryption}
\STATE $w \gets \texttt{HE.Decrypt}(\texttt{sk}, \texttt{Enc\_w})$

\STATE {\bfseries Output:} Trained model weights $w$
\end{algorithmic}
\end{algorithm}

\clearpage
\begin{algorithm}[!htbp]
\caption{Proposed non-clipping DP-(S)GD protocol}
\label{fig:modified_DP-GD_fhe}
\begin{algorithmic}
\STATE {\bfseries Input:} Private data $(X, y)$, learning rate $\eta$, epochs $T$, DP noise vectors $\{z_t\}_{t=1}^T$, sigmoid approximation function $P_{\text{sigmoid}}$;  barrier function's parameters: $\lambda$ and $\Theta$, inverse approximation function $P_{\text{rep}}$, BatchSize $n$ \\
\textbf{Output:} Trained model weights $w$
\STATE \textbf{// Client Side: Key Generation}
\STATE $(\texttt{pk}, \texttt{sk}) \gets \texttt{HE.KeyGen()}$

\STATE \textbf{// Client Side: Data Encryption}
\STATE $\texttt{Enc\_X} \gets \texttt{HE.Encrypt}(\texttt{pk}, X)$
\STATE $\texttt{Enc\_y} \gets \texttt{HE.Encrypt}(\texttt{pk}, y)$
\STATE $\texttt{Enc\_z} \gets \texttt{HE.Encrypt}(\texttt{pk}, \{z_t\}_{t=1}^T)$
\STATE \texttt{Send } $(\texttt{Enc\_X}, \texttt{Enc\_y}, \texttt{Enc\_z})$ \texttt{ to Server}

\STATE \textbf{// Server Side: Encrypted Training}
\STATE $\texttt{Enc\_w} \gets \texttt{HE.Encrypt}(\texttt{pk}, 0^d)$

\FOR{$t = 1$ to $T$}
    \STATE $\texttt{Enc\_grad\_sum} \gets \texttt{HE.Encrypt}(\texttt{pk}, 0^d)$\\
    \STATE $I_t \gets \text{SampleRandomIndices}(N, n)$ \\
    \FOR{$i$ in $I_t$}
        \STATE $\texttt{Enc\_logit}_i \gets \langle \texttt{Enc\_X}[i], \texttt{Enc\_w} \rangle$
        \STATE $\texttt{Enc\_pred}_i \gets P_{\text{sigmoid}}(\texttt{Enc\_logit}_i)$
        \STATE $\texttt{Enc\_error}_i \gets \texttt{Enc\_pred}_i - \texttt{Enc\_y}[i]$
        \STATE $\texttt{Enc\_grad}_i \gets \texttt{Enc\_X}[i] \cdot \texttt{Enc\_error}_i$
        \STATE $\texttt{Enc\_grad\_sum} \gets \texttt{Enc\_grad\_sum} + \texttt{Enc\_grad}_i$
    \ENDFOR
    \STATE  $\texttt{Enc\_barrier} \gets P_{rep}(\Theta-\| \texttt{Enc\_w}\|^2)$
    \STATE $\texttt{Enc\_grad} \gets \frac{1}{n} \cdot \texttt{Enc\_grad\_sum}+2\cdot \lambda \cdot \texttt{Enc\_w}\cdot \texttt{Enc\_barrier}+\texttt{Enc\_z}[t]$
    \STATE $\texttt{Enc\_w} \gets \texttt{Enc\_w} - \eta \cdot \texttt{Enc\_grad}$
\ENDFOR

\STATE \textbf{// Server sends } $\texttt{Enc\_w}$ \texttt{ to Client}

\STATE \textbf{// Client Side: Decryption}
\STATE $w \gets \texttt{HE.Decrypt}(\texttt{sk}, \texttt{Enc\_w})$

\STATE {\bfseries Output:} $w$
\end{algorithmic}
\end{algorithm}


\end{document}